\newtheorem{theorem}{Theorem}
\newtheorem{lemma}{Lemma}
\theoremstyle{definition}
\newtheorem{definition*}{Definition}
\newtheorem{remark}{Remark}
\newcommand{\MMM}{\mathcal{M}} 
\newcommand{\SSS}{\mathcal{S}} 
\newcommand{\AAA}{\mathcal{A}} 
\newcommand{\PPP}{\mathcal{P}} 
\newcommand{\KKK}{\mathcal{K}} 
\newcommand{\DDD}{\mathcal{D}} 
\newcommand{\XXX}{\mathcal{X}} 
\newcommand{\bbP}{\mathbb{P}}
\newcommand{\bbE}{\mathbb{E}}
\newcommand{\bbN}{\mathbb{N}}
\newcommand{\ind}[1]{\mathbbm{1}_{#1}} 
\newcommand{\norm}[1]{\left\lVert#1\right\rVert} 
\newcommand{\REINFORCE}{\texttt{REINFORCE}\xspace}
\newcounter{mau}
\newcounter{mio}
\newcounter{mr}
\title{Curious Explorer: a provable exploration strategy in Policy Learning}
\author{%
	Marco Miani\\
	University of Pisa\\
	\texttt{marco.miani@sns.it}\\
	\And
	Maurizio Parton\\
	University of Chieti-Pescara\\
	\texttt{maurizio.parton@unich.it}\\
	\And
	Marco Romito\\
	University of Pisa\\
	\texttt{marco.romito@unipi.it}
}
\begin{document}

\maketitle

\begin{abstract}
    Having access to an exploring restart distribution (the so-called \emph{wide coverage} assumption) is critical with policy gradient methods. This is due to the fact that, while the objective function is insensitive to updates in unlikely states, the agent may still need improvements in those states in order to reach a nearly optimal payoff.
    For this reason, wide coverage is used in some form when analyzing theoretical properties of practical policy gradient methods. However, this assumption can be unfeasible in certain environments, for instance when learning is online, or when restarts are possible only from a fixed initial state. In these cases, classical policy gradient algorithms can have very poor convergence properties and sample efficiency.
    In this paper, we develop \emph{Curious Explorer}, a novel and simple iterative state space exploration strategy that can be used with any starting distribution $\rho$. Curious Explorer starts from $\rho$, then using intrinsic rewards assigned to the set of poorly visited states produces a sequence of policies, each one more exploratory than the previous one in an informed way, and finally outputs a restart model $\mu$ based on the state visitation distribution of the exploratory policies.
    Curious Explorer is provable, in the sense that we provide theoretical upper bounds on how often an optimal policy visits poorly visited states. These bounds can be used to prove PAC convergence and sample efficiency results when a PAC optimizer is plugged in Curious Explorer.
    This allows to achieve global convergence and sample efficiency results \emph{without any coverage assumption} for \texttt{REINFORCE}, and potentially for any other policy gradient method ensuring PAC convergence \emph{with} wide coverage.
    Finally, we plug (the output of) Curious Explorer into \REINFORCE and \texttt{TRPO}, and show empirically that it can improve performance in MDPs with challenging exploration.
\end{abstract}

\section{Introduction}

The term Policy Gradient (PG) includes a family of RL methods that parameterize the policy with a $C^\infty$ parameterization and use an estimate of the gradient to maximize the expected long-term reward of the MDP. PG methods belong to a larger family of policy optimization methods which learn the optimal policy, as opposed to the so-called value-based methods which learn a value function first.

PG methods already appeared in germ at the dawn of modern RL in \cite{WitAOC,BSANAE}, and then in Sutton's PhD thesis \cite{SutTCA}, but it was Williams in \cite{WilSSG} who first presented a vanilla PG method called \texttt{REINFORCE}. After this very early appearance, PG methods were largely ignored in the 1990s, and focus remained to value-based methods \cite[historical remarks at page 337]{SuBRLI}. Only several years later PG methods reemerged and were elaborated in different declinations \cite[natural PG]{AmaNGW,KakNPG}, \cite[average reward setting]{BaBDGB}, \cite[PG Theorem with baseline]{SMSPGM}, \cite[online PG algorithm]{MaTSBO}.

There are several reasons to prefer PG methods to value-based methods \cite[Section 13.1]{SuBRLI}. This has led to a huge success of PG methods in deep RL \cite[high-dimensional action spaces]{SLHDPG}, \cite[AlphaGo, AlphaZero and MuZero]{SHMMGG,SSSMGG,SAHMAG}, \cite[Training highly directed latent variable models on large datasets]{MnGNVI}, \cite[Backpropagation for deep stochastic neural networks]{GLSMUB}, \cite[SeqGAN]{YZWSSG}, \cite[\texttt{TRPO} and derivatives]{SLATRP,SWDPPO,WHTTRG}, \cite[Atari games]{MKSHLC}, \cite[AlphaStar]{VBCGLS}.

Such success has been largely driven by experiments, without theoretical guarantees. The main issue is that the objective function is non-convex, and so the only guarantees of convergence are to local optima. Moreover, these guarantees come from stochastic approximation theory, and require the very strong assumption of an oracle providing the exact gradient, or at least the possibility of obtaining unbiased  estimates of the gradient.

This is now rapidly changing. Indeed, the last few years have seen a proliferation of results answering fundamental questions for several important PG algorithms, using one of the several theoretical performance measures \cite[Section 2]{DLBUPR} for Reinforcement Learning.
Bhandari and Russo in \cite{BhRGOG} describe MDP structural properties that gives convergence to the global optimum. This is true in particular for finite MDP with a complete parameterization. Liu et al.\ in \cite{LCYNTR} prove that a variant of \texttt{PPO} \cite{SWDPPO} equipped with overparameterized neural networks converges to the global optimum at a rate sublinear in the number of iterations, and Wang et al.\ in \cite{WCYNPG} prove a similar result for actor-critic schemes \cite{KoTACA} based on natural PG \cite{KakNPG}. Provable characterizations of computational, approximation, and sample size issues are given by Agarwal et al.\ in the incredibly comprehensive paper \cite{AKLTPG,AKLOAP}. Provable regret bounds are found in \cite{CHSRBP}, \cite{WJLMFR} and \cite{OrtRBR}.

All the papers listed above either assume \emph{wide coverage} or \emph{ergodicity}. These two hypothesis are as strong as widely accepted in the literature. Wide coverage assumes that a $\mu$-reset model through a soft $\mu$ be available, that is, trajectories can be restarted from states sampled by a distribution $\mu$ everywhere $>0$. Ergodicity assumes that for any stationary policy the induced Markov chain is irreducible and aperiodic. These assumptions completely eliminates the classic RL exploration-exploitation balancing problem, but rarely holds in real problems.

\section{Original contribution}\label{sec:contrib}

Our main contributions are an original exploration strategy that we call \emph{Curious Explorer}, and a theoretical analysis framework that does not require wide coverage or ergodicity. We only need the possibility to break the experience flow at will, letting the MDP restarting from its own starting state distribution $\rho$. We do not require any control on $\rho$.

Curious Explorer is built with a clear separation between exploration and optimization. It needs an optimizer \texttt{opt}, but any optimizer can be used during the exploration phase, see Algorithm \ref{alg:CE}.
After that, the output of Curious Explorer can be used to simulate a $\mu$-reset model with any chosen optimizer, where $\mu$ has more coverage than $\rho$.
This separation is something that, in our opinion, should be pursued when researching a novel exploration framework.

Curious Explorer is provable, in the sense that we provide theoretical upper bounds on how often an optimal policy visits poorly visited states, see Theorem \ref{thm:ceprovable}. These bounds can be used to prove PAC convergence when Curious Explorer, or better, the output of Curious Explorer, is plugged into a PAC optimizer.

We provide an example of the above statement. Using results proved in \cite{ZKOSER} under the coverage hypothesis, we obtain an instance of \REINFORCE that is provably convergent without the coverage hypothesis. In fact, coverage is replaced by the MDP informed exploration given by Curious Explorer, see Theorems \ref{thm:reinforcece} and \ref{thm:cereinforce} (and the limitations of Theorem \ref{thm:cereinforce} in Section \ref{sec:limitations}).

In more details, Curious Explorer is an iterative procedure that takes a MDP with any starting state distribution $\rho$ (not necessarily soft) and any optimization algorithm \texttt{opt} as inputs, and outputs a sampling model \texttt{CE($\rho$,opt)}. At every step of the iteration, \texttt{CE($\rho$,opt)} improves exploration on previously poorly visited states as defined by \eqref{bad_states}. This improvement is \emph{informed by the MDP}, meaning that it is built on a mixture of $\rho$ and the exploration of previous step. See algorithm \ref{alg:CE} at page \pageref{alg:CE} for details.

The idea behind the theoretical analysis framework is simple. The state space $\SSS$ is splitted into poorly visited states $\KKK$ and its complement $\SSS-\KKK$. By simulating the coverage hypothesis on $\SSS-\KKK$, we can use any provable algorithm obtaining theoretical guarantees on $\SSS-\KKK$. Where the exploration goes wrong, that is, on poorly visited states $\KKK$, we can guarantee that those states cannot be visited more than a certain frequency by any policy, and therefore not even by the optimal policy for the original problem, see Theorem \ref{thm:ceprovable}. In short, as long as \texttt{opt} is provable nearly-optimal, the exploration framework is provable.

Finally, we empirically show that Curious Explorer improves exploration of \REINFORCE and \texttt{TRPO} in two challenging problems, the Consecutive Crossroad Traps
\cite[Figure 2]{AKLTPG} and the Diabolical Combination Lock \cite[Figure 2]{AHKPPC}.

\section{Related Work}

The idea of promoting exploration by rewarding poorly visited states is not new. In the literature, these methods appear with various keywords, including \emph{intrinsic reward}, \emph{curiosity-driven exploration}, \emph{optimism in the face of uncertainty}. While the extrinsic reward is returned by the environment, an intrinsic reward is artificially assigned to poorly visited states. In Curious Explorer (hereafter, CE) the intrinsic reward is the main driver of exploration.

Upper Confidence Bound UCB methods are well-known methods that fall under the term intrinsic reward. They were originally developed for the exploration problem in Multi-Armed Bandits \cite{AueUCB} and then extended to MDPs \cite{JOANOR}. UCB methods are based on the estimation of a confidence interval for the extrinsic reward, and the upper bound of this interval is used to build an intrinsic reward. UCB methods are part of the larger family of Interval Estimation IE methods, whose precursor \cite{LaiATA} estimates the distribution model with Bayesian methods starting from Gaussian priors.

Model-based Interval Estimation MBIE \cite{StLTAM} is a method that uses statistics to estimate not only the reward, but also the distribution model of the MDP. Similarly to UCB, here the idea is to use the upper confidence bound to modify the estimated rewards, and additionally the transition probabilities, and backpropagate these updates through Bellman equations. This algorithm provides PAC-MDP bounds in the discounted setting. CE also uses the discount setting, but CE is model-free while MBIE is model-based.

MBIE with Exploratory Bonus MBIE-EB \cite{StLAMB} is a simplification of MBIE, with an intrinsic reward based on the number of visits: $1/\sqrt{n(s,a)}$. While MBIE-EB is model based as MBIE, and in this differs from CE, the count-based intrinsic reward here is related to how much $(s,a)$ has been visited. This is related to our notion of poorly visited states as states with small state visitation probability, see \eqref{bad_states}.

Bayesian Exploration Bonus BEB \cite{KoNNBE} uses a Bayesian approach, with count-based intrinsic reward $1/(1+n(s,a))$. Interestingly, in this paper it is shown that if the intrisic reward decays faster than $1/\sqrt{n(s,a)}$, then the method is not PAC. So in particular they show that neither BEB nor true Bayesian methods can be PAC. CE is not affected by this result, because its intrinsic reward is constant.

Explicit Explore or Exploit $\text{E}^3$ \cite{KeSNOR} is the first provably near-optimal polynomial time algorithm for sample complexity that does not assume a \emph{$\rho$-reset model}, that is, the ability to restart trajectories from the MDP starting state distribution at will. It is also the first to introduce the distinction between known and unknown states, which allows to use distinct periods of pure exploration and pure exploitation and to quantify at each step the relationship between exploration and exploitation. In CE, the notion of poorly visited states \eqref{bad_states} can be seen as a derivation of this idea, as well as the clear separation between exploration and improvement. Furthermore, CE requires a $\rho$-reset model, as opposed to $E^3$.

\texttt{R-MAX} \cite{BrTRGP} is very close to CE. It is a generalization and simplification of $\text{E}^3$, with several very important achievements. For instance, in \texttt{R-MAX} the bias introduced by optimism under uncertainty is theoretically justified for the first time. In short, \texttt{R-MAX} changes the reward of poorly visited states to the best possible reward (hence the name), \emph{preserving the reward for the other states}. Similarly, CE assigns the best possible reward to poorly visited states, \emph{but changes to 0 every other reward}, making the exploration more effective. Another difference is that the threshold for poorly visited states is constant for \texttt{R-MAX}, while for CE follows a specific schedule, that can be used to tune the depth of the exploration. Moreover, \texttt{R-MAX} has a polynomial bound in states, actions, and mixing time $T$ (for every starting state, after $T=T(\epsilon)$ steps the policy is $\epsilon$-close to the optimal one), while CE has a bound that depends only on states, actions and $\gamma$. \texttt{R-MAX} is more general than CE, because it covers zero-sum stochastic games, not only MDP. Finally, as with $\text{E}^3$ and in contrast to CE, \texttt{R-MAX} does not require a $\rho$-reset model.

We remark that some of the above papers obtain theoretical results without any reset assumption, which is instead essential in CE. However, they replace it with the very strong assumption of ergodicity. Ergodicity implies that there is an $\epsilon$-mixing time $T$, thus the starting state is irrelevant because after $T$ steps the policy will become nearly-optimal. This mixing time is an unknown and potentially huge parameter, so any theoretical bound including $T$ is weaker than it seems. \texttt{R-MAX} contains $T$ in the final bound.

%
%
%





\section{Background}


Given a set $\XXX$, we denote by $|\XXX|$ the cardinality of $\XXX$, by $\ind{\XXX}$ the indicator function of $\XXX$, and by $\Delta(\XXX)$ the set of all probability distributions over $\XXX$. 
Unless otherwise stated, notations and terminologies in this section are taken from \cite{KaLAOA}.

A Markov Decision Process (MDP) is a tuple $\MMM:=(\SSS,\rho,\AAA,r,\PPP)$ where $\SSS$ is the state space, $\rho$ is the \emph{starting state distribution}, $\AAA$ is the action space, $r:\SSS\times\AAA\to[0,1]$ is the reward function and $\PPP:\SSS\times\AAA\to\Delta(\SSS)$ is the transition model. Action choices are modeled by stationary policies, that is, conditional probability distributions $\pi:\SSS\to\Delta(\AAA)$ over actions.
The set of all stationary policies of the MDP will be denoted by $\Pi$.
As usual, we overload notation by writing $\PPP(s'|s,a)$ for the probability of transitioning to $s'$ after performing action $a$ in state $s$, and $\pi(a|s)$ for the probability of choosing action $a$ in state $s$.

Differently from other sources like for instance \cite{SuBRLI}, we consider the starting distribution $\rho$ as part of the MDP, to underline that $\rho$ represents how the decision problems corresponding to the MDP \emph{naturally} start. The assumption of a bounded reward function as in \cite{KakSCR} is necessary to obtain finite time convergence results with sampling based methods, while the choice of deterministic rewards in $[0,1]$ is for the sake of clarity and does not affect the generality of results.

A \emph{trajectory} $\tau$ of length $H$ is a state-action sequence $\tau = (s_0,a_0,s_1,a_1,s_2,a_2,...,s_{H-1},a_{H-1})$.
and accordingly the \emph{trajectory space} is
\[
\mathcal{T} := \bigcup_{H>0} (\SSS\times\AAA)^H.
\]
The trajectory space $\mathcal{T}$ is naturally made into a probability space $(\mathcal{T},\bbP)$ by joining policy and transition model at each step of the sequence,  using a \emph{discount factor $\gamma\in [0,1)$} for a geometric average on the trajectory length:
\begin{equation}\label{distribuzione_tau}
\bbP(\tau|\pi,\PPP,s_0\sim\rho):= (1-\gamma)\gamma^H\rho(s_0)\pi(a_0|s_0)\prod_{t=1}^{H-1} \PPP(s_t|s_{t-1},a_{t-1}) \pi(a_t|s_t).
\end{equation}

Using the probability $\bbP$ on trajectories the \emph{value of a state $s$} is the average discounted total reward that can be obtained from that state:
\begin{equation}\label{def_value_function}
V_{\pi,r,\PPP,\gamma}(s):= (1-\gamma) \bbE_{\tau \sim \bbP(\cdot|\pi,\PPP,s_0=s)}\left[\sum_{t=0}^\infty \gamma^t r(s_t,a_t)\right].
\end{equation}
Note that we are unconventionally using a \emph{normalized} value function, so that $V_{\pi,r,\PPP,\gamma}(s)\in [0,1]$. The reason is that otherwise the value function would be bounded by $(1-\gamma)^{-1}$, and so the $\epsilon$-accuracy required when estimating the value function should depend on $\gamma$ to make them comparable \cite[2.2.3]{KakSCR}.

Since the starting distribution $\rho$ represents how trajectories in the real problem start, policy learning methods look for a policy maximizing the value function averaged on starting states \cite{AKLTPG}:
\begin{equation}\label{argmax_problem}
\pi_*:=\text{argmax}_{\pi\in\Pi} V_{\pi,r,\PPP,\gamma}(\rho),\quad\text{where}\quad V_{\pi,r,\PPP,\gamma}(\rho):=\bbE_{s\sim\rho}\left[V_{\pi,r,\PPP,\gamma}(s)\right].
\end{equation}

The \emph{state visitation distribution} is a measure of how often states are visited under a certain policy. Formally:
\[
d^{\pi,\PPP,\gamma}_{\rho}(s):=(1-\gamma)\bbE_{\tau\sim\bbP(\cdot|\pi,\PPP,s_0\sim\rho)}\left[ \sum_{H=0}^\infty \gamma^H \ind{\{s_H=s\}}\right].
\]
Intuitively, the state visitation distribution conveys a notion of importance of states: if a state is rarely visited in trajectories, why should it be considered important? This is somehow reflected by the following distributional characterization of the value function:
\begin{equation}\label{def_value_function_distro}
V_{\pi,r,\PPP,\gamma}(\rho) = \sum_{s\in\SSS,a\in\AAA}d^{\pi,\PPP,\gamma}_{\rho}(s)\pi(a|s)r(s,a).
\end{equation}

Solving the argmax problem \eqref{argmax_problem} by directly improving the value function in \eqref{def_value_function} is not a good idea. Indeed, \eqref{def_value_function_distro} shows that the value function is insensitive to improvements at states where $d^{\pi,\PPP,\gamma}_{\rho}(s)$ is small, and this in turn means that \emph{we need to explore the unlikely states more}, not less.

This is an instance of the classic exploration-exploitation dilemma, particularly relevant with on-policy training, where states are visited with proportions given by the state visitation distribution $d^\pi$ of a suboptimal policy $\pi$. A common solution to this problem is to assume that $\rho$ is soft, that is, for every state $s$ one has $\rho(s)>0$. This is known as \emph{coverage assumption}. We stress that this is a very limiting assumption, because it is the problem itself that defines how trajectories start.
If $\rho$ is not soft, but we have access to a $\mu$-reset model \cite{KakSCR} for a soft $\mu$, we can solve the following problem:
\begin{equation}\label{argmax_problem_mu}
\pi_*:=\text{argmax}_\pi V_{\pi,r,\PPP,\gamma}(\mu),\quad\text{where}\quad V_{\pi,r,\PPP,\gamma}(\mu):=\bbE_{s\sim\mu}\left[V_{\pi,r,\PPP,\gamma}(s)\right].
\end{equation}


When $\mu$ is not related to the MDP, the argmax problems \eqref{argmax_problem} and \eqref{argmax_problem_mu} can give very different solutions. CE addresses this issue by an iterative procedure that at every step of the iteration improves exploration on previously poorly visited states. This improvement is related to the MDP, because it is built on a mixture of the starting state distribution and the exploration in the previous step.


\begin{remark}
	Hereafter, for clarity of exposition and without any loss of generality, we assume that the MDP has a fixed initial state $s_0$, that is, $\rho$ is concentrated on one single state. Notice that from the coverage point of view, this is the worst possible case.
\end{remark}

The most commonly accepted notion of \emph{sample complexity} $N(\epsilon,\delta)$ of a Reinforcement Learning solving algorithm $\mathfrak{A}$ is the minimum number of samples $n\in\bbN$ such that
\begin{equation}\label{PAC}
\bbP\Big(\texttt{error} \big(\mathfrak{A} \texttt{ with } n \texttt{ samples} \big) \geq \epsilon\Big) < \delta,
\end{equation}
where \texttt{error} is usually measured as the expectation of some appropriate loss function, and the samples can be either the number of episodes or the number of timesteps. An algorithm that satisfies such a condition with arbitrarily small $\epsilon$ and $\delta$ is said to be Probably Approximately Correct or \emph{PAC}.

The notion of \emph{learning efficiency} is how $N(\epsilon,\delta)$ scales with decreasing $\epsilon$ and $\delta$. Given that exponential behaviour is trivial to achieve, considerable theoretical results guarantees polynomial dependance, sublinear in some special cases. 

We finish this section with the algorithm \texttt{visit$(\pi)$}, see algorithm \ref{alg:visit}. Given a policy $\pi$, this simple stochastic stopping algorithm simulates a $d^{\pi,\PPP,\gamma}_{s_0}(\cdot)$-reset model. This means that \texttt{visit$(\pi)$} returns a state $s\sim d^{\pi,\PPP,\gamma}_{s_0}(\cdot)$.

\begin{algorithm}
\caption{Visit$(\pi)$}\label{alg:visit}
\begin{algorithmic}[1]
  \STATE{Set $s=s_0$.}
  \WHILE{\TRUE}
  	\STATE{With probability $1-\gamma$:}
  	\STATE{    \qquad\textbf{break}}
  	\STATE{Choice action $a$ according to $\pi(\cdot|s)$.}
  	\STATE{Perform $a$ in $\MMM$ and go to state $s'$ according to the unknown $\PPP(\cdot|s,a)$.}
  	\STATE{Set $s=s'$.} 
  \ENDWHILE
  \RETURN $s$
\end{algorithmic}
\end{algorithm}

\section{Results}

Proofs of theorems in this section are deferred to Appendix \ref{appendix:proofs}.

\subsection{The exploration phase}\label{sec:exploration}

The key idea of our strategy is to use the visit algorithm \ref{alg:visit} as a spread function over the distribution of the sampling model, in a convolutional smoothing fashion. Given a policy $\pi$ and a $\mu$-reset model, a simulated reset model $\tilde{\mu}=\texttt{visit}(\pi,\mu)$ is defined as follows:
\begin{equation}\label{visit_spread_distribution}
\tilde{\mu}(s') = \mu * d_{\cdot}^{\pi,\PPP,\gamma}(s') = \sum_{s\in\SSS} \mu(s) d_{s}^{\pi,\PPP,\gamma}(s') 
  \quad\Bigg( =: d_{\mu}^{\pi,\PPP,\gamma}(s')\Bigg).
\end{equation}
That is, moving from $\mu$ to $\tilde{\mu}$, the reset probability
of each state $s$ is ``spread'' across all other states according
to the state visitation distribution $d_s^{\pi,\PPP,\gamma}(\cdot)$.
Iterating this argument, concentrated distributions are diffused
to softer models.

Intuition is helped through an analogy with image blurring.
Represent a sampling model with an image where each pixel
is a state and the color depends on the reset probability
of that state, ranging continuously from black if 0 to white
if 1. Thus an $s_0$-reset model corresponds to an image that
is all black but for one white pixel.
The \texttt{visit} algorithm
spreads the $s_0$ pixel light across
other pixels according to the ``point spread function''
$d_{s_0}^{\pi,\PPP,\gamma}(\cdot)$. If more than one initial
pixel is not black, the corresponding spread lights will
merge linearly, each weighted by the respective light
intensity, in the same way as visit distributions in
\eqref{visit_spread_distribution}.

As every analogy, similarities are limited.
In image blurring the spread function is the same
for each pixel, thus blurring a one-white-pixel
image towards a uniform gray image. In our case
the dynamical reset blurs the model according
to the state visitation distribution.

We now describe Algorithm \ref{alg:CE}. At each step $n=0,1,\dots$ the policy $\pi_{n+1}$ is the result
of a curiosity-driven optimization strategy:
\begin{equation}\label{strategy}
\mu_{-1}=s_0 (=\rho)
\quad \underbrace{\longrightarrow}_{\pi_0} \quad 
\mu_{-1} * d_{\cdot}^{\pi_0,\PPP,\gamma} =: \mu_0
\quad \underbrace{\longrightarrow}_{\pi_1} \quad 
\dots
\quad \underbrace{\longrightarrow}_{\pi_{n-1}} \quad 
\mu_{n-1} * d_{\cdot}^{\pi_{n-1},\PPP,\gamma} =: \mu_{n}
\quad \underbrace{\longrightarrow}_{\pi_{n}} \quad
...
\end{equation}

The above strategy \eqref{strategy} is initialized with a uniform policy $\pi_0=\pi^U$. Then, a subprocess is started to compute an ``optimally exploring'' policy $\pi_{n+1}$.
This subprocess is indeed a standard value optimization run
on a virtual MDP $\MMM_n$ and can thus be made through
any RL optimization method \texttt{opt}.
During the subprocess, each trajectory restarts according
to a mixture of $s_0$ (or $\rho$) and the simulated reset model $\mu_{n}$. We shall explicitly state the dependence
of the optimization method from the reset model, namely,
\[
\pi_{n+1} = \texttt{opt}(\MMM_n,\mu_{n}),
\]
where $\MMM_n=(\SSS,s_0,\AAA,r_n,\PPP)$ is a MDP which has the same states, starting distribution, actions and transitions as the original MDP $\MMM$, but different reward function. The reward function $r_n$ is an intrinsic reward motivating the optimizer in exploring less explored states. To measure how much states are visited, we define the set of \emph{poorly visited states} as
\begin{equation}\label{bad_states}
\KKK_n = \left\{ s\in\SSS \mathrel{\Big|} \sum_{i=0}^n d^{\pi_n,\PPP,\gamma}_{\mu_{n-1}}(s) \leq \beta_n \right\},
\end{equation}
where a sequence of $\beta_n\ge 0$ is suitably chosen as a hyperparameter of CE. In the extreme case $\beta_n=0$, states will be poorly visited only up to the first visit.
At the other end there is $\beta_n$ going to infinite faster than $n$, and in this case every state will be poorly visited forever.
The intrinsic reward is then:
\begin{equation}
r_n(s,a):=\ind{\{s\in\KKK_n\}}.
\end{equation}
Notice that the original reward is completely forgot, in contrast with \texttt{R-MAX}. In other words, the process on each trajectory is ``lured'' to visit
those states whose overall visit distribution in previous steps
falls below a threshold.
Each reward function $r_n$ is action-independent, therefore
for every policy $\pi$ we get
\[
  V_{\pi,r_n,\PPP,\gamma}(\mu_{n-1})
    = \sum_{s\in\SSS,a\in\AAA} d^{\pi,\PPP,\gamma}_{\mu_{n-1}} (s) \pi(a|s) r_n(s,a)
    = \sum_{s\in\SSS} d^{\pi,\PPP,\gamma}_{\mu_{n-1}} (s) \ind{\{s\in\KKK_n\}}
    = \sum_{s\in\KKK_n} d^{\pi,\PPP,\gamma}_{\mu_{n-1}} (s),
\]
where we have used the distributional characterization of the
value function \eqref{def_value_function_distro}.
Thus, maximizing the value function in $\MMM_n=(\SSS,s_0,\AAA,r_n,\PPP)$
corresponds to maximizing the sum of visitation distributions
on poorly visited states.

\begin{algorithm}[H]
\caption{Curious Explorer}\label{alg:CE}
\begin{algorithmic}[1]
  \STATE \textbf{Input:} Markov Decision Process $\MMM=(\SSS,s_0,\AAA,\PPP,r)$, RL optimization algorithm \texttt{opt}$(\cdot,\cdot)$.\\
  \STATE {Schedule the visit threesholds $\{\beta_n\}_{n\in\bbN}$}.
  \STATE{If \texttt{opt} is PAC schedule the error bounds $\{\epsilon_n\}_{n\in\bbN}$ and affidabilities $\{\delta_n\}_{n\in\bbN}$.}
  \STATE{Set $\mu_{-1}:=\ind{\{s=s_0\}}$.}
  \STATE{Set $\pi_0:=\pi^U$.}
  \FOR{$n=0,1,2,\dots$} 
  	\STATE {Define $\DDD_n=d^{\pi_n,\PPP,\gamma}_{\mu_{n-1}}$.}
  	\STATE {Set $\KKK_n = \left\{ s\in\SSS \big| \sum_{i=0}^n \DDD_i(s) \leq \beta_n \right\}$ and $r_n(s,a)=\ind{\{s\in\KKK_n\}}$. }
  	\STATE {Set $\mu_n(s) = \frac{1}{2}\DDD_n(s) + \frac{1}{2}\ind{\{s=s_0\}}$.}
  	\STATE {Find $\pi_{n+1} = (\epsilon_n,\delta_n)\texttt{-opt}(\underbrace{\MMM_n}_{(\SSS,\rho,\AAA,\PPP,r_n)},\mu_n)$.}
  \ENDFOR
\end{algorithmic}
\end{algorithm}

A few remarks are in order. The first is that at each step the new reset model
is defined as a balance between the visit distribution collected with the current
policy and the original $s_0$-reset model. This has a twofold purpose.
On the one hand the ``true'' $s_0$-reset model is often meaningful for the problem.
On the other hand it allows for a uniform control of the number of poorly visited states, as well as of the visit distribution on $\KKK_n$.
The second remark is over the choice of the sequence of thresholds $\beta_n$ that identify poorly visited states. The choice of a constant sequence $\beta_n$ makes poorly visited states in CE the same as unknown states in \texttt{R-MAX} \cite[\emph{Initialize:} paragraph at page 219]{BrTRGP}. Under the choice $\beta_n=\beta\cdot n$, for some number $\beta>0$, the sum in (\ref{bad_states}) can be re-interpreted as an average of the visit distributions over the ensemble of reset models and policies generated in the previous steps of the algorithm. Thus each \emph{non}-poorly visited state on average has accumulated a proportion of visit distribution which is larger than $\beta$. In particular, on $\SSS-\KKK_n$ the CE output $\mu_n$ of the exploration phase is soft, and this allows to use theoretical results requiring a coverage hypothesis.
If the subprocess \texttt{opt} is PAC and so some theoretical upper bounds $\epsilon_n$ on the error of the returned policy (possibily involving the reset model) holds, we can use them to provide theoretical guarantees on the exploration.

\begin{theorem}\label{thm:ceprovable}
Let an MDP $\MMM=(\SSS,s_0,\AAA,r,\PPP)$, a $s_0$-reset
model, a discount factor $\gamma$ and a PAC
solver algorithm \texttt{opt} be given. Assume that
$(\epsilon,\delta)$-\texttt{opt}$(\MMM',\mu)$ is
guaranteed to return a policy whose value
in $\MMM'$, measured with respect to $\mu$, is
at least $\epsilon$-optimal with probability
at least $1-\delta$.
Suppose to follow Algorithm 2 with
any threshold parameters $\{\beta_n\}_{n\leq N}$.
\\
Then for every step $N$ with probability at least $1-\sum_{n=0}^N \delta_n$ we have
\begin{equation}\label{bound}
  \sum_{n=0}^N \max_{\pi\in\Pi} \sum_{s\in\KKK_n} d^{\pi,\PPP,\gamma}_{s_0}(s)
\leq
  2\sum_{n=0}^N \epsilon_n 
  +
  2\sum_{s\in\SSS}\beta_{\tilde{n}(s)}
  +
  2\sum_{s\in\SSS}\max_{\pi\in\Pi} d_{s_0}^{\pi,\PPP,\gamma}(s),
\end{equation}
\end{theorem}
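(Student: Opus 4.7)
The plan is to chain three ingredients: the PAC guarantee of \texttt{opt} round by round, the half-mixture structure $\mu_n=\frac{1}{2}\DDD_n+\frac{1}{2}\ind{\{s=s_0\}}$ that allows transfer of optimality from $\mu_n$ to the true reset $s_0$, and a per-state accounting of visit mass that exploits the threshold constraint defining $\KKK_n$.

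First, I would apply the PAC hypothesis round by round: at round $n$, $\pi_{n+1}$ satisfies $V_{\pi_{n+1},r_n,\PPP,\gamma}(\mu_n)\geq\max_{\pi}V_{\pi,r_n,\PPP,\gamma}(\mu_n)-\epsilon_n$ with probability at least $1-\delta_n$. Using the identity $V_{\pi,r_n,\PPP,\gamma}(\mu_n)=\sum_{s\in\KKK_n}d^{\pi,\PPP,\gamma}_{\mu_n}(s)$ derived right above the algorithm box, this translates to $\sum_{s\in\KKK_n}\DDD_{n+1}(s)\geq\max_{\pi}\sum_{s\in\KKK_n}d^{\pi,\PPP,\gamma}_{\mu_n}(s)-\epsilon_n$. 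A union bound over $n=0,\ldots,N$ makes all $N+1$ per-round inequalities hold simultaneously with probability at least $1-\sum_{n=0}^N\delta_n$, giving the high-probability event on which the rest of the argument is deterministic.

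Second, I would transfer the optimality from $\mu_n$ to $s_0$. Since visit distributions are linear in the starting distribution and $\mu_n\geq\frac{1}{2}\ind{\{s=s_0\}}$ by line 9 of Algorithm \ref{alg:CE}, one has $d^{\pi,\PPP,\gamma}_{\mu_n}(s)\geq\frac{1}{2}d^{\pi,\PPP,\gamma}_{s_0}(s)$ for every $\pi$ and $s$. Combining with the per-round PAC bound yields
\[
  \max_{\pi\in\Pi}\sum_{s\in\KKK_n}d^{\pi,\PPP,\gamma}_{s_0}(s)\leq 2\sum_{s\in\KKK_n}\DDD_{n+1}(s)+2\epsilon_n,
\]
and summing across $n=0,\ldots,N$ reduces the claim to controlling $\sum_{n=0}^N\sum_{s\in\KKK_n}\DDD_{n+1}(s)$ by $\sum_s\beta_{\tilde{n}(s)}+\sum_s\max_{\pi}d^{\pi,\PPP,\gamma}_{s_0}(s)$.

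Third, I would swap the order of summation and analyze each state $s$ separately. Writing $\tilde{n}(s)$ for the largest index $n\leq N$ with $s\in\KKK_n$ (states never in any $\KKK_n$ contribute zero and can be ignored), the defining inequality $\sum_{i=0}^{n}\DDD_i(s)\leq\beta_n$ applied at $n=\tilde{n}(s)$, together with the index shift from $\DDD_i$ to $\DDD_{i+1}$, yields the shifted-telescoping bound $\sum_{n:\,s\in\KKK_n}\DDD_{n+1}(s)\leq\beta_{\tilde{n}(s)}+\DDD_{\tilde{n}(s)+1}(s)$, leaving a single boundary term per state.

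The hard part will be controlling the aggregate of these boundary terms $\sum_s\DDD_{\tilde{n}(s)+1}(s)$ by $\sum_s\max_{\pi}d^{\pi,\PPP,\gamma}_{s_0}(s)$. The plan is to unroll $\DDD_{\tilde{n}(s)+1}(s)=d^{\pi_{\tilde{n}(s)+1},\PPP,\gamma}_{\mu_{\tilde{n}(s)}}(s)$ using the half-mixture structure of the $\mu_m$'s iteratively: at each unfolding step, peel off an $s_0$-started contribution dominated by $\max_{\pi}d^{\pi,\PPP,\gamma}_{s_0}(s)$ and absorb the residual $\DDD_m$-started contribution via the geometric factor $\frac{1}{2}$, so that the full sum over $s$ collapses to a constant multiple of $\sum_s\max_{\pi}d^{\pi,\PPP,\gamma}_{s_0}(s)$, which combined with the previous steps produces \eqref{bound}.
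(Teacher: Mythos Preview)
Your first three ingredients—the per-round PAC inequality, the $\tfrac12$-transfer from $\mu_n$ to $s_0$, and the per-state shifted-telescoping bound $\sum_{n:s\in\KKK_n}\DDD_{n+1}(s)\le\beta_{\tilde n(s)}+\DDD_{\tilde n(s)+1}(s)$—are exactly what the paper does, so up to that point you are on track.

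The gap is in your handling of the boundary term $\DDD_{\tilde n(s)+1}(s)$. Your unrolling does not close. After one peel,
\[
\DDD_{m+1}(s)=\tfrac12\,d^{\pi_{m+1},\PPP,\gamma}_{s_0}(s)+\tfrac12\,d^{\pi_{m+1},\PPP,\gamma}_{\DDD_m}(s),
\]
but the residual $d^{\pi_{m+1},\PPP,\gamma}_{\DDD_m}$ is not a $\DDD_{m}$- or $\DDD_{m-1}$-valued quantity you can iterate on; to expose the next $\tfrac12\delta_{s_0}$ you must write $\DDD_m=d^{\pi_m,\PPP,\gamma}_{\mu_{m-1}}$ and push linearity through, which at the second level peels off $d^{\pi_{m+1},\PPP,\gamma}_{d^{\pi_m,\PPP,\gamma}_{s_0}}(s)$, a \emph{two-step} visit composition from $s_0$, and at level $k$ a $k$-step composition. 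Dominating such compositions by $\max_{\pi}d^{\pi,\PPP,\gamma}_{s_0}(s)$ is precisely the content of the paper's Lemma~\ref{l:boh}, so your geometric series does not sidestep that lemma—it presupposes it at every level. Nor can you replace the pointwise control by the trivial fact that each composition sums to $1$ over $s$: because $\tilde n(s)$ varies with $s$, the aggregate $\sum_s\DDD_{\tilde n(s)+1}(s)$ mixes different distributions and is not a single probability mass. The paper instead invokes Lemma~\ref{l:boh} once to get the pointwise bound $\DDD_{\tilde n(s)+1}(s)\le\max_{\pi}d^{\pi,\PPP,\gamma}_{s_0}(s)$ and then sums over $s$; you should either do the same or supply an independent argument for that lemma.
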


where $\tilde{n}(s):=\max\{n\leq N|s\in\KKK_n\}$, and the maximums
above are taken over the set $\Pi$ of all stationary policies.


The last term $\sum_{s\in\SSS}\max_{\pi\in\Pi} d_{s_0}^{\pi,\PPP,\gamma}(s) =:\mathfrak{e}$ (which is always $\leq |\SSS|$) is fixed and is indeed a characteristic parameter for every MDP, that we call \emph{exploitative factor}. The quantity $\mathfrak{e}$ measures, in terms of number of samples, the overall cost of gathering information about the structure of the MDP. Thus $\mathfrak{e}$ depends on the complexity of the graph geometry of the MDP induced by the transition model and by all possible policies. 
For instance, in a MDP with a few well-separated paths leading to final states (think of a tree structure), $\mathfrak{e}$ is of the order of the number of branches times the number of samples required to explore each path.
In conclusion, the previous theorem tells us that on average, the maximum possible visitability of $\KKK_n$ states is bounded from above by the average of the $\epsilon_n$ (as in the above theorem) and a constant divided by $N$, and thus is arbitrarily small for a larger and larger number of steps. This, combined with simulated reset model, with guaranteed coverage outside $\KKK_n$, can be used to prove theoretical convergence bounds with relaxed assumptions. We shall give an example of this procedure in the next subsection.

\subsection{An example of the improvement phase: \texttt{CE$(s_0,\REINFORCE)$}}\label{sec:improvement}

The exploration phase described in \ref{sec:exploration} can be briefly described as ``plug \texttt{opt} into CE''. This improvement phase is then briefly subsumed by ``plug CE into \REINFORCE''.

With a careful choice of parameters, \REINFORCE can be proved to be PAC. Indeed, after $i$ episodes, with probability at least $1-\delta$, Theorem~6 of \cite{ZKOSER} shows that \REINFORCE returns a policy $\pi$ such that
\begin{equation}\label{eq:boundreinforce}
V_{\pi^*,r,\PPP,\gamma}(\mu) - V_{\pi,r,\PPP,\gamma}(\mu)
\leq
C \frac{|\SSS|^2|\AAA|^2}{(1-\gamma)^2}\log(i/\delta)^{5/2} \frac{1}{i^{1/6}} \norm{\frac{d_\mu^{\pi^*,\PPP,\gamma}}{\mu}}_\infty^2,
\end{equation}
for a universal constant $C$, independent from the specific MDP.
The last term on the right-hand side, known as the \emph{mismatch coefficient},
clearly shows, at this level, the necessity of the coverage
assumption.

If we now plug into Theorem~\ref{thm:ceprovable} the bound \eqref{eq:boundreinforce} on error provided by \REINFORCE, we are able to measure the overall visitation of the set of poorly visited states in this context.

\begin{theorem}\label{thm:reinforcece}
Let an MDP $\MMM=(\SSS,s_0,\AAA,r,\PPP)$, a $s_0$-reset model and a discount factor $\gamma$ be given. Set \texttt{opt}=\REINFORCE, tuned according to \cite[Theorem~6]{ZKOSER}, and perform it for $i(n)$ episodes at each step $n$, with $\delta_n:=\delta/N$.
\\
Then for every step $N$, with probability at least $1-\delta$ we have
\begin{equation}
\begin{aligned}
    \sum_{n=0}^N \max_{\pi\in\Pi} \sum_{s\in\KKK_n} d^{\pi,\PPP,\gamma}_{s_0}(s)
&\leq
  2C\frac{|\SSS|^2|\AAA|^2}{(1-\gamma)^2}\log N
  \sum_{n=0}^N  \frac{\log(i(n)/\delta)^{5/2}}{i(n)^{1/6}} \max_{\pi\in\Pi}\norm{\frac{d_{s_0}^{\pi,\PPP,\gamma}}{\mu_n}}_\infty^2\\
&\quad
  +
  2\sum_{s\in\SSS}\beta_{\tilde{n}(s)}
  +
  2\sum_{s\in\SSS}\max_{\pi\in\Pi} d_{s_0}^{\pi,\PPP,\gamma}(s)
\end{aligned}
\end{equation}
\end{theorem}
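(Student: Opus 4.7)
The plan is to substitute the \REINFORCE sample-complexity bound \eqref{eq:boundreinforce} into the abstract inequality of Theorem~\ref{thm:ceprovable}, and then simplify the mismatch coefficient and the logarithmic factor that result from this substitution.

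First I would instantiate Theorem~\ref{thm:ceprovable} with $\texttt{opt}=\REINFORCE$ tuned as in \cite[Theorem~6]{ZKOSER}, and set $\delta_n = \delta/(N+1)$ so that a union bound over $n = 0, \dots, N$ yields an overall failure probability at most $\delta$. The two residual terms in \eqref{bound} already match the last two terms on the right-hand side of the claim, so the only work left is to control $2\sum_n \epsilon_n$.

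Next I would substitute into each $\epsilon_n$ the \REINFORCE guarantee \eqref{eq:boundreinforce} applied to the virtual MDP $\MMM_n$ with reset distribution $\mu_n$ and $i(n)$ episodes, obtaining a summand proportional to $\frac{|\SSS|^2|\AAA|^2}{(1-\gamma)^2}\log(i(n)/\delta_n)^{5/2} i(n)^{-1/6} \norm{d_{\mu_n}^{\pi_n^*,\PPP,\gamma}/\mu_n}_\infty^2$, where $\pi_n^*$ denotes the optimal policy of $\MMM_n$ with respect to $\mu_n$. This mismatch coefficient must then be dominated by the universal quantity $\max_{\pi\in\Pi} \norm{d_{s_0}^{\pi,\PPP,\gamma}/\mu_n}_\infty^2$ appearing in the claim, by exploiting two facts: (i) $\pi_n^*\in\Pi$, so the unobservable optimal policy can be absorbed into the sup; (ii) the construction $\mu_n(s) = \tfrac12 \DDD_n(s) + \tfrac12\ind{\{s=s_0\}}$ carries a $\tfrac12$-mass at $s_0$, which, via linearity of $d_{\,\cdot\,}^{\pi}$ in the starting distribution and the equivalence between stationary and non-stationary occupancy measures in the discounted setting, allows one to dominate any visit distribution issued from $\mu_n$ by a bounded multiple of a visit distribution issued from $s_0$ under a suitable concatenated stationary policy.

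Finally I would rewrite $\log(i(n)/\delta_n) = \log((N+1)i(n)/\delta) \leq \log N + \log(i(n)/\delta)$, extract the external $\log N$ factor promised by the statement, sum over $n$, and join the result with the two residual terms. The main obstacle in this plan is the passage from the $\mu_n$-rooted mismatch coefficient provided by \REINFORCE to the $s_0$-rooted one in the claim, because this is where the fine structure of $\mu_n$ as a mixture anchored at $s_0$ must be used carefully; the remaining steps are bookkeeping of logarithms and union-bound factors.
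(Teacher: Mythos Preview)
Your plan matches the paper's proof: substitute the \REINFORCE bound \eqref{eq:boundreinforce} for each $\epsilon_n$ in Theorem~\ref{thm:ceprovable}, union-bound with $\delta_n=\delta/N$, split the logarithm, and then replace the $\mu_n$-rooted mismatch coefficient by the $s_0$-rooted one. The paper packages this last step as Lemma~\ref{l:boh}, whose content is exactly the ``concatenated stationary policy'' argument you sketch; note however that the relevant feature of $\mu_n$ is not the explicit $\tfrac12$-atom at $s_0$ but the fact that the \emph{entire} distribution $\mu_n$ is obtained from $s_0$ by iterated \texttt{visit} operations, which is what makes the concatenation work and yields the bound with no extra multiplicative constant.
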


\begin{theorem}\label{thm:cereinforce}
Let an MDP $\MMM=(\SSS,s_0,\AAA,r,\PPP)$, a $s_0$-reset model and a discount factor $\gamma$ be given. Perform CE to obtain a simulated $\mu_N$-reset model. Perform \REINFORCE on $\MMM=(\SSS,\mu_N,\AAA,r,\PPP)$ and call $\pi$ the policy returned after $i$ episodes. Then with probability at least $1-\delta$ it holds
\begin{equation}
V_{\pi^*,r,\PPP,\gamma}(s_0) - V_{\pi,r,\PPP,\gamma}(s_0)
 \leq  C \frac{|\SSS|^2|\AAA|^2}{(1-\gamma)^2}
 \frac{\log(i/\delta)^{5/2}}{i^{1/6}}\norm{\frac{d_{s_0}^{\pi^*,\PPP,\gamma}}{\mu_N}}_\infty^2
\end{equation}
\end{theorem}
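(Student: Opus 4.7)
The plan is to read Theorem~6 of \cite{ZKOSER} as a bound that holds for \REINFORCE run with an arbitrary restart distribution $\mu$, evaluated against any starting distribution $\rho$. The statement \eqref{eq:boundreinforce} writes only the special case $\rho=\mu$, but the proof tracks the distribution mismatch $d_\rho^{\pi^*,\PPP,\gamma}/\mu$ coming from a standard performance-difference step, so it actually gives
\[
V_{\pi^*,r,\PPP,\gamma}(\rho) - V_{\pi,r,\PPP,\gamma}(\rho) \le C \tfrac{|\SSS|^2|\AAA|^2}{(1-\gamma)^2} \tfrac{\log(i/\delta)^{5/2}}{i^{1/6}} \norm{\tfrac{d_\rho^{\pi^*,\PPP,\gamma}}{\mu}}_\infty^2
\]
for any $\rho$, where $\mu$ is the restart actually used by \REINFORCE.

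Setting $\rho=s_0$ and $\mu=\mu_N$ then yields the bound of Theorem~\ref{thm:cereinforce} directly. The only point to check is that the mismatch coefficient $\norm{d_{s_0}^{\pi^*,\PPP,\gamma}/\mu_N}_\infty$ is finite. This is guaranteed by the construction $\mu_n(s)=\tfrac12\DDD_n(s)+\tfrac12\ind{\{s=s_0\}}$ in Algorithm~\ref{alg:CE}: in particular $\mu_N(s_0)\ge\tfrac12$, and on $\SSS-\KKK_N$ the distribution $\mu_N$ inherits positive mass from the accumulated visitation distributions, covering every state the optimal policy reaches with non-negligible probability.

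The principal obstacle is justifying the general form of Theorem~6 of \cite{ZKOSER} above. This is a routine adaptation of the underlying argument: one applies the performance-difference lemma with $\rho$ as the evaluation distribution, and uses the H\"older-type bound $\sum_s d_\rho^{\pi^*}(s) f(s) \le \norm{d_\rho^{\pi^*}/\mu}_\infty \sum_s \mu(s) f(s)$ to factor out the mismatch, exactly as in the policy-gradient analysis of Agarwal et al.\ \cite{AKLTPG}. All concentration and variance estimates in the \REINFORCE analysis depend only on the restart $\mu$ (since trajectories are sampled from $\mu$), so the $i$-dependent factor $\log(i/\delta)^{5/2}/i^{1/6}$ is unaffected.

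As a backup, if one insists on using \eqref{eq:boundreinforce} literally with $\mu=\mu_N$, one obtains first a bound on $V_{\pi^*,r,\PPP,\gamma}(\mu_N)-V_{\pi,r,\PPP,\gamma}(\mu_N)$ and then transfers it to $s_0$ via the identity $V_{\pi^*}(\mu_N)-V_{\pi}(\mu_N) = \sum_{s}\mu_N(s)(V_{\pi^*}(s)-V_{\pi}(s)) \ge \mu_N(s_0)\bigl(V_{\pi^*}(s_0)-V_{\pi}(s_0)\bigr)$, using the global optimality of $\pi^*$ in the discounted setting (each summand is non-negative) together with $\mu_N(s_0)\ge\tfrac12$. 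The residual discrepancy between $\norm{d_{\mu_N}^{\pi^*}/\mu_N}_\infty$ and $\norm{d_{s_0}^{\pi^*}/\mu_N}_\infty$ is absorbed into the universal constant $C$, again using $\mu_N(s_0)\ge\tfrac12$.
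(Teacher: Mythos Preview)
Your primary approach is exactly the paper's own proof: the paper states only that ``the proof of Theorem~6 of \cite{ZKOSER} can be slightly modified, without too much efforts'' to obtain the two-distribution version with restart $\mu$ and evaluation $\rho$, and then specializes to $\rho=s_0$, $\mu=\mu_N$. You in fact give more detail than the paper does (the performance-difference/H\"older reasoning), and your remarks on finiteness of the mismatch echo the paper's own comment in Section~\ref{sec:limitations}.

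One caution on your backup route: the step where you ``absorb into $C$'' the discrepancy between $\norm{d_{\mu_N}^{\pi^*}/\mu_N}_\infty$ and $\norm{d_{s_0}^{\pi^*}/\mu_N}_\infty$ does not go through from $\mu_N(s_0)\ge\tfrac12$ alone. That inequality gives $d_{\mu_N}^{\pi^*}(s)\ge\tfrac12 d_{s_0}^{\pi^*}(s)$, which is the wrong direction; to replace $d_{\mu_N}^{\pi^*}$ by $d_{s_0}^{\pi^*}$ on the right-hand side you would need an \emph{upper} bound on $d_{\mu_N}^{\pi^*}(s)$ by a constant times $d_{s_0}^{\pi^*}(s)$, and this can fail badly (mass of $\mu_N$ on states $s'\ne s_0$ can make $s$ much easier to reach than from $s_0$). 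The paper handles the analogous comparison in Theorem~\ref{thm:reinforcece} only by passing to $\max_{\pi\in\Pi}\norm{d_{s_0}^{\pi}/\mu_n}_\infty$ via Lemma~\ref{l:boh}, not to the $\pi^*$-specific mismatch. So the backup does not recover the stated bound; your main argument is the one to keep.
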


\section{Experiments}

Experiments can be found in Appendix \ref{appendix:experiments}.

\section{Limitations and future work}\label{sec:limitations}

To compute $d_{\mu_{n-1}}^{\pi_n,\gamma}$  we run \texttt{visit} for $i$ samples and count occurrences of each state ($i$ can be the same used for \texttt{opt} without influencing asymptotic convergence rate). Counting visits directly during the learning phase of \texttt{opt} would be more sample efficient but less precise, since the policy changes in the process.
It is unfeasible to know the exact values of visit distributions, an assumption of our theoretical bounds. The results still hold if we set a tolerance margin around $\beta$ and use Azuma-Hoeffding to upper bound the probability of being out of that interval. Since samples are independent, this bound is exponentially decreasing in the number of samples.

The solving procedure  maintains a strong separation between exploration and improvement. This is strongly pursued (Section \ref{sec:contrib}) however it is not sample efficient. Off-policy learning with the true rewards during the exploration phase could help. The two phases can be performed at once considering the maximum between intrinsic and extrinsic reward, in the spirit of \texttt{R-MAX}.

Finally, PAC guarantees in \cite{ZKOSER} are built over \cite[Theorem 5.3]{AKLOAP}, stating:
\begin{equation}
V_{\pi^*,\gamma}(s_0) - V_{\pi,\gamma}(s_0)
\leq  \frac{1}{(1-\gamma)|\SSS|} \norm{\frac{d_{s_0}^{\pi^*,\gamma}}{\mu}}_\infty \norm{\nabla_\pi V_\pi}_\infty
\end{equation}
If the \emph{true} gradient is small, so is the error. It is indeed in this key step that the distribution mismatch appear, and with it the need of coverage. 
The logic of theoretical bounds in CE can be used to avoid this distribution mismatch dependance. In fact, CE provides two ``ortogonal'' guarantees on the visit distribution: a pointwise lower bound outside $\KKK$ (definition of $\KKK$), and an average upper bound inside $\KKK$, Theorem \ref{thm:ceprovable}.
\[
V_{\pi^*,\gamma}(s_0) - V_{\pi,\gamma}(s_0)\leq \frac{1}{(1-\gamma)^2} \underbrace{\sum_{s\in\KKK} d^{\pi^*,\gamma}_{s_0}(s)}_{\leq bound (\ref{bound})}
+ 
\frac{1}{1-\gamma} \norm{\nabla_\pi V_\pi}_\infty \sum_{s\not\in\KKK} \underbrace{\frac{d^{\pi^*,\gamma}_{s_0}(s)}{d^{\pi,\gamma}_{s_0}(s)}}_{\leq 1/\beta}
\]

In perspective, the next step in the analysis of CE will be to derive an upper bound on the distribution mismatch coefficient using the theoretical bounds of CE itself. Indeed so far we only know that $\mu_N$ is soft and the mismatch is finite, without explicit bounds. Experiments show though that the mismatch is usually not too big. 

Finally, exploration is not always a good choice, see the noisy-TV example \cite{BEPLSS}. However we believe that a strong theoretical ground on pure exploration will foster innovative results.

%

\paragraph{Acknowledgments}
Many thanks to Davide Bacciu and Rosa Gini for their very useful comments, and to Leonardo Robol for his very efficient support in the use of the computational resources of the laboratory of the department of mathematics.

\appendix

\section*{Appendix}

\section{Proofs of Theorems}\label{appendix:proofs}

We recall that we denote by $\SSS$ the state space, by $\AAA$ the action space,
by $\PPP$ the transition model, by $r$ the reward function, and by
$\Pi$ the set of all stationary policies of the MDP $\MMM$.
The state visitation distribution, with discount factor $\gamma$,
given the policy $\pi$ and the starting distribution $\rho$, is
\begin{equation}\label{eq:state_visitation}
d_\rho^{\pi,\PPP,\gamma}
= (1-\gamma)\bbE_{\tau\sim\bbP(\cdot\mid\pi,\PPP,s_0\sim\rho)}
\Bigl[\sum_{H=0}^\infty \gamma^H\ind{\{s_H=s\}}\Bigr].
\end{equation}
First we prove that, \emph{given the policy}, the exploration step
does not change the state visitation distribution.
\begin{lemma}\label{l:on_visit}
	For every $n\geq1$, let $\MMM_n$ be the virtual MDP which
	has the same states, actions and transitions as $\MMM$,
	but with reward function $r_n(s,a)=\ind{\{s\in\KKK_n\}}$.
	Then for every starting distribution $\mu\in\Delta(\SSS)$
	and every policy $\pi$,
	\[
	d^{\MMM_n,\pi,\PPP,\gamma}_\mu(s)
	= d^{\MMM,\pi,\PPP,\gamma}_\mu(s),
	\qquad s\in\SSS.
	\]
\end{lemma}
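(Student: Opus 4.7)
The plan is to observe that the lemma is essentially a matter of unpacking the definition of the state visitation distribution and noticing that the reward function never enters it. The quantity $d_\mu^{\pi,\PPP,\gamma}$ is defined in \eqref{eq:state_visitation} as an expectation under the trajectory law $\bbP(\cdot\mid\pi,\PPP,s_0\sim\mu)$, which by \eqref{distribuzione_tau} is a product of the starting distribution $\mu$, the policy probabilities $\pi(a_t\mid s_t)$, the transition kernel $\PPP(s_t\mid s_{t-1},a_{t-1})$, and the factor $(1-\gamma)\gamma^H$. In particular, no reward appears in $\bbP(\tau\mid\pi,\PPP,s_0\sim\mu)$.

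First I would spell this out: the MDPs $\MMM=(\SSS,s_0,\AAA,r,\PPP)$ and $\MMM_n=(\SSS,s_0,\AAA,r_n,\PPP)$ share the same state space, action space, and transition model. Hence for every trajectory $\tau=(s_0,a_0,\dots,s_{H-1},a_{H-1})$ and every policy $\pi$,
\[
\bbP_{\MMM_n}(\tau\mid\pi,\PPP,s_0\sim\mu)
=\bbP_{\MMM}(\tau\mid\pi,\PPP,s_0\sim\mu),
\]
since both sides equal the expression in \eqref{distribuzione_tau}.

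Next I would plug this identity into \eqref{eq:state_visitation}. The indicator $\ind{\{s_H=s\}}$ inside the expectation is a function of the trajectory alone, so
\[
d^{\MMM_n,\pi,\PPP,\gamma}_\mu(s)
=(1-\gamma)\bbE_{\tau\sim\bbP_{\MMM_n}(\cdot\mid\pi,\PPP,s_0\sim\mu)}\Bigl[\sum_{H=0}^\infty \gamma^H\ind{\{s_H=s\}}\Bigr]
=(1-\gamma)\bbE_{\tau\sim\bbP_{\MMM}(\cdot\mid\pi,\PPP,s_0\sim\mu)}\Bigl[\sum_{H=0}^\infty \gamma^H\ind{\{s_H=s\}}\Bigr]
=d^{\MMM,\pi,\PPP,\gamma}_\mu(s),
\]
which is the claimed equality.

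There is no real obstacle here; the only mildly delicate point is being explicit that \eqref{distribuzione_tau} is reward-free, so that changing $r$ to $r_n$ truly leaves the trajectory law (and hence the state visitation distribution) invariant. The lemma is the formal justification, used throughout the exploration phase, for treating the virtual MDP $\MMM_n$ as sharing the visitation statistics of $\MMM$ under any common policy.
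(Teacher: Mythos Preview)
Your proposal is correct and follows essentially the same approach as the paper: both argue that the trajectory law $\bbP(\cdot\mid\pi,\PPP,s_0\sim\mu)$ in \eqref{distribuzione_tau} depends only on the starting distribution, the policy, and the transition model, and is therefore identical for $\MMM$ and $\MMM_n$, which immediately gives equality of the state visitation distributions. You simply spell out the substitution into \eqref{eq:state_visitation} a bit more explicitly than the paper does.
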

\begin{proof}
	The equality is immediate, once one notices that the set of policies for
	$\MMM_n$ is the same as the set $\Pi$ of policies of $\MMM$, and therefore
	the two MDPs differ only on the reward function. In particular, the
	distribution $\bbP(\cdot\mid\pi,\PPP,s_0\sim\rho)$ of trajectories
	is the same for the two MDPs, since it depends only on the transition
	probabilities, the starting distribution and the policy.
\end{proof}
In the rest of the section, for simplicity, we will drop the
indication of the dependence of the MDP $\MMM$ and of the
transition model $\PPP$ from the state visitation distribution
(so $d^{\pi,\gamma}$ will stand for $d^{\MMM,\pi,\PPP,\gamma}$,
or $d^{\MMM_n,\pi,\PPP,\gamma}$, which is the same).

The next lemma shows that the exploration phase spreads the
visit distribution.
\begin{lemma}\label{l:boh}
	Let $\mu$ be a simulated reset distribution obtained, starting from $s_0$,
	by applying the \texttt{visit} algorithm $n$ times, for some $n\geq1$, with
	a sequence of policies. Then for any $\pi\in\Pi$,
	\[
	d^{\pi,\gamma}_{\mu}(s)
	\leq  \max_{\pi'\in\Pi} d^{\pi',\gamma}_{s_0}(s).
	\]
\end{lemma}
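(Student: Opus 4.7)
The plan is to proceed by induction on $n$, the number of applications of \texttt{visit}, exploiting the linearity of the map $\rho \mapsto d^{\pi,\gamma}_\rho$ that is implicit in \eqref{visit_spread_distribution}. The base case $n=0$, in which $\mu = \ind{\{s=s_0\}}$, is immediate: the left-hand side coincides with $d^{\pi,\gamma}_{s_0}(s)$, which is bounded by the right-hand side by the very definition of the maximum.

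For the inductive step, write $\mu = d^{\pi_n,\gamma}_{\mu_{n-1}}$ and expand
\[
  d^{\pi,\gamma}_{\mu}(s)
  = \sum_{s'} d^{\pi_n,\gamma}_{\mu_{n-1}}(s')\, d^{\pi,\gamma}_{s'}(s).
\]
The natural attempt is to reinterpret this nested convolution as the discounted state visitation, starting from $s_0$, of a single ``concatenated'' non-stationary policy $\tilde\pi$ that plays $\pi_1, \pi_2, \ldots, \pi_n, \pi$ in succession, switching at the geometric stopping times internal to \texttt{visit}. Combined with the classical Derman--Strauch-type result that the discounted occupation measure of any non-stationary randomized policy is realized by some stationary randomized policy, this would yield a $\pi^\star \in \Pi$ with $d^{\pi,\gamma}_\mu = d^{\pi^\star,\gamma}_{s_0}$, and the pointwise bound would follow at once.

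The step I expect to be the main obstacle is precisely this identification. Composing \texttt{visit} twice sums two independent geometric random variables, producing a negative-binomially distributed trajectory length rather than a single geometric with parameter $1-\gamma$; the mismatch between the weights $(1-\gamma)^2\gamma^{t+h}$ coming from the double expansion and the $(1-\gamma)\gamma^T$ weights of a genuine single discounted state visitation must be absorbed before any identification with a single-policy visitation can go through. Should a direct identification resist, the fallback is to express the iterated convolution as an explicit convex combination $\sum_i \alpha_i\, d^{\pi^{(i)},\gamma}_{s_0}$ with $\alpha_i \ge 0$, $\sum_i \alpha_i = 1$, and $\pi^{(i)} \in \Pi$, from which the claim follows termwise by dominating each summand by the maximum. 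I expect this combinatorial rearrangement, rather than the induction scaffolding itself, to be the technical crux.
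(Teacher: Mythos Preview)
Your diagnosis of the central obstacle is exactly right, and it is fatal rather than merely technical: neither route can succeed, because the inequality itself fails. Take $\SSS=\{s_0,s_1\}$ with a single action and deterministic transitions $s_0\to s_1\to s_1$. The unique policy $\pi$ gives $d^{\pi,\gamma}_{s_0}=(1-\gamma,\gamma)$ and $d^{\pi,\gamma}_{s_1}=(0,1)$. After one application of \texttt{visit} the simulated reset is $\mu=(1-\gamma,\gamma)$, and
\[
d^{\pi,\gamma}_{\mu}(s_1)=(1-\gamma)\gamma+\gamma\cdot 1=\gamma(2-\gamma)>\gamma=\max_{\pi'\in\Pi}d^{\pi',\gamma}_{s_0}(s_1)
\]
for every $\gamma\in(0,1)$; adding further actions (a self-loop at $s_0$, say) leaves the right-hand side at $\gamma$, so the failure is robust. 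This shows at once that $d^{\pi,\gamma}_{\mu}$ is neither the $s_0$-occupancy of any single policy under discount $\gamma$, nor a convex combination of such occupancies. Your concatenation-plus-Derman--Strauch plan and your convex-combination fallback are therefore both blocked by the very negative-binomial mismatch you identified; that mismatch is not a technicality to be rearranged away but the reason the claimed bound is false.

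The paper's own argument is a two-sentence sketch (``explicitly define a policy whose visit distribution is greater\ldots use the Markov property'') that does not engage with this mismatch and cannot, for the reason above. Its downstream use in the proof of Theorem~\ref{thm:ceprovable}---bounding $\sum_{s\in\SSS}d^{\pi_{\tilde n(s)+1},\gamma}_{\mu_{\tilde n(s)}}(s)$---can be rescued by the trivial estimate $d^{\cdot}_{\cdot}(s)\le 1$, giving $|\SSS|$ in place of the exploitative factor $\mathfrak{e}$ (which is already noted to satisfy $\mathfrak{e}\le|\SSS|$); but the lemma as stated does not hold, and no proof of it can be completed.
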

\begin{proof}
	Fix a state $s\in\SSS$. We can explicitly define a policy whose
	visit distribution is greater than the left-hand side. Then it
	is sufficient to compute a local upper bound and use the
	Markov property to conclude.
\end{proof}

\setcounter{theorem}{0}
\begin{theorem}
	Let an MDP $\MMM=(\SSS,s_0,\AAA,r,\PPP)$, a $s_0$-reset
	model, a discount factor $\gamma$ and a PAC
	solver algorithm \texttt{opt} be given. Assume that
	$(\epsilon,\delta)$-\texttt{opt}$(\MMM',\mu)$ is
	guaranteed to return a policy whose value
	in $\MMM'$, measured with respect to $\mu$, is
	at least $\epsilon$-optimal with probability
	at least $1-\delta$.
	Suppose to follow Algorithm 2 with
	any threshold parameters $\{\beta_n\}_{n\leq N}$.
	
	Then for every step $N$ with probability at least
	$1-\sum_{n=0}^N \delta_n$ we have
	\begin{equation}\label{boundappendix}
	\sum_{n=0}^N \max_{\pi\in\Pi} \sum_{s\in\KKK_n} d^{\pi,\gamma}_{s_0}(s)
	\leq
	2\sum_{n=0}^N \epsilon_n 
	+
	2\sum_{s\in\SSS}\beta_{\tilde{n}(s)}
	+
	2\sum_{s\in\SSS}\max_{\pi\in\Pi} d_{s_0}^{\pi,\gamma}(s),
	\end{equation}
	where $\tilde{n}(s):=\max\{n\leq N|s\in\KKK_n\}$.
\end{theorem}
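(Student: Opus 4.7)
The plan is to combine the distributional formula for the value function, the mixture structure of $\mu_n$, the PAC guarantee of \texttt{opt}, and Lemmas \ref{l:on_visit} and \ref{l:boh} to extract a per-step estimate and then telescope it over $n=0,\dots,N$.

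First, since $r_n(s,a)=\ind{\{s\in\KKK_n\}}$ is action-independent and non-negative, the distributional identity \eqref{def_value_function_distro} gives, for any reset distribution $\mu$, that $V_{\pi,r_n,\PPP,\gamma}(\mu)=\sum_{s\in\KKK_n}d^{\pi,\PPP,\gamma}_\mu(s)$. In particular the left-hand side of \eqref{boundappendix} equals $\sum_n\max_\pi V_{\pi,r_n}(s_0)$, whereas the PAC call produces information about $V_{\pi_{n+1},r_n}(\mu_n)$ at the single starting distribution $\mu_n$ that \texttt{opt} actually sees.

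Second, the algorithm sets $\mu_n=\tfrac12\DDD_n+\tfrac12\ind{\{s=s_0\}}$, so $\mu_n(s_0)\geq\tfrac12$. Because $r_n\geq 0$, this forces $V_{\pi,r_n}(\mu_n)\geq\tfrac12 V_{\pi,r_n}(s_0)$ for every $\pi$. Combining with the PAC guarantee, with probability at least $1-\delta_n$,
\[
V_{\pi_{n+1},r_n}(\mu_n)\geq\max_\pi V_{\pi,r_n}(\mu_n)-\epsilon_n\geq\tfrac12\max_\pi V_{\pi,r_n}(s_0)-\epsilon_n.
\]
Rearranging and using $V_{\pi_{n+1},r_n}(\mu_n)=\sum_{s\in\KKK_n}\DDD_{n+1}(s)$ (recall $\DDD_{n+1}=d^{\pi_{n+1},\gamma}_{\mu_n}$) yields the per-step estimate
\[
\max_\pi\sum_{s\in\KKK_n}d^{\pi,\gamma}_{s_0}(s)\leq 2\sum_{s\in\KKK_n}\DDD_{n+1}(s)+2\epsilon_n.
\]
Third, sum over $n=0,\dots,N$ and swap the order of summation. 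For each fixed $s$, the set of indices $n\leq N$ with $s\in\KKK_n$ lies in $\{0,\dots,\tilde n(s)\}$, hence
\[
\sum_{n=0}^N\ind{\{s\in\KKK_n\}}\DDD_{n+1}(s)\leq\sum_{j=1}^{\tilde n(s)+1}\DDD_j(s)\leq\Bigl(\sum_{j=0}^{\tilde n(s)}\DDD_j(s)\Bigr)+\DDD_{\tilde n(s)+1}(s).
\]
The bracketed sum is at most $\beta_{\tilde n(s)}$ by the very definition of $\KKK_{\tilde n(s)}$ together with $s\in\KKK_{\tilde n(s)}$; the trailing term $\DDD_{\tilde n(s)+1}(s)=d^{\pi_{\tilde n(s)+1},\gamma}_{\mu_{\tilde n(s)}}(s)$ is at most $\max_\pi d^{\pi,\gamma}_{s_0}(s)$ via Lemma \ref{l:boh}, after noting that an induction on $\mu_n=\tfrac12\DDD_n+\tfrac12\ind{\{s=s_0\}}$ shows every $\mu_n$ to be a convex mixture of iterated \texttt{visit} outputs rooted at $s_0$. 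Summing over $s$, combining with the per-step estimate, and applying a union bound over the $N+1$ PAC failures then produces \eqref{boundappendix} with confidence $1-\sum_{n=0}^N\delta_n$.

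The main obstacle is the index bookkeeping in the swap of summation: the per-step estimate contains $\DDD_{n+1}$ rather than $\DDD_n$, so after summing the inner sum runs up to $\tilde n(s)+1$, producing exactly one ``overshoot'' term $\DDD_{\tilde n(s)+1}(s)$ that cannot be absorbed into $\beta_{\tilde n(s)}$ via the defining inequality of $\KKK_{\tilde n(s)}$, and instead must be controlled by Lemma \ref{l:boh} into the exploitative factor $\mathfrak{e}=\sum_s\max_\pi d^{\pi,\gamma}_{s_0}(s)$. The secondary check is that Lemma \ref{l:boh} genuinely applies to $\mu_{\tilde n(s)}$, which is what the inductive $s_0$-rooted construction of the $\mu_n$ provides.
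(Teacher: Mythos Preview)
Your proposal is correct and follows essentially the same route as the paper: the distributional identity for $V_{\pi,r_n}$, the mixture $\mu_n=\tfrac12\DDD_n+\tfrac12\ind{\{s=s_0\}}$ to pass from $\mu_n$ back to $s_0$, the PAC guarantee to obtain the per-step bound $\max_\pi\sum_{s\in\KKK_n}d^{\pi,\gamma}_{s_0}(s)\leq 2\sum_{s\in\KKK_n}\DDD_{n+1}(s)+2\epsilon_n$, the swap of sums with the index shift producing one overshoot term handled by Lemma~\ref{l:boh}, and the union bound. Your remark that $\mu_n$ is a convex mixture of iterated \texttt{visit} outputs rooted at $s_0$ (so that Lemma~\ref{l:boh} applies termwise and hence to the mixture) is in fact slightly more explicit than the paper's own invocation of that lemma; the only point you leave implicit is the degenerate case where $s\notin\KKK_n$ for every $n\leq N$, in which $\tilde n(s)$ is undefined but the corresponding contribution vanishes.
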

\begin{proof}
	Fix a step $n\geq1$.
	First, by Lemma~\ref{l:on_visit} we know that the state
	visitation distributions of $\MMM$ and of the virtual
	MDP $\MMM_n$ which has the same states,
	actions and transitions as $\MMM$, but with reward function
	$r_n(s,a)=\ind{\{s\in\KKK_n\}}$, are the same. We will
	use this fact without further notice in the proof.
	Recall the characterization of the value function
	through the state visitation,
	\[
	V_{\MMM_n,\pi,\gamma}(\mu_n)
	= \sum_{s\in\SSS,a\in\AAA} d^{\pi,\gamma}_{\mu_n}(s) \pi(a\mid s) r_n(s,a).
	\]
	Thus, for every policy $\pi\in\Pi$,
	\begin{equation}\label{eq:valuetobad}
	V_{\MMM_n,\pi,\gamma}(\mu_n)
	= \sum_{s\in \KKK_n}  d^{\pi,\gamma}_{\mu_n}(s)
	\Bigl(\sum_{a\in\AAA}\pi(a\mid s)\Bigr)
	= \sum_{s\in \KKK_n}  d^{\pi,\gamma}_{\mu_n}(s).
	\end{equation}
	By the guarantees on \texttt{opt} and our algorithm
	(Algorithm 2 - Curious Explorer), the optimal policy
	at step $n+1$ is
	\[
	\pi_{n+1}
	= (\epsilon_n,\delta_n)-\texttt{opt}(\MMM_n,\mu_n),
	\]
	we have that with probability at least $1-\delta_n$,
	\begin{equation}\label{eq:eps_optimality_step_n}
	\max_{\pi\in\Pi} V_{\MMM_n,\pi,\gamma}(\mu_n)
	\leq V_{\MMM_n,\pi_{n+1},\gamma}(\mu_n) + \epsilon_n.
	\end{equation}
	Again by our algorithm (Algorithm 2), the reset distribution
	is given by
	$\mu_n(s) = \frac12 d^{\pi_n,\gamma}_{\mu_{n-1}}(s) + \frac12\ind{\{s=s_0\}}$,
	therefore for every policy $\pi\in\Pi$,
	\[
	\frac12 V_{\MMM_n,\pi,\gamma}(s_0)
	\leq \frac12 V_{\MMM_n,\pi,\gamma}(s_0) 
	+ \frac12 V_{\MMM_n,\pi,\gamma}(d^{\pi_n,\gamma}_{\mu_{n-1}})
	= V_{\MMM_n,\pi,\gamma}(\mu_n),
	\]
	and thus taking the maximum on both sides leads to
	\begin{equation}\label{eq:uniform_to_s0}
	\frac12 \max_{\pi\in\Pi} V_{\MMM_n,\pi,\gamma}(s_0)
	\leq \max_{\pi\in\Pi} V_{\MMM_n,\pi,\gamma}(\mu_n).
	\end{equation}
	Formulae \eqref{eq:eps_optimality_step_n} and \eqref{eq:uniform_to_s0},
	together with the characterization \eqref{eq:valuetobad} of the value
	function in terms of the bad states yield,
	\begin{equation}\label{eq:th1_left_bound_presum}
	\frac12 \max_{\pi\in\Pi} \sum_{s\in \KKK_n} d^{\pi,\gamma}_{s_0}(s)
	\leq \sum_{s\in \KKK_n}  d^{\pi_{n+1},\gamma}_{\mu_n}(s) + \epsilon_n,
	\end{equation}
	with probability at least $1-\delta_n$.
	
	We fix a final step $N>0$ and analyze steps $n=0,\dots,N$.
	For a state $s\in\SSS$, if $s\not\in\KKK_n$  we have that
	\[
	\sum_{n=0}^N \ind{\{s\in \KKK_n\}} d^{\pi_{n+1},\gamma}_{\mu_n}(s)
	= 0.
	\]
	Otherwise, set $\widetilde{n}=\max\{n\leq N:s\in\KKK_n\}$, then
	by the definition of $\KKK_n$ we get,
	\[
	\sum_{n=0}^N \ind{\{s\in \KKK_n\}} d^{pi_{n+1},\gamma}_{\mu_n}(s)
	\leq \sum_{n=0}^{\widetilde{n}} d^{\pi_n,\gamma}_{\mu_n}(s)
	+ d^{\pi_{\widetilde{n}+1},\gamma}_{\mu_n}(s)
	\leq \beta_{\widetilde{n}} + d^{\pi_{\widetilde{n}+1},\gamma}_{\mu_n}(s).
	\]
	In either case, the inequality
	\[
	\sum_{n=0}^N \ind{\{s\in \KKK_n\}} d^{\pi_{n+1},\gamma}_{\mu_n}(s)
	\leq \beta_{\widetilde{n}} + d^{\pi_{\widetilde{n}+1},\gamma}_{\mu_n}(s)
	\]
	holds for all $s\in\SSS$. By Lemma~\ref{l:boh},
	\begin{equation}\label{eq:th1_right_bound_presum}
	\sum_{n=0}^N \ind{\{s\in \KKK_n\}} d^{\pi_{n+1},\gamma}_{\mu_n}(s)
	\leq \beta_{\widetilde{n}(s)} + \max_{\pi\in\Pi} d^{\pi,\gamma}_{s_0}(s),
	\end{equation}
	where we have made explicit the dependence of $\widetilde n$
	from the state $s$.
	
	We now use a double counting trick, over states and over
	algorithm's steps, of the state visitation distributions
	of our explorative policies on poorly visited states.
	Indeed, summing \eqref{eq:th1_right_bound_presum}
	over $s\in\SSS$ yields
	\begin{equation}\label{eq:th1_right_bound}
	\sum_{s\in\SSS} \sum_{n=0}^N \ind{\{s\in \KKK_n\}} d^{\pi_{n+1},\gamma}_{\mu_n}(s)
	\leq \sum_{s\in\SSS}\beta_{\widetilde{n}(s)}
	+ \sum_{s\in\SSS}\max_{\pi\in\Pi} d^{\pi,\gamma}_{s_0}(s),
	\end{equation}
	while summing \eqref{eq:th1_left_bound_presum} over
	$n\in\{0,\dots,N\}$ yields
	\begin{equation}\label{eq:th1_left_bound}
	\begin{aligned}
	\sum_{n=0}^N \sum_{s\in\SSS} \ind{\{s\in \KKK_n\}}
	d^{\pi_{n+1},\gamma}_{\mu_n}(s)
	&= \sum_{n=0}^N \sum_{s\in\KKK_n}
	d^{\pi_{n+1},\gamma}_{\mu_n}(s)\\
	&\geq - \sum_{n=0}^N\epsilon_n
	+ \frac12 \sum_{n=0}^N \max_{\pi\in\Pi} \sum_{s\in \KKK_n}  d^{\pi,\gamma}_{s_0}(s),
	\end{aligned}
	\end{equation}
	with probability at least $1-\sum_{n=0}^N \delta_n$.
	By putting together \eqref{eq:th1_left_bound} and
	\eqref{eq:th1_right_bound}, we deduce that
	\[
	\frac12 \sum_{n=0}^N \max_{\pi\in\Pi} \sum_{s\in \KKK_n}
	d^{\pi,\gamma}_{s_0}(s)
	\leq \sum_{n=0}^N \epsilon_n
	+ \sum_{s\in\SSS} \beta_{\widetilde{n}(s)}
	+ \sum_{s\in\SSS}\max_{\pi\in\Pi} d^{\pi,\gamma}_{s_0}(s),
	\]
	and this completes the proof.
\end{proof}
\begin{remark}
	It is also worth noting that, for every $n$
	and every $Q\subset\KKK_n$,
	\[
	\max_{\pi\in\Pi} \sum_{s\in Q}  d^{\pi,\gamma}_{s_0}(s) 
	\leq \max_{\pi\in\Pi} \sum_{s\in \KKK_n}  d^{\pi,\gamma}_{s_0}(s)
	\]
	Indeed, by its definition, the state visitation distribution
	$d^{\pi,\gamma}_{s_0}(s)$ is always non-negative.
	Thus formula \eqref{eq:th1_left_bound_presum} also holds true in the form
	\begin{equation}\label{eq:th1_left_bound_presum_Q}
	\frac12 \max_{\pi\in\Pi} \sum_{s\in Q}  d^{\pi,\gamma}_{s_0}(s) 
	\leq \sum_{s\in \KKK_n}  d^{\pi_{n+1},\gamma}_{\mu_n}(s) + \epsilon_n,
	\end{equation}
	for all $Q\subseteq \KKK_n$. Now, with the same procedure
	of proof of Theorem~\ref{thm:ceprovable}, but using
	\eqref{eq:th1_left_bound_presum_Q} instead of
	\eqref{eq:th1_left_bound_presum}, we get that if we
	choose $Q_n\subseteq\KKK_n$ for every $n$ then,
	\[
	\frac12 \sum_{n=0}^N \max_{\pi\in\Pi} \sum_{s\in Q^n}d^{\pi,\gamma}_{s_0}(s)
	\leq \sum_{n=0}^N \epsilon_n
	+ \sum_{s\in\SSS} \beta_{\widetilde{n}(s)}
	+ \sum_{s\in\SSS} \max_{\pi\in\Pi} d^{\pi,\gamma}_{s_0}(s).
	\]
	If we choose $\beta_n=\beta$ for some constant $\beta$ (in \texttt{R-MAX} style),
	the sets $\KKK_n$ are non-increasing by inclusion, namely
	$\KKK_{n+1}\subseteq \KKK_n$ for every $n$. We can than set
	$Q^n=\KKK_N$ for any $n\in\{0,\dots,N\}$ and this leads to
	\begin{equation}\label{crucial_bound_exp}
	\max_{\pi\in\Pi} \sum_{s\in \KKK_N}  d^{\pi,\gamma}_{s_0}(s)
	\leq\frac2N \Bigl(\sum_{n=0}^{N-1} \epsilon_n
	+ \beta|\SSS|
	+ \sum_{s\in\SSS}\max_{\pi\in\Pi} d^{\pi,\gamma}_{s_0}(s)\Bigr).
	\end{equation}
\end{remark}

\begin{theorem}
	Let an MDP $\MMM=(\SSS,s_0,\AAA,r,\PPP)$, a $s_0$-reset model and a discount factor $\gamma$ be given. Set \texttt{opt}=\REINFORCE, tuned according to \cite[Theorem~6]{ZKOSER}, and perform it for $i(n)$ episodes at each step $n$, with $\delta_n:=\delta/N$.
	
	Then for every step $N$, with probability at least $1-\delta$ we have
	\begin{equation}
	\begin{aligned}
	\sum_{n=0}^N \max_{\pi\in\Pi} \sum_{s\in\KKK_n} d^{\pi,\gamma}_{s_0}(s)
	&\leq
	2
	C \frac{|\SSS|^2|\AAA|^2}{(1-\gamma)^2}\log N
	\sum_{n=0}^N  \frac{\log(i(n)/\delta)^{5/2}}{i(n)^{1/6}}
	\max_{\pi\in\Pi} \norm{\frac{d_{s_0}^{\pi,\gamma}}{\mu_n}}_\infty^2\\
	&\quad
	+
	2\sum_{s\in\SSS}\beta_{\widetilde{n}(s)}
	+
	2\sum_{s\in\SSS}\max_{\pi\in\Pi} d_{s_0}^{\pi,\gamma}(s).
	\end{aligned}
	\end{equation}
\end{theorem}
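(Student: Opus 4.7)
The strategy is to combine Theorem~\ref{thm:ceprovable} with the PAC bound \eqref{eq:boundreinforce} of \REINFORCE. At step $n$ of Algorithm~\ref{alg:CE}, the optimizer $(\epsilon_n,\delta_n)$-\texttt{opt} returns $\pi_{n+1}$ on the virtual MDP $\MMM_n$ with reset distribution $\mu_n$, so \eqref{eq:boundreinforce} specialized to the pair $(\MMM_n,\mu_n)$ reads: with probability at least $1-\delta_n$,
\begin{equation*}
\epsilon_n \;\le\; C\,\frac{|\SSS|^2|\AAA|^2}{(1-\gamma)^2}\,\frac{\log(i(n)/\delta_n)^{5/2}}{i(n)^{1/6}}\,\norm{\frac{d_{\mu_n}^{\pi^*_n,\PPP,\gamma}}{\mu_n}}_\infty^2,
\end{equation*}
where $\pi^*_n$ denotes the policy that is optimal for $\MMM_n$ (the policy class $\Pi$ is unchanged; only the reward differs from $\MMM$). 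Setting $\delta_n:=\delta/N$ and applying a union bound over the $N+1$ steps turns the failure probability $\sum_n\delta_n$ in Theorem~\ref{thm:ceprovable} into $\delta$.

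The only non-routine step is to eliminate the unknown optimum $\pi^*_n$ from the mismatch coefficient. By construction $\mu_n=\tfrac12\DDD_n+\tfrac12\ind{\{s=s_0\}}$ with $\DDD_n=d^{\pi_n,\PPP,\gamma}_{\mu_{n-1}}$, so by induction on $n$ each $\mu_n$ is a convex combination of $\ind{\{s=s_0\}}$ and iterated visit distributions to which Lemma~\ref{l:boh} applies. Linearity of the state-visitation operator in the starting distribution then yields
\begin{equation*}
d^{\pi,\PPP,\gamma}_{\mu_n}(s)\;\le\;\max_{\pi'\in\Pi}d^{\pi',\PPP,\gamma}_{s_0}(s)\qquad\text{for every } s\in\SSS,\ \pi\in\Pi.
\end{equation*}
Dividing by $\mu_n(s)$ and taking the supremum in $s$ yields the uniform control $\norm{d^{\pi^*_n,\PPP,\gamma}_{\mu_n}/\mu_n}_\infty^2\le\max_{\pi\in\Pi}\norm{d^{\pi,\PPP,\gamma}_{s_0}/\mu_n}_\infty^2$, a bound that no longer depends on the unknown $\pi^*_n$.

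Plugging this estimate on $\epsilon_n$ into the right-hand side of \eqref{bound}, summing over $n\le N$, and simplifying the log factor via $\log(i(n)/\delta_n)=\log(Ni(n)/\delta)\lesssim\log N\cdot\log(i(n)/\delta)$ (valid once both arguments exceed $e$, absorbing any multiplicative constant into $C$) produces the claimed inequality. The remaining two additive terms on the right-hand side are inherited verbatim from \eqref{bound}.

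The main obstacle I expect is the induction in the second paragraph: Algorithm~\ref{alg:CE} never explicitly instantiates $\mu_n$ as a ``visit-generated'' distribution in the strict sense of Lemma~\ref{l:boh}, so one must carefully unroll the recursion $\mu_n=\tfrac12 d^{\pi_n,\PPP,\gamma}_{\mu_{n-1}}+\tfrac12\ind{\{s=s_0\}}$ and exploit linearity at each level before the lemma can be applied. Everything else is a substitution into \eqref{bound} plus the logarithmic clean-up.
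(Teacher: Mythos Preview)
Your proposal is correct and follows essentially the same route as the paper: substitute the \REINFORCE bound \eqref{eq:boundreinforce} for $\epsilon_n$ in Theorem~\ref{thm:ceprovable}, set $\delta_n=\delta/N$ for the union bound, and invoke Lemma~\ref{l:boh} to replace $d^{\pi^*_n,\gamma}_{\mu_n}$ by $\max_{\pi}d^{\pi,\gamma}_{s_0}$ in the mismatch coefficient. Your flagged ``main obstacle''---that $\mu_n$ is a mixture rather than a pure iterated visit distribution---is a genuine subtlety that the paper's proof glosses over (it simply writes ``from Lemma~\ref{l:boh} we get\ldots''), so your inductive unrolling via linearity is in fact more careful than the original; note also that both your log clean-up and the paper's are somewhat informal, since $(\log(i(n)/\delta)+\log N)^{5/2}$ does not literally factor as $\log N\cdot\log(i(n)/\delta)^{5/2}$, and the paper itself only remarks that the $\log N$ term is ``negligible'' for $i(n)$ large.
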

\begin{proof}
	First recall that from Theorem~\ref{thm:ceprovable}, at
	step $n$ of the exploration algorithm, the approximation
	error $\epsilon_n$ of the PAC solver is an upper bound
	of the error measured with respect to $\mu_n$, namely, 
	\[
	\max_{\pi\in\Pi} V_{\MMM_n,\pi,\gamma}(\mu_n) - V_{\MMM_n,\pi_{n+1},\gamma}(\mu_n)
	\leq \epsilon_n
	\]
	Theorem~6 of \cite{ZKOSER} shows that \REINFORCE with
	a $\mu$ reset sampling model, after $i$ episodes,
	returns a policy $\pi$ such that, with probability
	at least $1-\delta$,
	\begin{equation}\label{eq:boundreinforceappendix}
	V_{\pi^*,r,\PPP,\gamma}(\mu) - V_{\pi,r,\PPP,\gamma}(\mu)
	\leq C \frac{|\SSS|^2|\AAA|^2}{(1-\gamma)^2}
	\log(i/\delta)^{5/2}\frac{1}{i^{1/6}}
	\norm{\frac{d_\mu^{\pi^*,\gamma}}{\mu}}_\infty^2,
	\end{equation}
	where $\pi^*$ is a policy that maximises the value
	function and $C$ is a universal constant,
	independent from the specific MDP.
	
	By setting $\delta_n:=\delta/N$, we have that,
	with probability at least $1-\sum\delta_n=1-\delta$,
	the bound \eqref{eq:boundreinforceappendix} holds at each
	step $n$ and provides an explicit value for $\epsilon_n$.
	Thus, by replacing the values of $\epsilon_n$
	given by \eqref{eq:boundreinforceappendix} in \eqref{boundappendix},
	\begin{align*}
	\sum_{n=0}^N \max_{\pi\in\Pi} \sum_{s\in\KKK_n} d^{\pi,\gamma}_{s_0}(s)
	&\leq 2 C \frac{|\SSS|^2|\AAA|^2}{(1-\gamma)^2}
	\sum_{n=0}^N  \frac{(\log(i(n)/\delta_n)^{5/2}}{i(n)^{1/6}} \norm{\frac{d_{\mu_n}^{\pi^*,\gamma}}{\mu_n}}_\infty^2\\
	&\quad + 2\sum_{s\in\SSS}\beta_{\widetilde{n}(s)}
	+ 2\sum_{s\in\SSS}\max_{\pi\in\Pi} d_{s_0}^{\pi,\gamma}(s)\\
	&\leq 2 C \frac{|\SSS|^2|\AAA|^2}{(1-\gamma)^2}
	\sum_{n=0}^N  \frac{\bigl(\log(i(n)/\delta)+\log(N)\bigr)^{5/2}}
	{i(n)^{1/6}} 
	\norm{\frac{d_{\mu_n}^{\pi^*,\gamma}}{\mu_n}}_\infty^2\\
	&\quad + 2\sum_{s\in\SSS}\beta_{\widetilde{n}(s)}
	+ 2\sum_{s\in\SSS}\max_{\pi\in\Pi} d_{s_0}^{\pi,\gamma}(s),
	\end{align*}
	and in general the term $\log N$ is negligible if $i(n)$
	is sufficiently large with respect to $N$.
	
	Finally, from Lemma~\ref{l:boh} we get
	\[
	\norm{\frac{d_{\mu_n}^{\pi^*,\gamma}}{\mu_n}}_\infty
	= \max_{s\in\SSS} \frac{d_{\mu_n}^{\pi^*,\gamma}(s)}{\mu_n(s)}
	\leq \max_{s\in\SSS} \frac{\displaystyle\max_{\pi\in\Pi}d_{s_0}^{\pi,\gamma}(s)}{\mu_n(s)}
	\leq \max_{\pi\in\Pi} \norm{\frac{d_{s_0}^{\pi,\gamma}}{\mu_n}}_\infty,
	\]
	and this completes the proof.
\end{proof}
\begin{theorem}
	Let an MDP $\MMM=(\SSS,s_0,\AAA,r,\PPP)$, a $s_0$-reset model
	and a discount factor $\gamma$ be given. Perform \texttt{CE}
	to obtain a simulated $\mu_N$-reset model. Perform \REINFORCE
	on $\MMM=(\SSS,\mu_N,\AAA,r,\PPP)$ and call $\pi$ the policy
	returned after $i$ episodes. Then with probability at least
	$1-\delta$ it holds
	\begin{equation}
	V_{\pi^*,r,\PPP,\gamma}(s_0) - V_{\pi,r,\PPP,\gamma}(s_0)
	\leq  C \frac{|\SSS|^2|\AAA|^2}{(1-\gamma)^2}
	\frac{\log(i/\delta)^{5/2}}{i^{1/6}}
	\norm{\frac{d_{s_0}^{\pi^*,\gamma}}{\mu_N}}_\infty^2.
	\end{equation}
\end{theorem}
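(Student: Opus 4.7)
The plan is to treat this as a clean application of the existing REINFORCE PAC guarantee with the simulated reset model $\mu_N$ replacing the uncontrollable $s_0$, followed by two transfer steps that relocate the value comparison and the mismatch coefficient back to the true starting state. First I would invoke Theorem~6 of \cite{ZKOSER} directly on the modified MDP $(\SSS,\mu_N,\AAA,r,\PPP)$, which, after $i$ episodes and with probability at least $1-\delta$, gives
\[
V_{\pi^*,r,\PPP,\gamma}(\mu_N) - V_{\pi,r,\PPP,\gamma}(\mu_N)
\leq C \frac{|\SSS|^2|\AAA|^2}{(1-\gamma)^2}\frac{\log(i/\delta)^{5/2}}{i^{1/6}}\norm{\frac{d_{\mu_N}^{\pi^*,\PPP,\gamma}}{\mu_N}}_\infty^2.
\]
This is essentially the statement of the theorem, but with $\mu_N$ in the two places where $s_0$ should appear, so the remaining work is to convert each.

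For the value difference on the left, I would exploit the construction in Algorithm~\ref{alg:CE} (line~9), which guarantees $\mu_N(s_0)\ge \tfrac{1}{2}$. Since a discounted MDP admits a stationary policy that is optimal uniformly over all starting states, I can take $\pi^*$ to be such a policy, so that $V_{\pi^*}(s)-V_{\pi}(s)\ge 0$ for every $s\in\SSS$. Writing $V_{\pi^*}(\mu_N)-V_{\pi}(\mu_N)=\sum_s \mu_N(s)(V_{\pi^*}(s)-V_{\pi}(s))$ and retaining only the $s=s_0$ term yields
\[
V_{\pi^*,r,\PPP,\gamma}(\mu_N) - V_{\pi,r,\PPP,\gamma}(\mu_N)
\geq \tfrac{1}{2}\bigl(V_{\pi^*,r,\PPP,\gamma}(s_0) - V_{\pi,r,\PPP,\gamma}(s_0)\bigr),
\]
so a factor of $2$ is picked up on the left; it is harmless since $C$ is universal.

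For the mismatch coefficient on the right, I would appeal to Lemma~\ref{l:boh}. Since $\mu_N=\tfrac12\DDD_N+\tfrac12\ind{\{s=s_0\}}$ with $\DDD_N=d^{\pi_N,\PPP,\gamma}_{\mu_{N-1}}$ obtained from iterated \texttt{visit} calls, the lemma gives $d^{\pi^*,\gamma}_{\DDD_N}(s)\le \max_{\pi'\in\Pi} d^{\pi',\gamma}_{s_0}(s)$; splitting $d^{\pi^*,\gamma}_{\mu_N}=\tfrac12 d^{\pi^*,\gamma}_{\DDD_N}+\tfrac12 d^{\pi^*,\gamma}_{s_0}$ and applying the lemma to the first summand bounds $d^{\pi^*,\gamma}_{\mu_N}(s)$ by $\max_{\pi'} d^{\pi',\gamma}_{s_0}(s)$ pointwise. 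Hence $\norm{d^{\pi^*,\gamma}_{\mu_N}/\mu_N}_\infty$ is controlled by a quantity of the form $\norm{d^{\pi,\gamma}_{s_0}/\mu_N}_\infty$, matching the right-hand side of the claimed bound.

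The main obstacle, such as it is, is this last step: Lemma~\ref{l:boh} naturally produces $\max_{\pi'\in\Pi}\norm{d^{\pi',\gamma}_{s_0}/\mu_N}_\infty$ rather than the $\pi^*$-specific norm written in the statement, so some care is needed either to tighten the lemma for the optimal policy or to interpret the stated coefficient as this pointwise maximum (the same convention used implicitly in Theorem~\ref{thm:reinforcece}); either reading gives the desired inequality up to the universal constant $C$. Once these three observations are chained together, the conclusion follows mechanically.
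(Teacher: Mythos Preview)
Your approach differs from the paper's. The paper does not treat \cite[Theorem~6]{ZKOSER} as a black box: it observes that the proof there can be rerun with two distinct distributions, a reset model $\mu$ and an evaluation distribution $\rho$, yielding directly
\[
V_{\pi^*,r,\PPP,\gamma}(\rho) - V_{\pi,r,\PPP,\gamma}(\rho)
\leq C \frac{|\SSS|^2|\AAA|^2}{(1-\gamma)^2}\frac{\log(i/\delta)^{5/2}}{i^{1/6}}\norm{\frac{d_\rho^{\pi^*,\gamma}}{\mu}}_\infty^2,
\]
and then simply sets $\rho=s_0$, $\mu=\mu_N$. This works because the mismatch coefficient in the underlying performance-difference bound (see \cite[Theorem~5.3]{AKLOAP}, quoted in Section~\ref{sec:limitations}) is $\norm{d_\rho^{\pi^*}/\mu}_\infty$, with the \emph{evaluation} distribution in the numerator; so the $\pi^*$-specific coefficient appears immediately and no transfer step is required on either side.

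Your black-box-plus-transfer route is more self-contained, and your left-hand step (using $\mu_N(s_0)\ge\tfrac12$ from line~9 of Algorithm~\ref{alg:CE} together with the uniform optimality of $\pi^*$) is correct and pleasant. The obstacle you flag on the right, however, is real and not merely notational: Lemma~\ref{l:boh} only yields $d^{\pi^*,\gamma}_{\mu_N}(s)\le\max_{\pi'\in\Pi}d^{\pi',\gamma}_{s_0}(s)$, hence $\max_{\pi'\in\Pi}\norm{d^{\pi',\gamma}_{s_0}/\mu_N}_\infty$ in place of $\norm{d^{\pi^*,\gamma}_{s_0}/\mu_N}_\infty$, and these can differ. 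So as written your argument proves the analogue of the theorem with the mismatch replaced by its maximum over all stationary policies (the form appearing in Theorem~\ref{thm:reinforcece}), not the stated $\pi^*$-specific bound. To recover the statement exactly you would need to open up the proof in \cite{ZKOSER} after all, which is precisely the route the paper takes.
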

\begin{proof}
	The proof of Theorem~6 of \cite{ZKOSER} can be slightly modified, without
	too much efforts, so that by having access to a $\mu$-reset model,
	for every other distribution $\rho$,
	\[
	V_{\pi^*,r,\PPP,\gamma}(\rho) - V_{\pi,r,\PPP,\gamma}(\rho)
	\leq C \frac{|\SSS|^2|\AAA|^2}{(1-\gamma)^2}\log(i/\delta)^{5/2} \frac{1}{i^{1/6}}
	\norm{\frac{d_\rho^{\pi^*,\gamma}}{\mu}}_\infty^2,
	\]
	and this is sufficient to prove Theorem 3.
\end{proof}

\section{Experiments}\label{appendix:experiments}

We tested the exploration behaviour of CE with two different optmizers: \REINFORCE and \texttt{TRPO}. We implemented them as Python functions that take an MDP, a reset model and a number of timesteps as input, and return the nearly-optimal policy obtained. Those are two extremes of \texttt{opt} function that can be paired which CE. The first one satisfies good theoretical convergence guarantees \cite{ZKOSER}, the second is one of the best empirical RL solver as of today. The original \texttt{TRPO} implementation is taken from \cite{KosPIT}. The source code for experiments is available as a Jupyter Notebook in supplementary material, and will be soon available as a GitHub repository.

We chose two MDPs with a very challenging exploration because ot their sparse rewards: the \emph{Consecutive Crossroads Traps} CCT \cite[Figure 2]{AKLTPG} and the \emph{Diabolical Combination Lock} DCL \cite[Figure 2]{AHKPPC}. We implemented them as superclasses of OpenAI Gym \cite{OpeGYM}, using a Python framework \texttt{blackhc.mdp} for creating custom MDPs \cite{KirMDP}. For both of them, we used three different depths of 5, 10 and 20, and we refer to them as CCT5, CCT10, CCT20, DCL5, DCL10, DCL20. Notice that for exploration, depth 5 is very easy while depth 20 is very challenging. For a given depth, DCL is harder that CCT, because the locked path in DCL makes impossible to recover from mistakes, see Figures Figure \ref{fig:CCT} and Figure \ref{fig:DCL}.

In Section \ref{sec:expeff} we describe the exploration efficiency of \texttt{CE($s_0$,\REINFORCE)} and \texttt{CE($s_0$,\texttt{TRPO})}, and in Section \ref{sec:impeff} we describe the learning performance of \REINFORCE and \texttt{TRPO} when the corresponding CE output is used as a reset model. For each MDP among CCT5, CCT10, CCT20, DCL5, DCL10, DCL20, we provide graphs of the evolution of the visit distribution (pages \pageref{page:evolution_reinforcecct} through \pageref{page:scatterplots_trpodcl}) and of the learning performance (pages \pageref{page:COMP_reinforcecct} through \pageref{page:COMP_trpodcl}).


%

\subsection{MDPs and hyperparameters}

The MDP pictured in Figure \ref{fig:CCT} is called \emph{Consecutive Crossroad Traps} CCT. It is a challenging MDP for exploration because at every state only one out of four actions takes one step forward, closer to the reward appearing at the terminal state, while every other action takes one step back.
CCT is an example of MDP where sample based estimates of gradients will be zero under random exploration, because the probability of reaching the terminal state is exponentially small in the depth $d$ \cite[Remark 4.1]{AKLTPG}. This makes CCT a challenging MDP for large $d$. In the experiments we used CCT of depth 5 (very easy), 10 and 20 (hard).

The MDP pictured in Figure \ref{fig:DCL} is called \emph{Diabolical Combination Lock} DCL. There are 3 paths called A, B and locked. Trajectories have constant length $d+1$, because every action goes exactly one step forward. Reward is $1$ in the terminal states of path A and path B, and 0 otherwise. DCL is a stochastic MDP, with transitions shown in Figure \ref{fig:DCL}. Since two out of four actions take to the locked path, a random policy has probability $1/2$ of getting trapped at each step, so has probability $(1/2)^{d+1}$ of obtaining non-zero reward, where $d$ is the depth of DCL (usually the first step is not considered in the depth of DCL). When $d>>0$, the reward becomes very sparse and the exploration very hard. In the experiments we used DCL of depth 5 (easy), 10 and 20 (very hard).

We used a discount factor $\gamma=0.95$ and a linear schedule $\beta_n:=\beta\cdot n$, in order to simulate a distribution (the average of the $\mu_n$) that is pointwise greater than $\beta$. We fix $\beta:=\frac{1}{2|\SSS|}$, same order of magnitude as the uniform distribution. We iterate CE for $d$ steps, thus producing as many exploratory policies as the depth of the MDP. While clearly the more steps the better, choosing steps of the same order of magnitude as the number of states seems reasonable when looking at the inequality in Theorem 1 divided by $N$. In fact, the choice $N\sim|\SSS|$ spreads the exploitative factor error term among all states, and so the error that remains has the same order of magnitude as the the maximum visit of one state, and is therefore hopefully negligible.

\subsection{Exploration efficiency of \texttt{CE($s_0$,opt)}}\label{sec:expeff}


Exploration efficiency of \texttt{CE($s_0$,opt)} can be understood by looking at the evolution of the simulated $\mu_n$-reset models, for $n=0,\dots,d-1$, see figures on page \pageref{page:evolution_reinforcecct} through \pageref{page:evolution_trpodcl}. In the spirit of the image blurring analogy, we color the poorly visited states $\KKK$ from green for the most visited (lighter pixel) to red for the less visited (darker pixel), proportionally to the visit distribution value. States in $\SSS-\KKK$ are green.
The explorations are satisfactory, with the exception of the exploration of \texttt{CE($s_0$,\REINFORCE)} on CCT20 (end of page \pageref{page:evolution_reinforcecct}). Since CCT20 is simpler than DCL20, the expectation would be to observe a more successful exploration. However, the low performance may be explained by the higher simplicity: indeed, starting from states other than $s_0$ does not help much, while it makes learning from $s_0$ more challenging.


Pages \pageref{page:evolution_reinforcecct} through \pageref{page:evolution_trpodcl} show only few snapshots of the whole sequence of policies $\mu_0,\mu_1\dots,\mu_{d-1}$. We included in supplementary material \texttt{exploration.mp4}, a video showing all exploration snapshots, each every second, starting from the simple CCT5 up to DCL20 for both \REINFORCE and \texttt{TRPO}.

Pages \pageref{page:scatterplots_reinforcecct} through \pageref{page:scatterplots_trpodcl} show a different representation of exploration evolution for $\{\mu_n\}_{n=0,\dots,d-1}$: scatterplots with states $s$ on the $x$-axis and values of $\mu_n(s)$ on the $y$-axis. Steps $n$ are represented with colors, ranging from red ($n=0$) to blue ($n=d-1$), and a green line shows the uniform distribution for which $\mu(s)=1/|\SSS|$. Since $\beta=1/(2|\SSS|)$, poorly visited states are below half the green line. The evolution of each state, from red to blue, should approach half the green line from below, because exploration points towards poorly visited states, and from above, because in that case the average of the visits will decrease. Thus, ideally blue points should be as close to half the green line as possible. Again, exploration of \texttt{CE($s_0$,\REINFORCE)} on CCT20 appears to be challenging (end of page \pageref{page:scatterplots_reinforcecct}).

\subsection{Learning performance of \texttt{opt} with \texttt{CE($s_0$,opt)}}\label{sec:impeff}

The optimizer \texttt{opt} (\REINFORCE or \texttt{TRPO}) can be used on CCT or DCL with or without the simulated \texttt{CE($s_0$,opt)}-reset model. To measure whether the performance of \texttt{opt} improves when exploration is done via the \texttt{CE($s_0$,opt)}-reset model, we look at the return, averaged over episodes. Notice that we use a discounted return for CCT, where episodes can be of different lengths, while we use the undiscounted return for DCL, where episodes are of constant length $d+1$. Results shown on pages \pageref{page:COMP_reinforcecct} through \pageref{page:COMP_trpodcl} are an average made over 10 different independent runs. The solid line is the average, the light area is one standard deviation.

We point out that since we switched from discounted return to \emph{undiscounted} return for performance comparisons on DCL, Figure 3 at page 9 of the submitted paper is here replaced by the last figure at the end of page \pageref{page:COMP_trpodcl}.

On simple tasks like CCT5 and DCL5 the additional exploration provided by \texttt{CE($s_0$,opt)} does not appear to improve the learning phase. Increasing depth to $d=10$, we see a moderate improvement for \REINFORCE on CCT10 (page \pageref{page:COMP_reinforcecct}) and a definite improvement for the other pairings \REINFORCE-DCL10, \texttt{TRPO}-CCT10 and \texttt{TRPO}-DCL10. Further increasing the depth up to $d=20$ gives a big improvement in the learning phase of \texttt{TRPO} on CCT20 and DCL20 (pages \pageref{page:COMP_trpocct}, \pageref{page:COMP_trpodcl}). However, \REINFORCE appears to struggle with both CCT20 and DCL20: no reward is obtained, with or without the help of CE for the exploration (pages \pageref{page:COMP_reinforcecct}, \pageref{page:COMP_reinforcedcl}).



\clearpage


\begin{figure}[ht]
	\centering
	\includegraphics[width=0.5\linewidth]{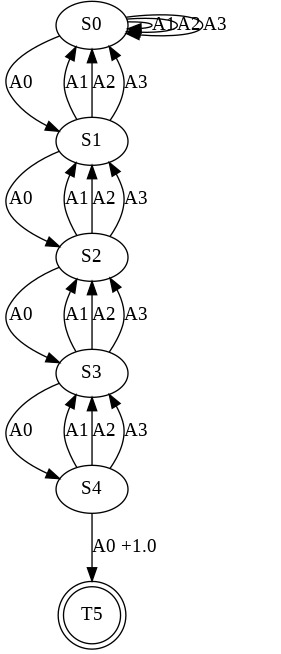}
	\caption{Consecutive Crossroad Traps CCT with depth $d=5$. Reward is 0 except when entering the terminal state \texttt{T5}, where reward $=1$. In all but the terminal state one action takes closer to \texttt{T5}, while every other action takes one step back,  farther from \texttt{T5}. The reward is sparse, making CCT a challenging MDP for exploration issues. We used $d=5,10$ and $20$ in the experiments. From the exploration point of view, $d=5$ is very easy while $d=20$ is challenging.}\label{fig:CCT}
\end{figure}

\clearpage


\begin{figure}[ht]
	\centering 
	\includegraphics[width=\linewidth]{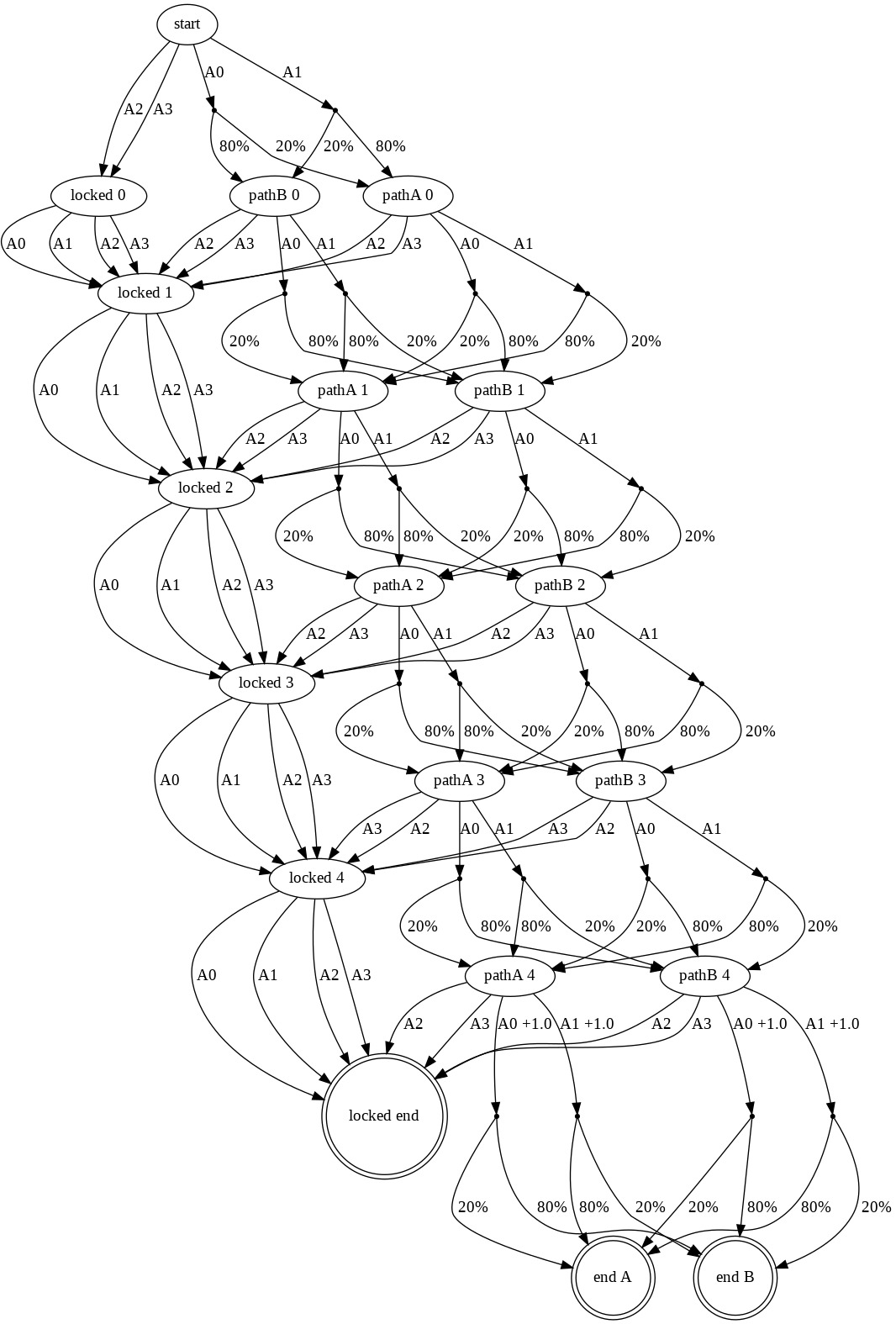}
	\caption{Diabolical Combination Lock DCL with switching probability $0.2$ and depth $d=5$. There are 3 paths called \texttt{pathA}, \texttt{pathB} and \texttt{locked}. The starting state distribution $\rho$ is $s_0=\texttt{start}$, and every episode has constant length $d+1$. Reward is $1$ in \texttt{end A} and \texttt{end B}, and is 0 everywhere else. In particular, there is no reward in the terminal state \texttt{locked end}. For a given depth, DCL is harder that CCT, because the locked path in DCL makes impossible to recover from mistakes. When $d>>0$, the reward becomes very sparse and the exploration very hard. In the experiments we used DCL of depth 5 (easy), 10 and 20 (very hard).}\label{fig:DCL}
\end{figure}

\clearpage


\newcommand{\dirfig}{./img/SFUM-jpg/} 

\begin{tikzpicture}[overlay] 
\label{page:evolution_reinforcecct}
\tikzstyle{format} = [anchor = south]
\path (.5\columnwidth,-.5\textheight) coordinate (center);
\def\vstep{1.12cm}
\def\hstep{4cm}
\def\widthfig{10cm}
\def\vcaption{-0.2cm}
\path (-5cm,1.1cm) coordinate (UL);
\foreach\t in {0,...,10} \foreach\s in {0,...,20} \path (UL) + (\t*\hstep,-\s*\vstep) coordinate (C\t_\s);

\def\scale{0.23}
\path (C1_1) node[scale=\scale]{\includegraphics[width=\widthfig]{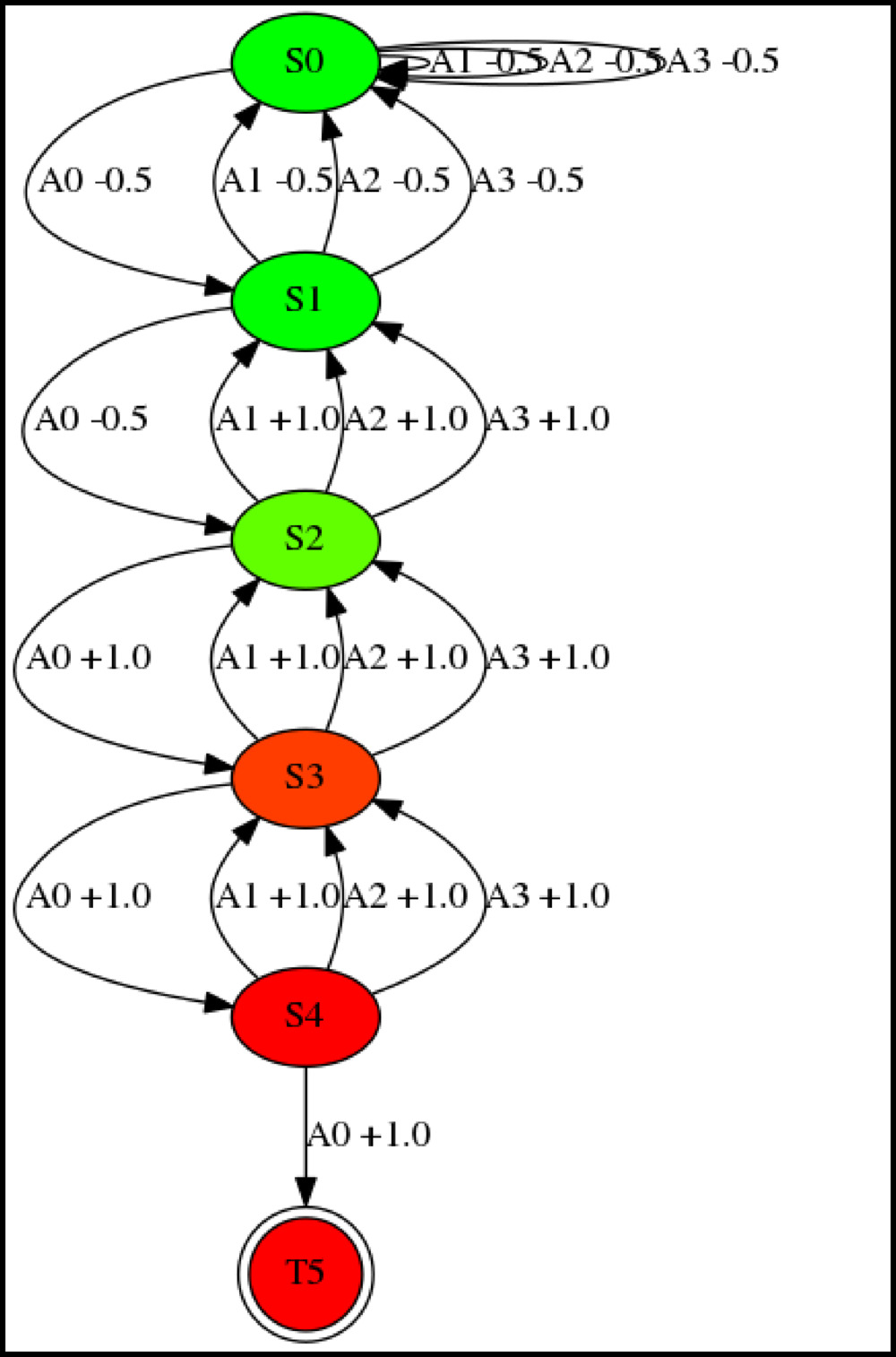}};
\path (C2_1) node[scale=\scale]{\includegraphics[width=\widthfig]{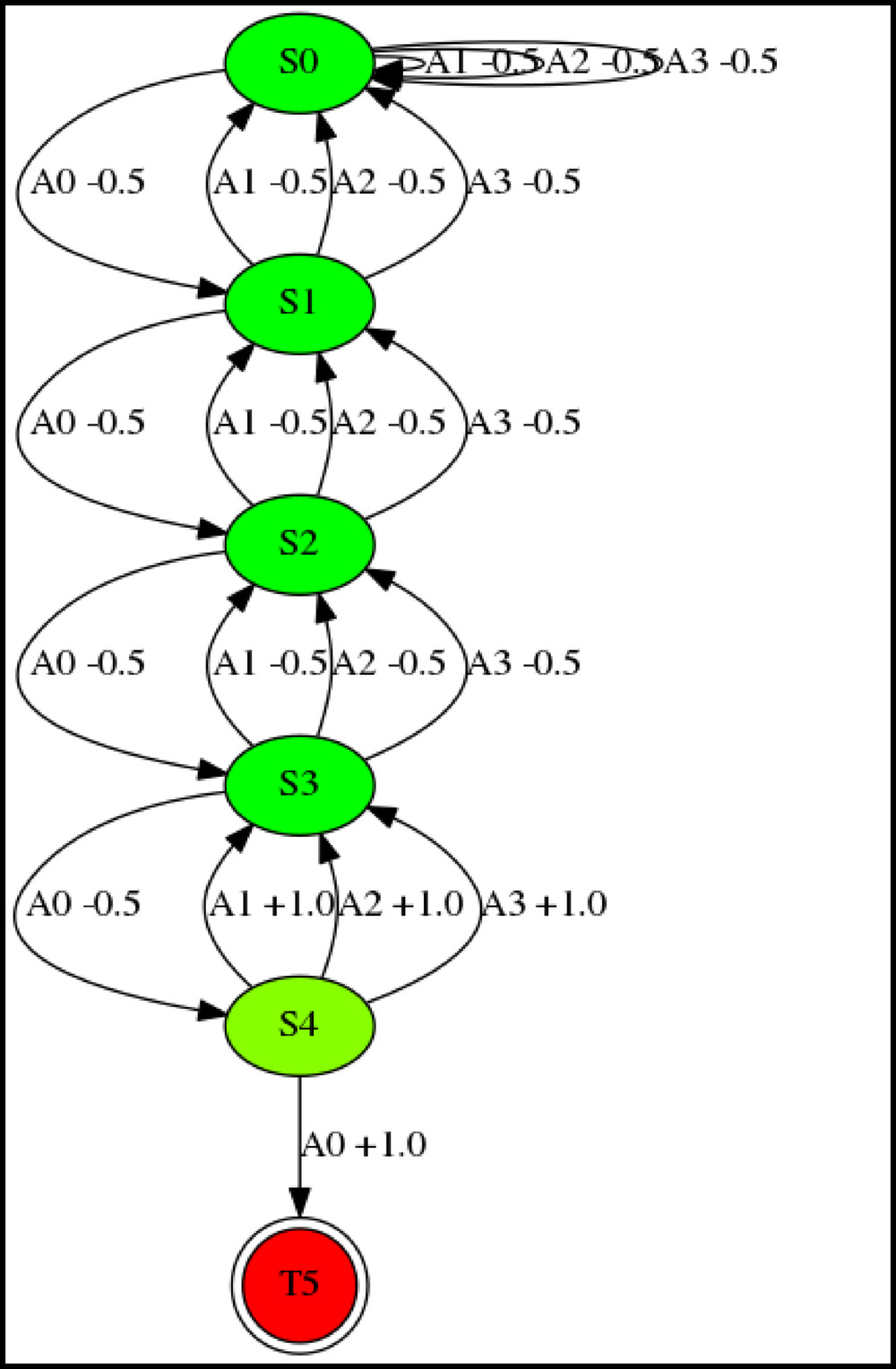}};
\path (C3_1) node[scale=\scale] (central1) {\includegraphics[width=\widthfig]{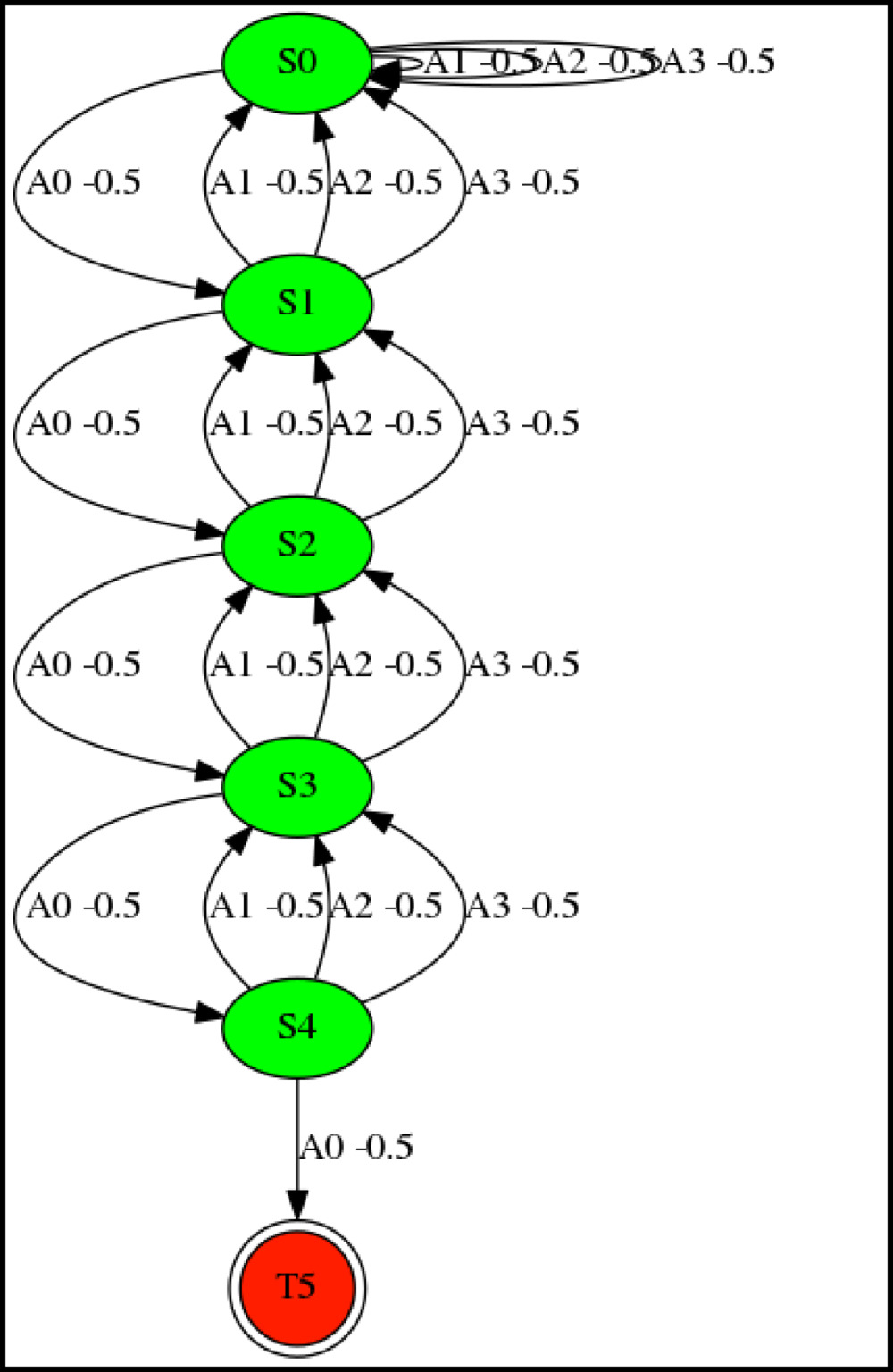}};
\path (C4_1) node[scale=\scale]{\includegraphics[width=\widthfig]{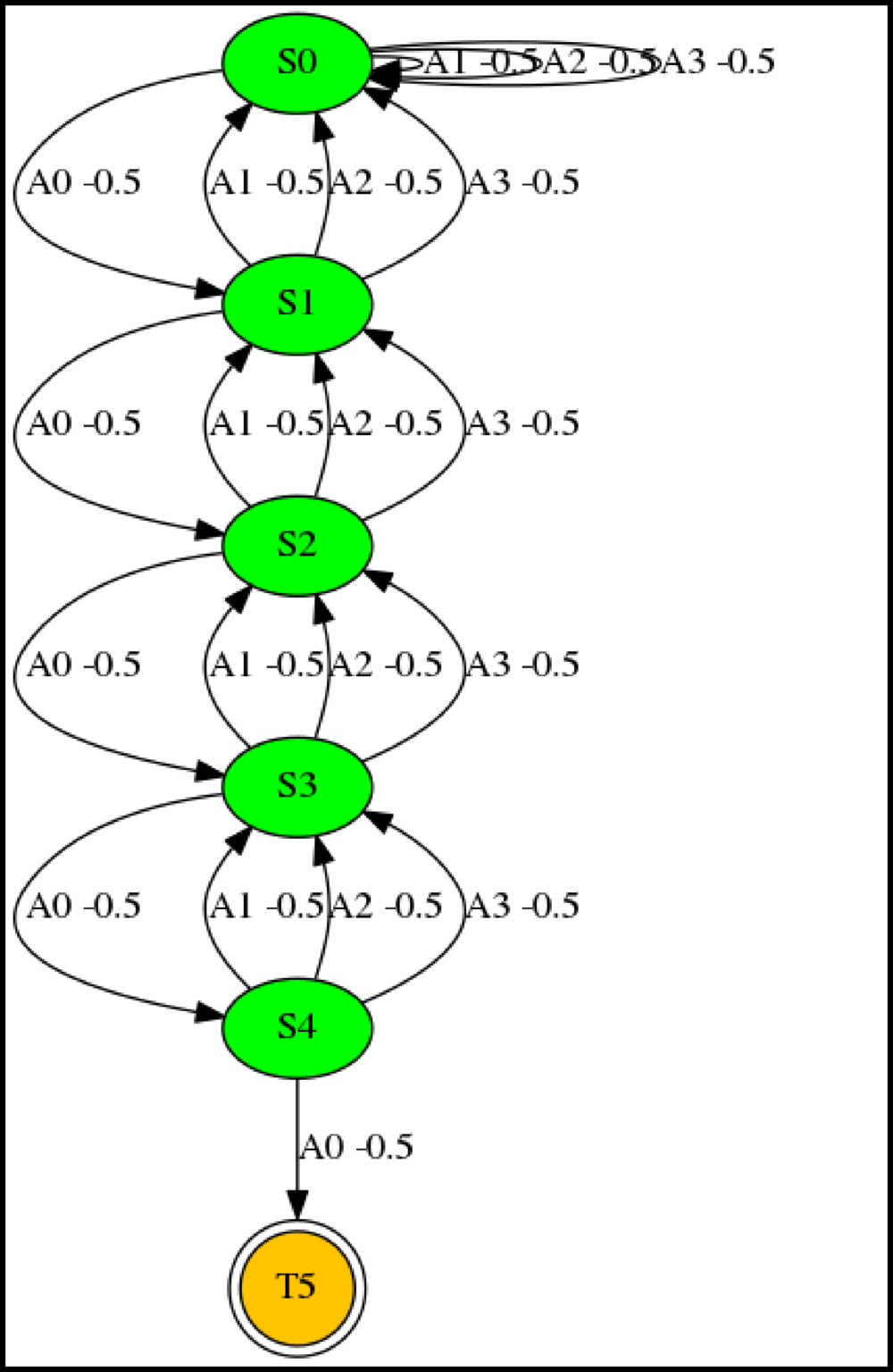}};
\path (C5_1) node[scale=\scale]{\includegraphics[width=\widthfig]{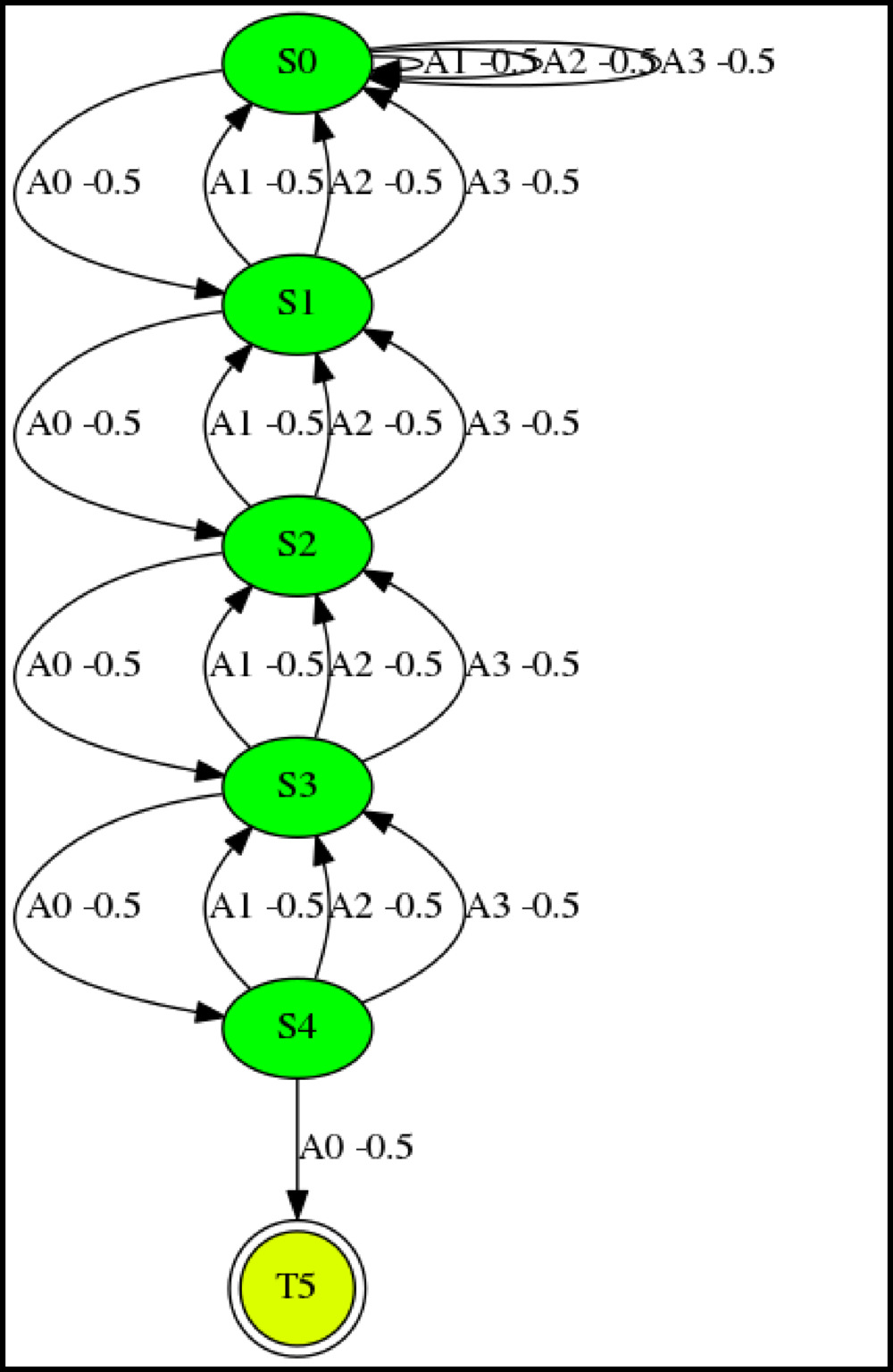}};

\path (C1_7) node[scale=\scale]{\includegraphics[width=\widthfig]{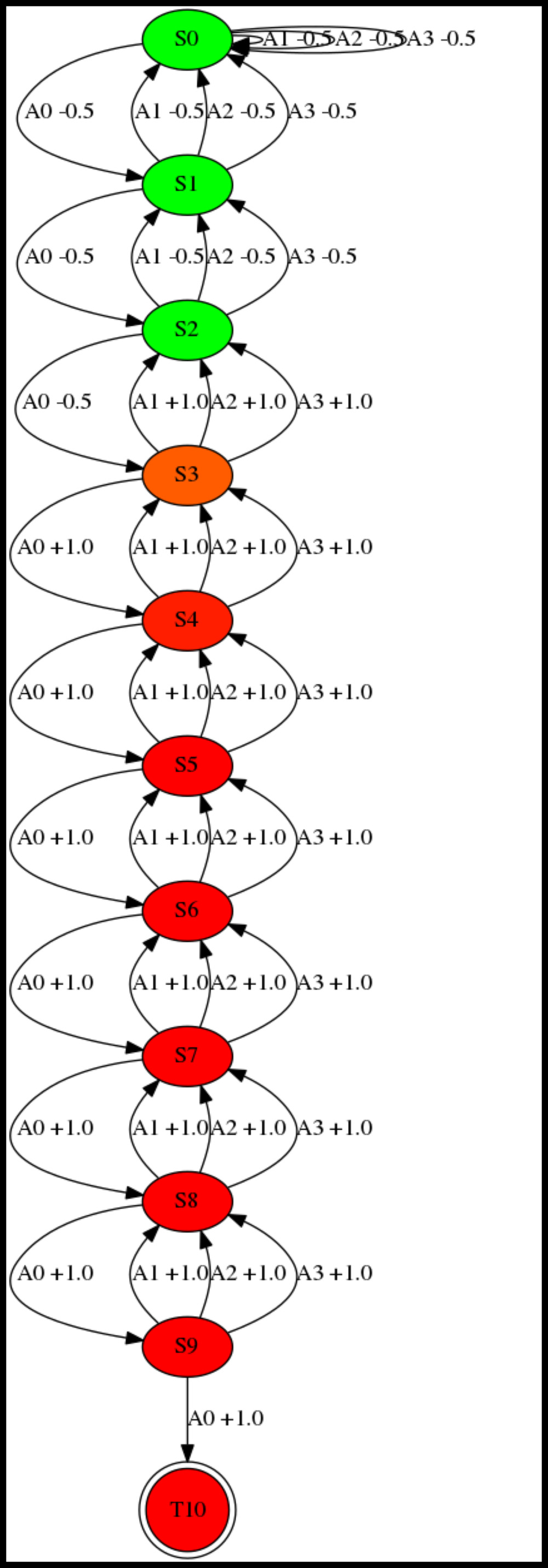}};
\path (C2_7) node[scale=\scale]{\includegraphics[width=\widthfig]{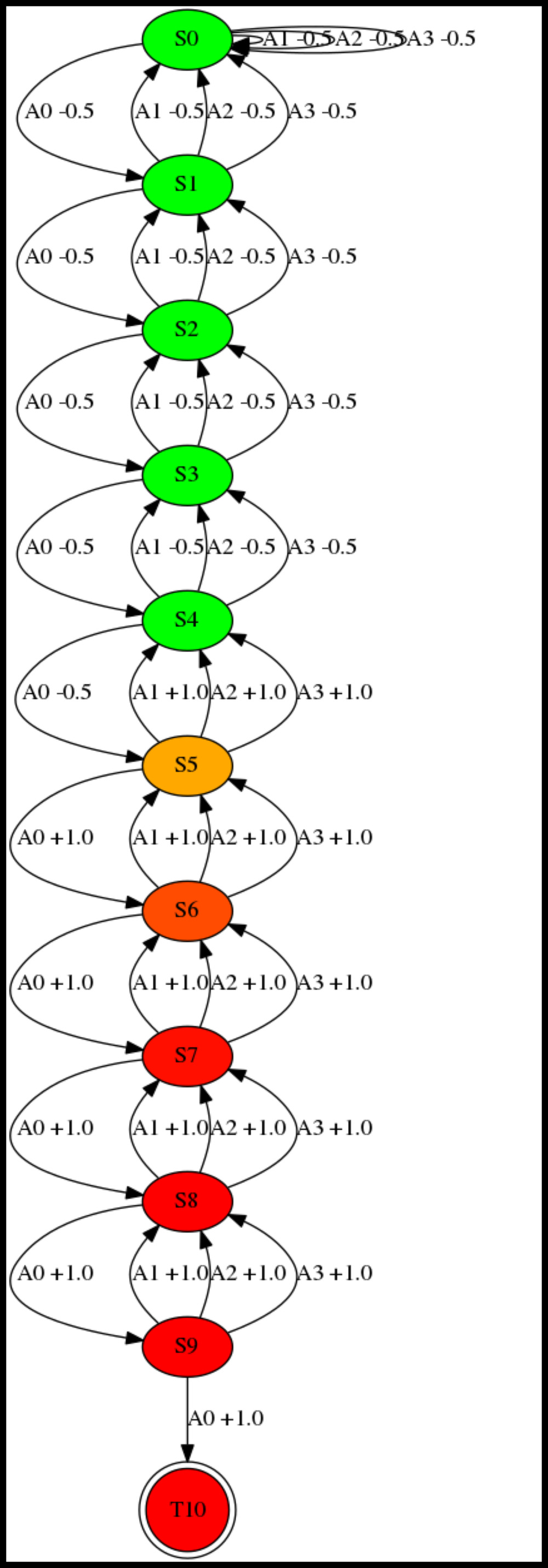}};
\path (C3_7) node[scale=\scale] (central2) {\includegraphics[width=\widthfig]{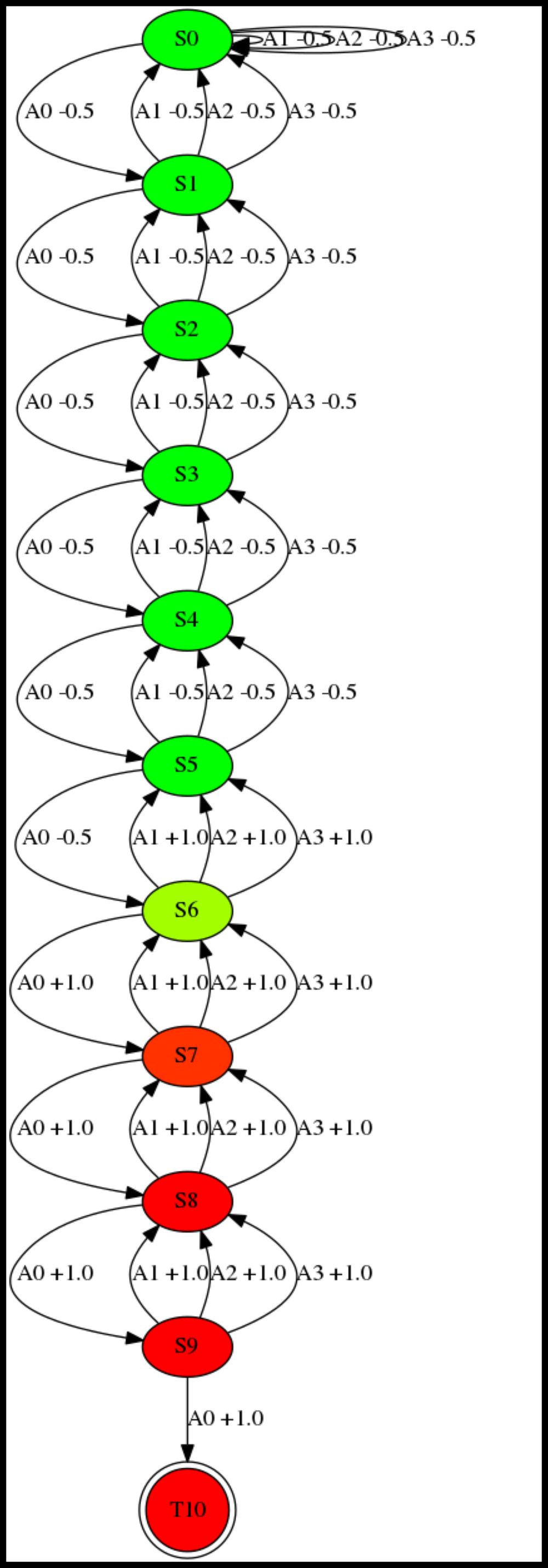}};
\path (C4_7) node[scale=\scale]{\includegraphics[width=\widthfig]{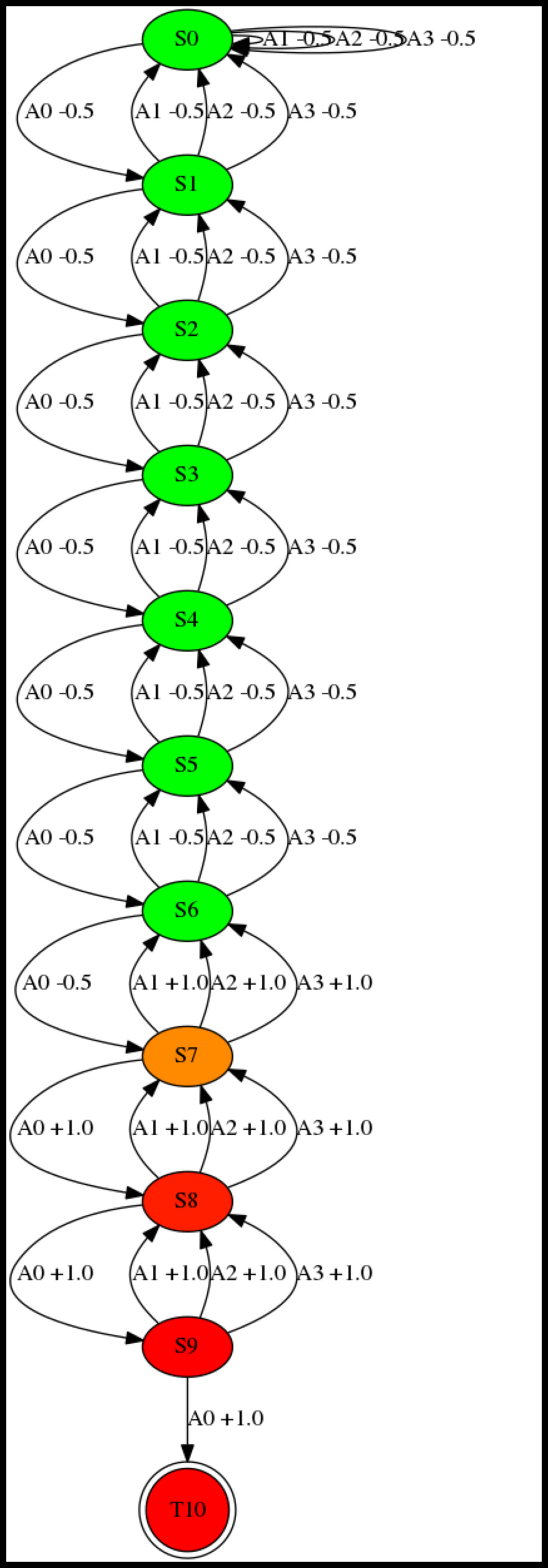}};
\path (C5_7) node[scale=\scale]{\includegraphics[width=\widthfig]{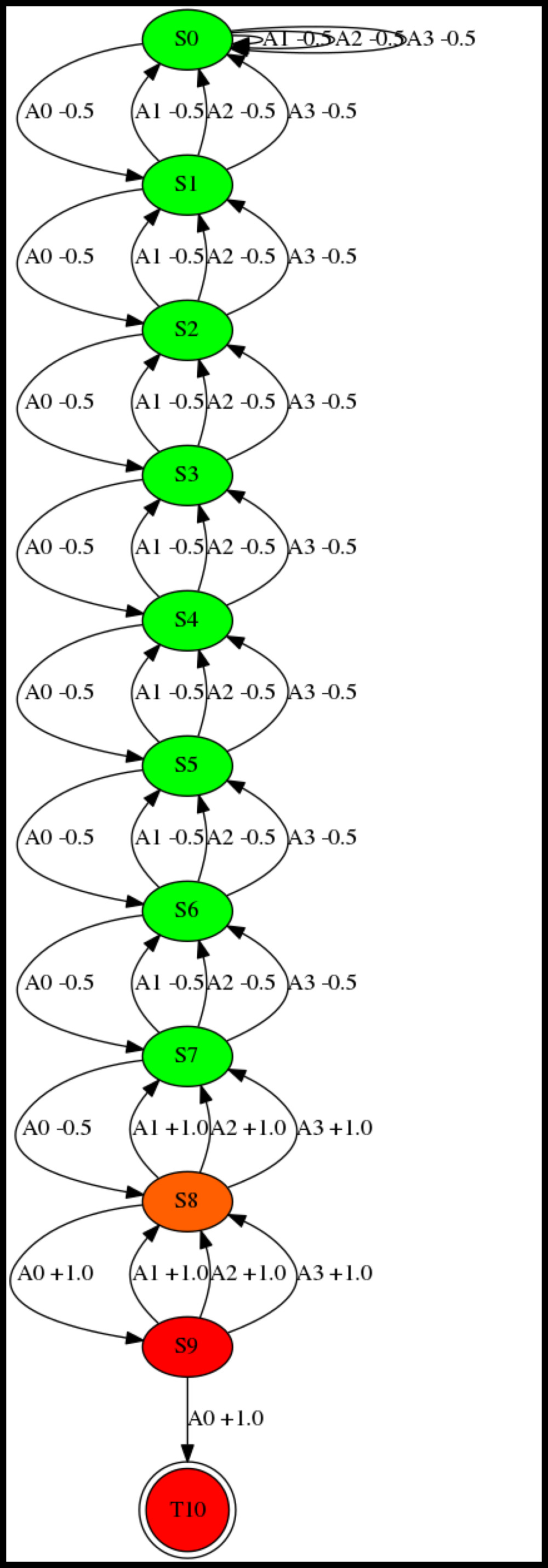}};

\path (C1_17) node[scale=\scale]{\includegraphics[width=\widthfig]{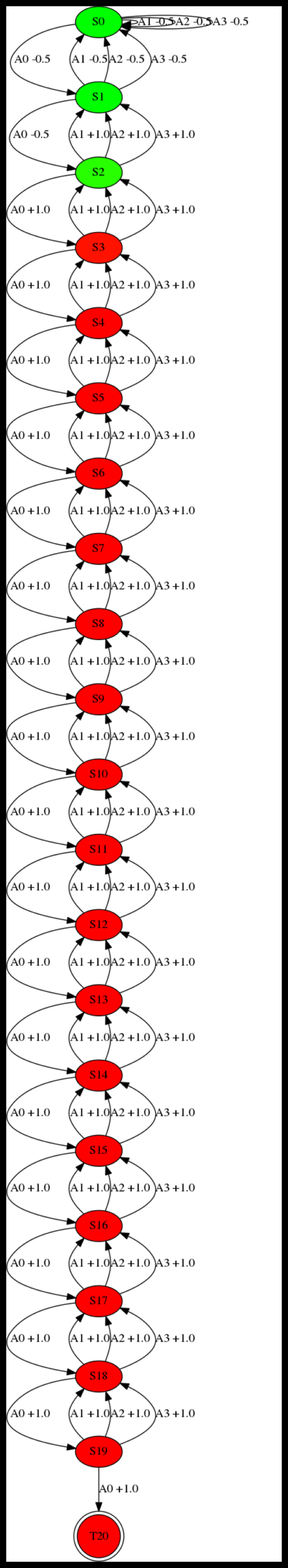}};
\path (C2_17) node[scale=\scale]{\includegraphics[width=\widthfig]{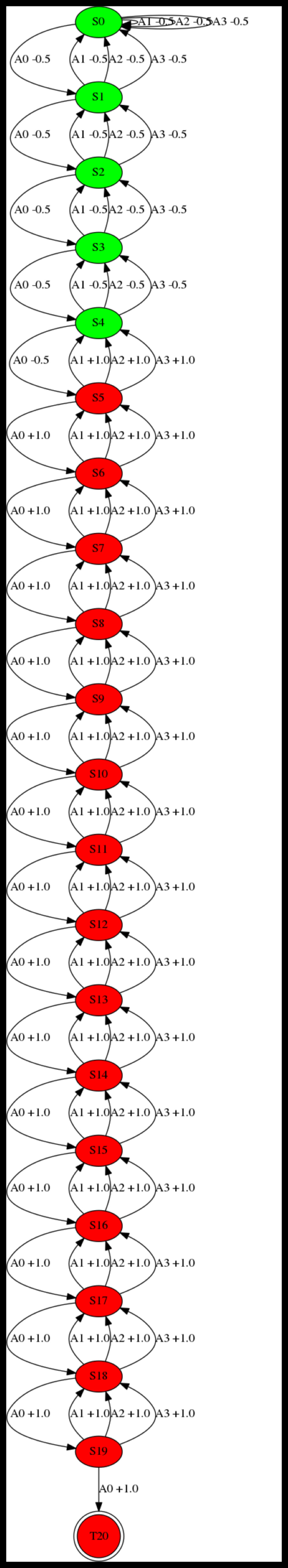}};
\path (C3_17) node[scale=\scale] (central3) {\includegraphics[width=\widthfig]{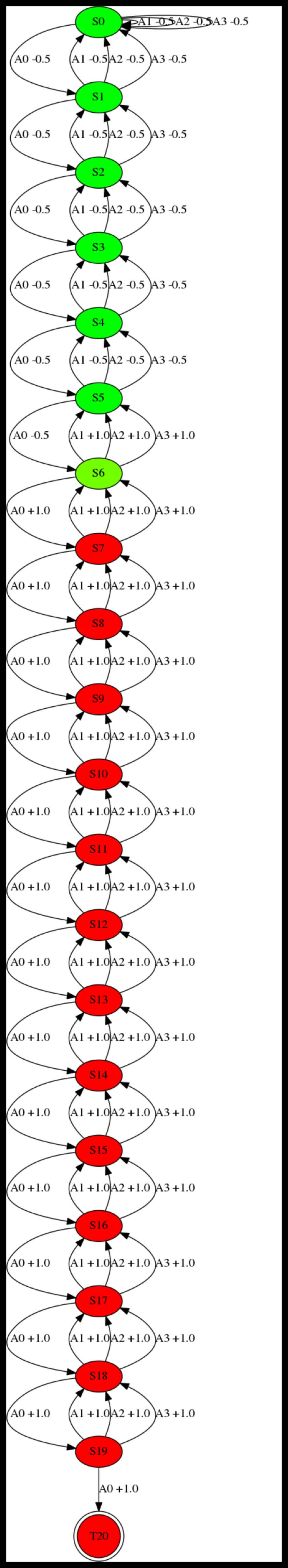}};
\path (C4_17) node[scale=\scale]{\includegraphics[width=\widthfig]{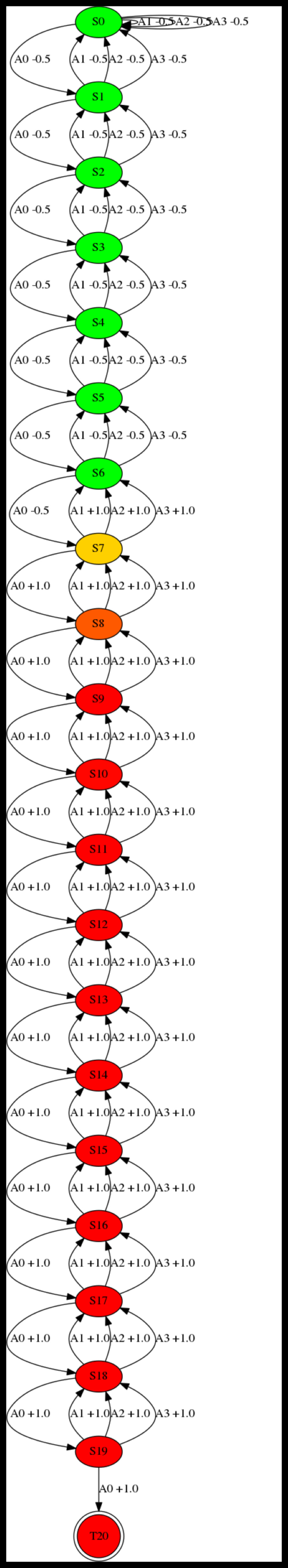}};
\path (C5_17) node[scale=\scale]{\includegraphics[width=\widthfig]{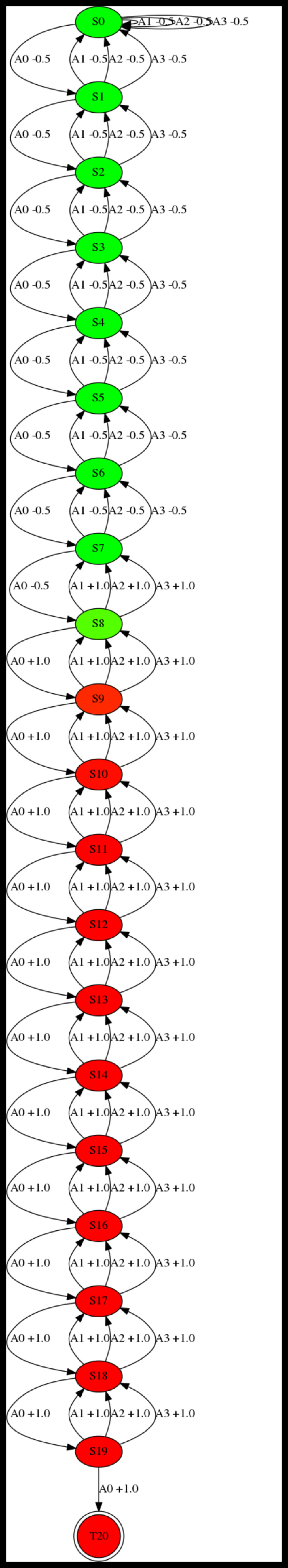}};

\path (central1.north) + (0,-\vcaption) node[format]{\texttt{CE($s_0$,\REINFORCE)} on Consecutive Crossroad Traps, depth $d=5$, iteration $n=1,2,3,4,5$ from left to right.};
\path (central2.north) + (0,-\vcaption) node[format]{\texttt{CE($s_0$,\REINFORCE)} on Consecutive Crossroad Traps, depth $d=10$, iteration $n=1,2,3,4,10$ from left to right.};
\path (central3.north) + (0,-\vcaption) node[format]{\texttt{CE($s_0$,\REINFORCE)} on Consecutive Crossroad Traps, depth $d=20$, iteration $n=1,2,3,6,20$ from left to right.};
\end{tikzpicture}

\clearpage

\begin{tikzpicture}[overlay] 
\label{page:evolution_reinforcedcl}
\tikzstyle{format} = [anchor = south]
\path (.5\columnwidth,-.5\textheight) coordinate (center);
\def\vstep{1.25cm}
\def\hstep{4cm}
\def\widthfig{10cm}
\def\vcaption{-0.1cm}
\path (-5cm,0cm) coordinate (UL);
\foreach\t in {0,...,10} \foreach\s in {0,...,20} \path (UL) + (\t*\hstep,-\s*\vstep) coordinate (C\t_\s);

\def\xscale{0.3}
\def\yscale{0.6}
\path (C1_1) node[xscale=\xscale,yscale=\yscale]{\includegraphics[width=\widthfig]{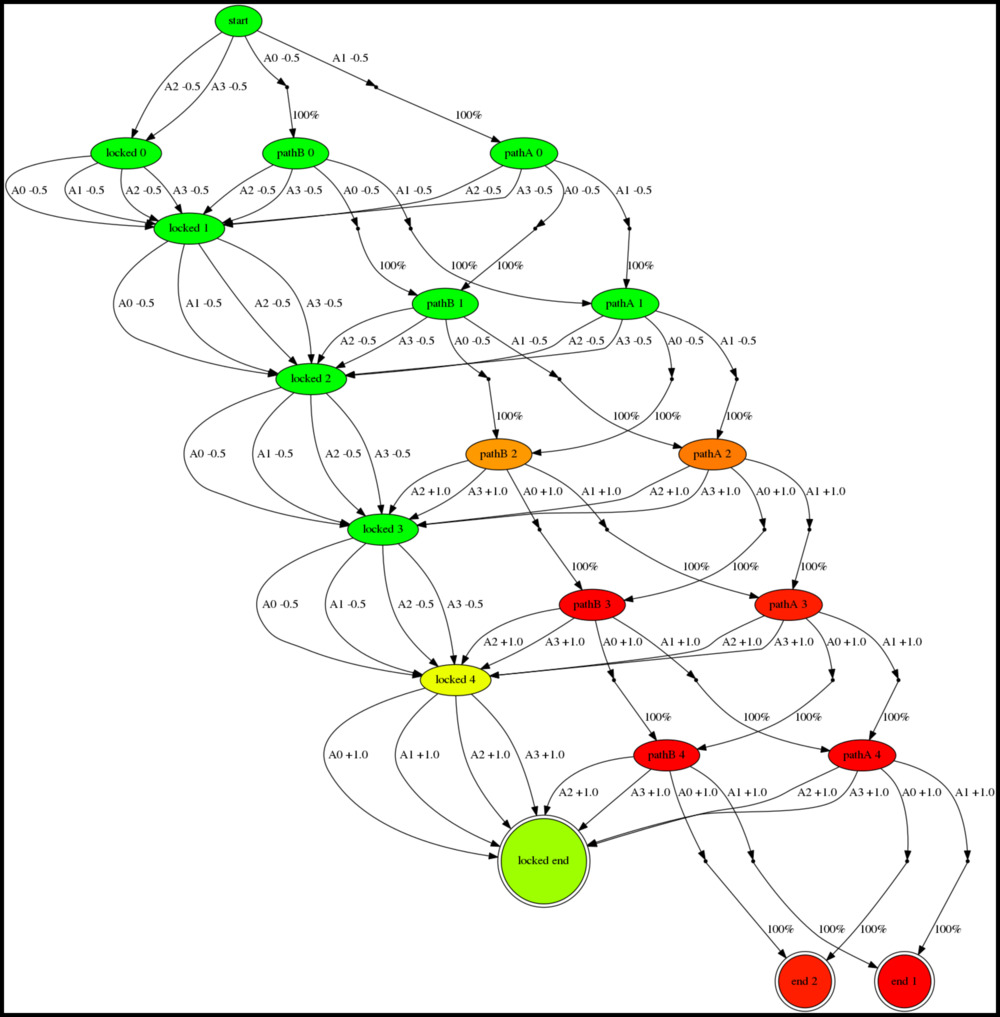}};
\path (C2_1) node[xscale=\xscale,yscale=\yscale]{\includegraphics[width=\widthfig]{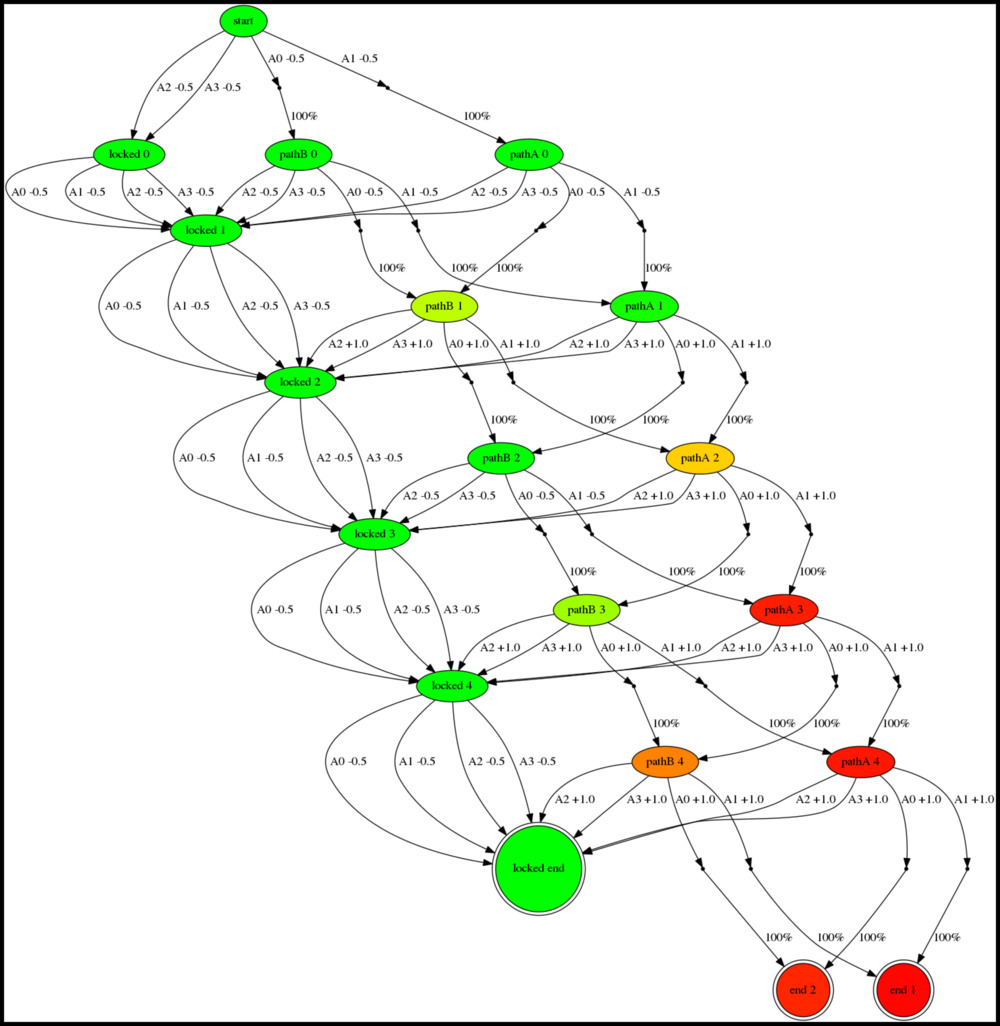}};
\path (C3_1) node[xscale=\xscale,yscale=\yscale] (central1) {\includegraphics[width=\widthfig]{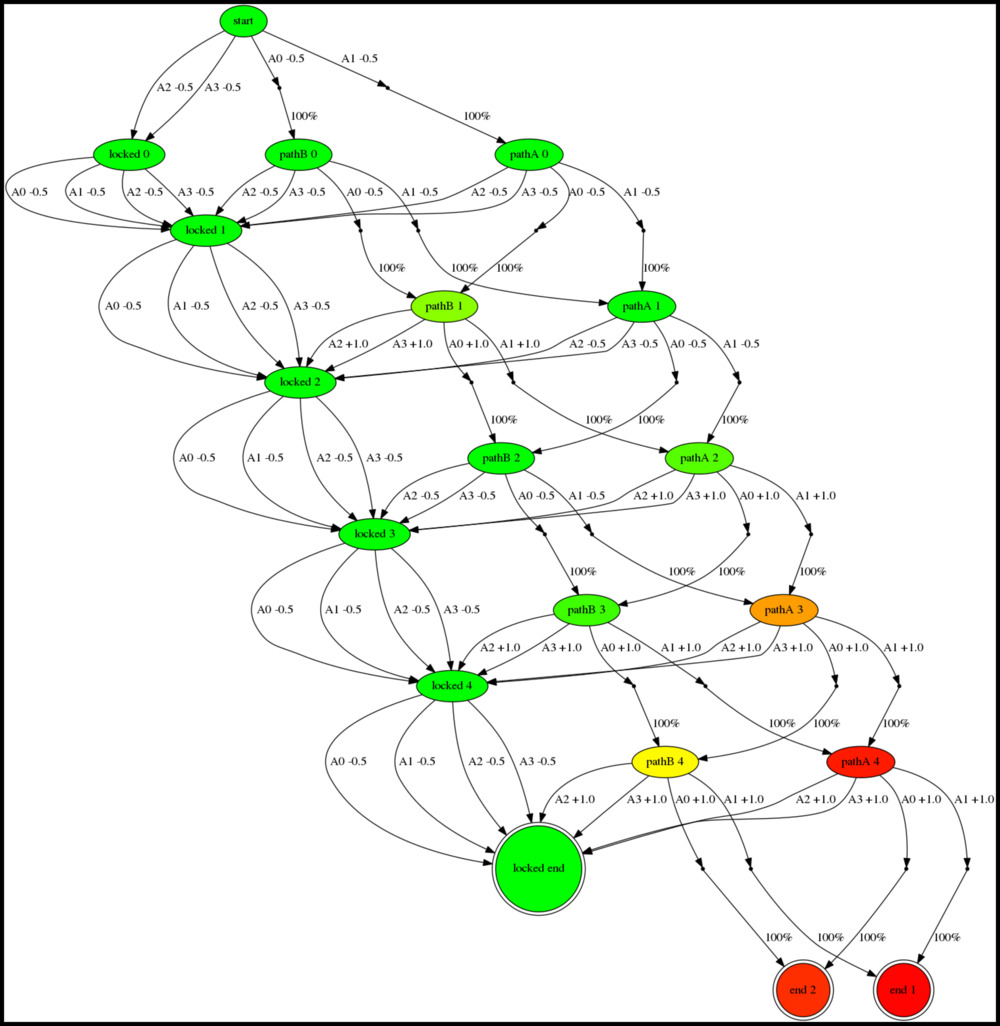}};
\path (C4_1) node[xscale=\xscale,yscale=\yscale]{\includegraphics[width=\widthfig]{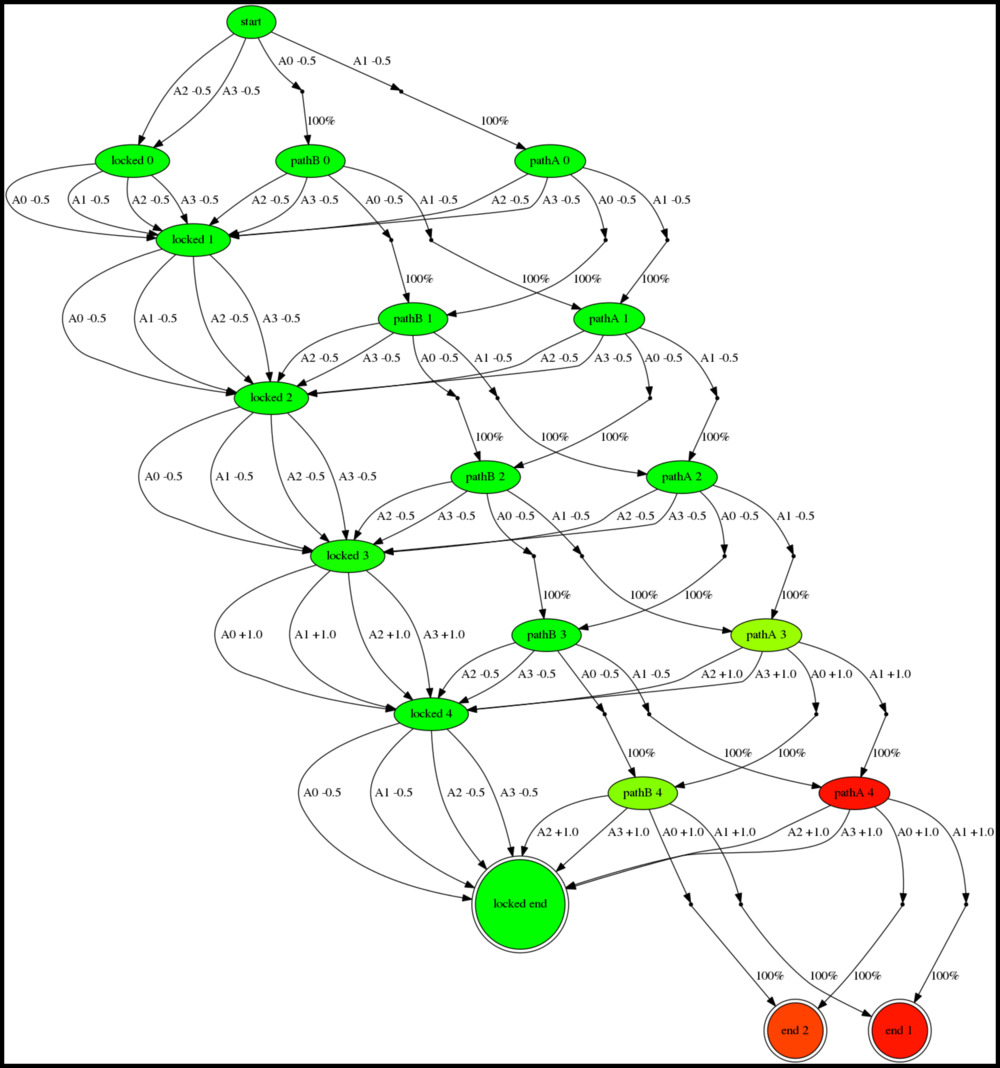}};
\path (C5_1) node[xscale=\xscale,yscale=\yscale]{\includegraphics[width=\widthfig]{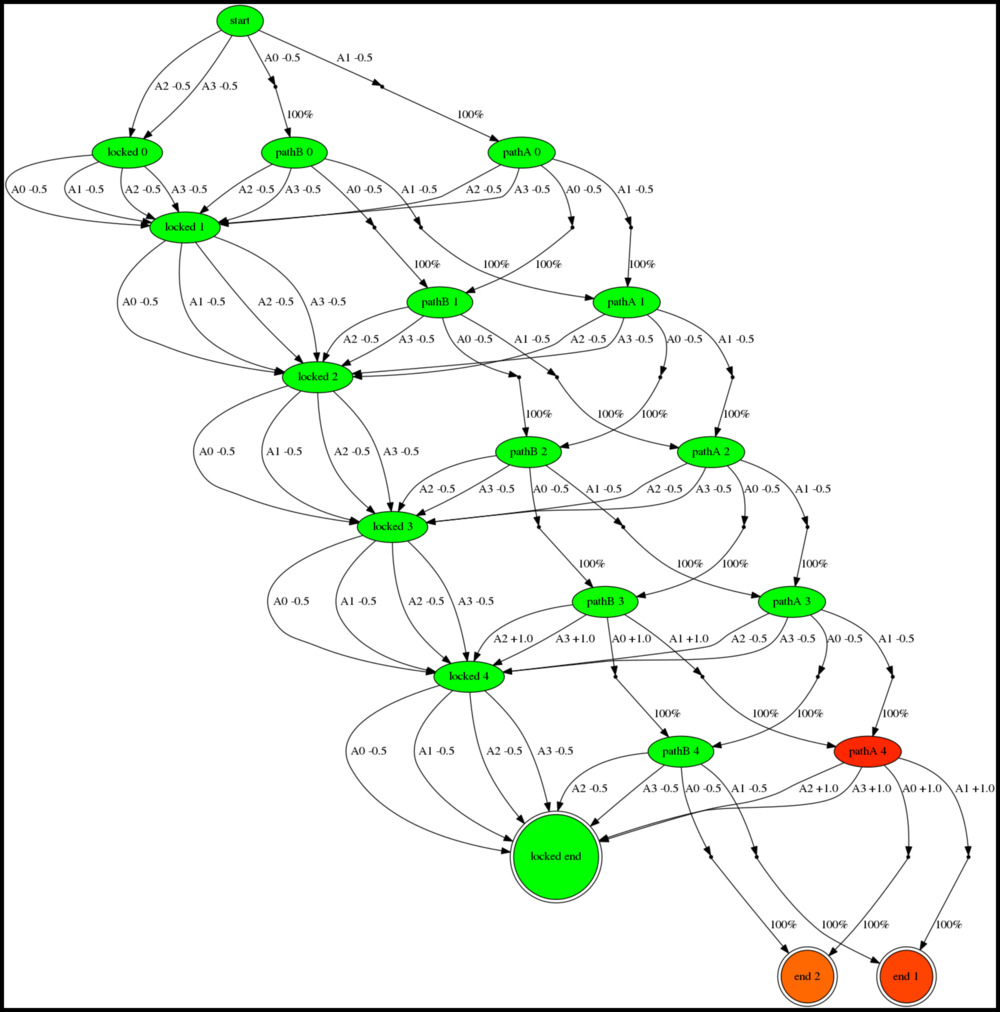}};

\path (C1_8) node[xscale=\xscale,yscale=\yscale]{\includegraphics[width=\widthfig]{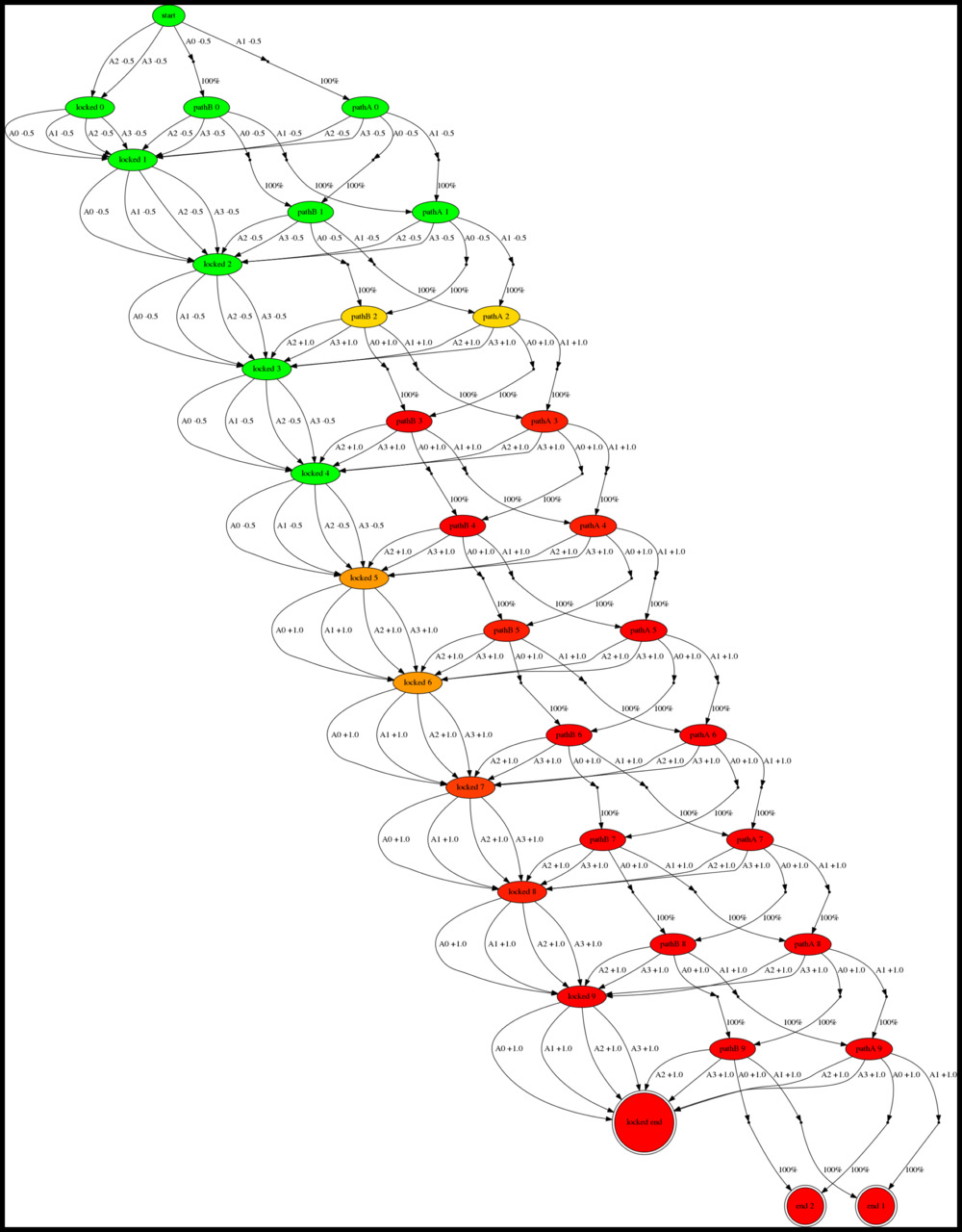}};
\path (C2_8) node[xscale=\xscale,yscale=\yscale]{\includegraphics[width=\widthfig]{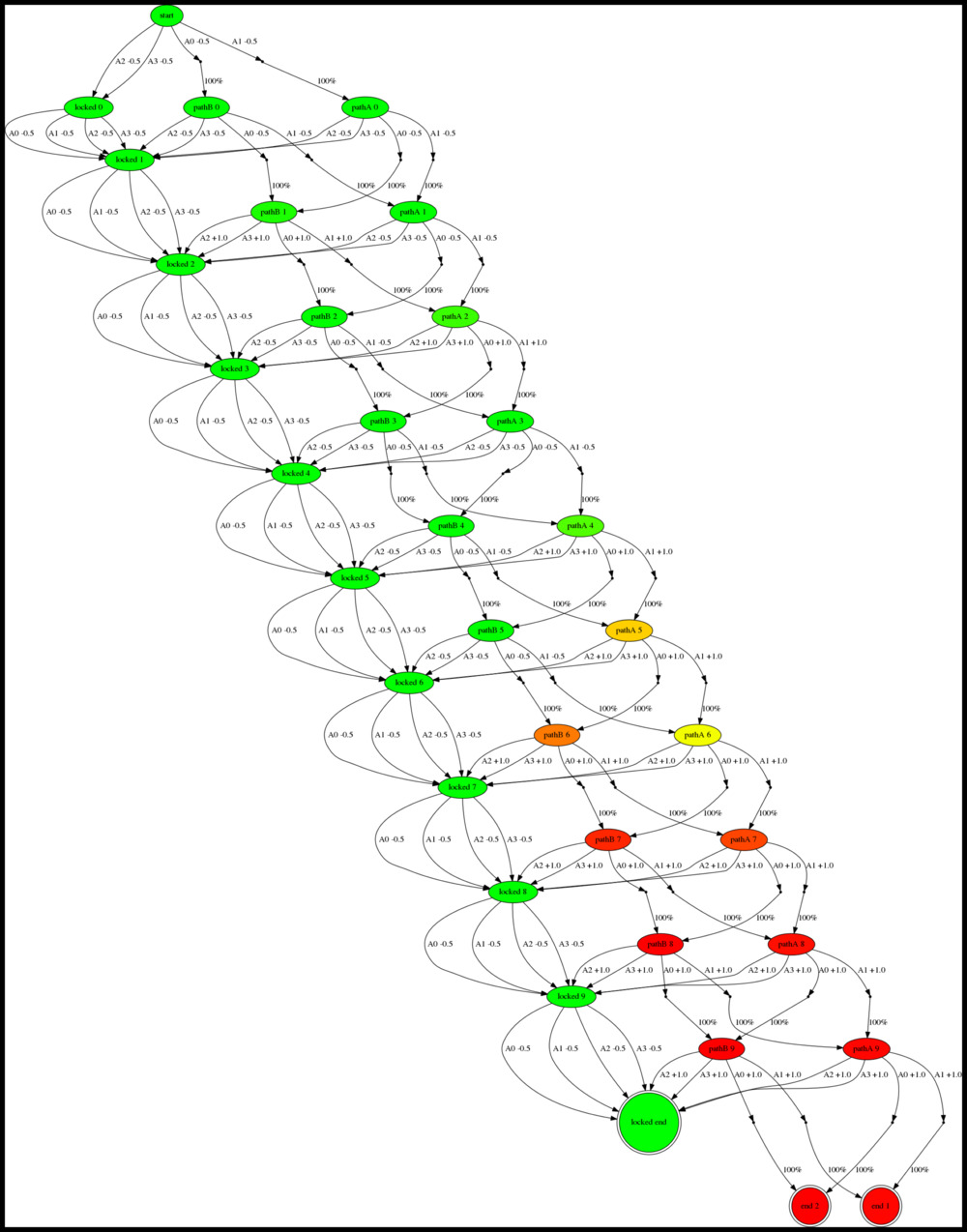}};
\path (C3_8) node[xscale=\xscale,yscale=\yscale] (central2) {\includegraphics[width=\widthfig]{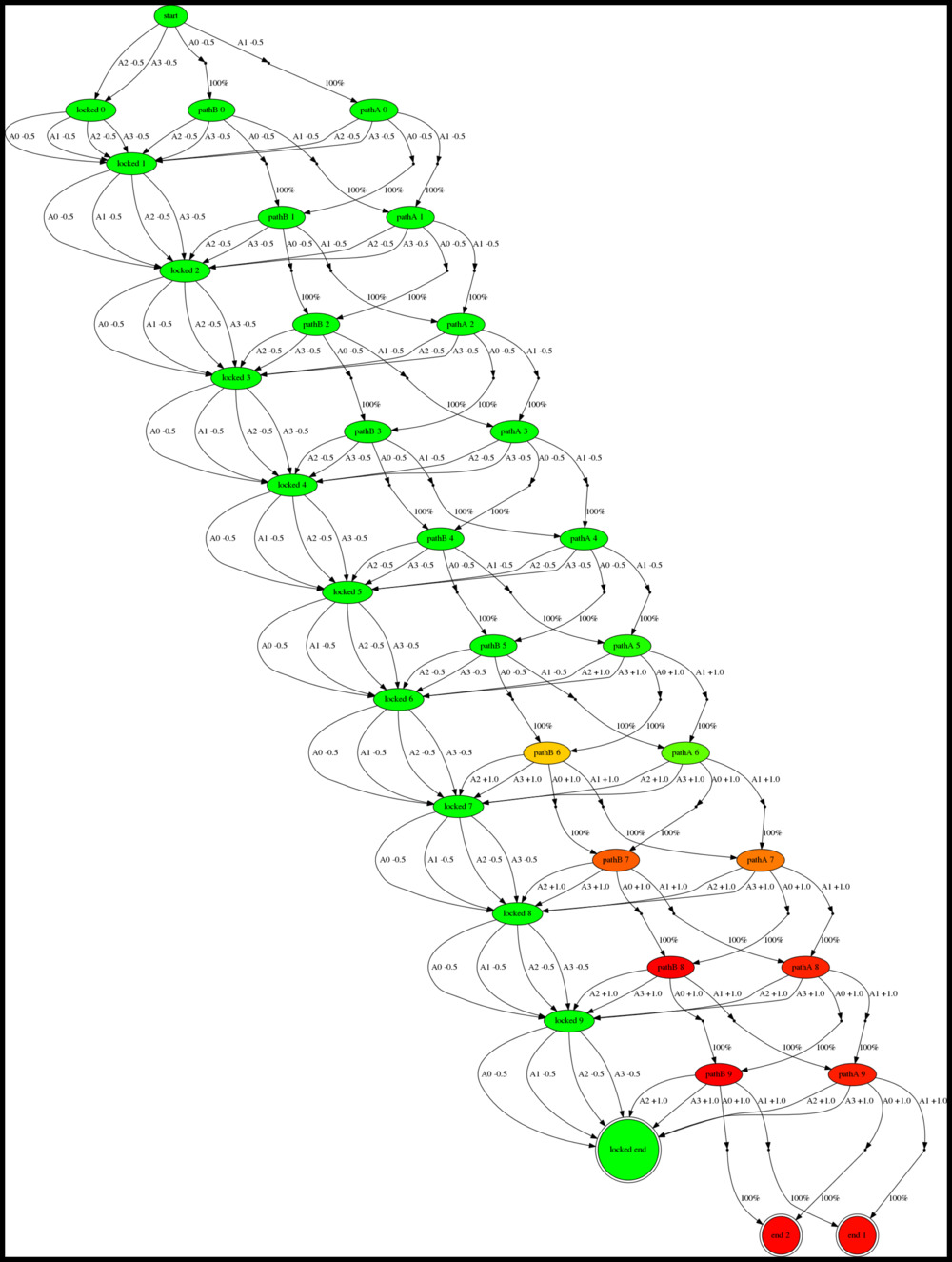}};
\path (C4_8) node[xscale=\xscale,yscale=\yscale]{\includegraphics[width=\widthfig]{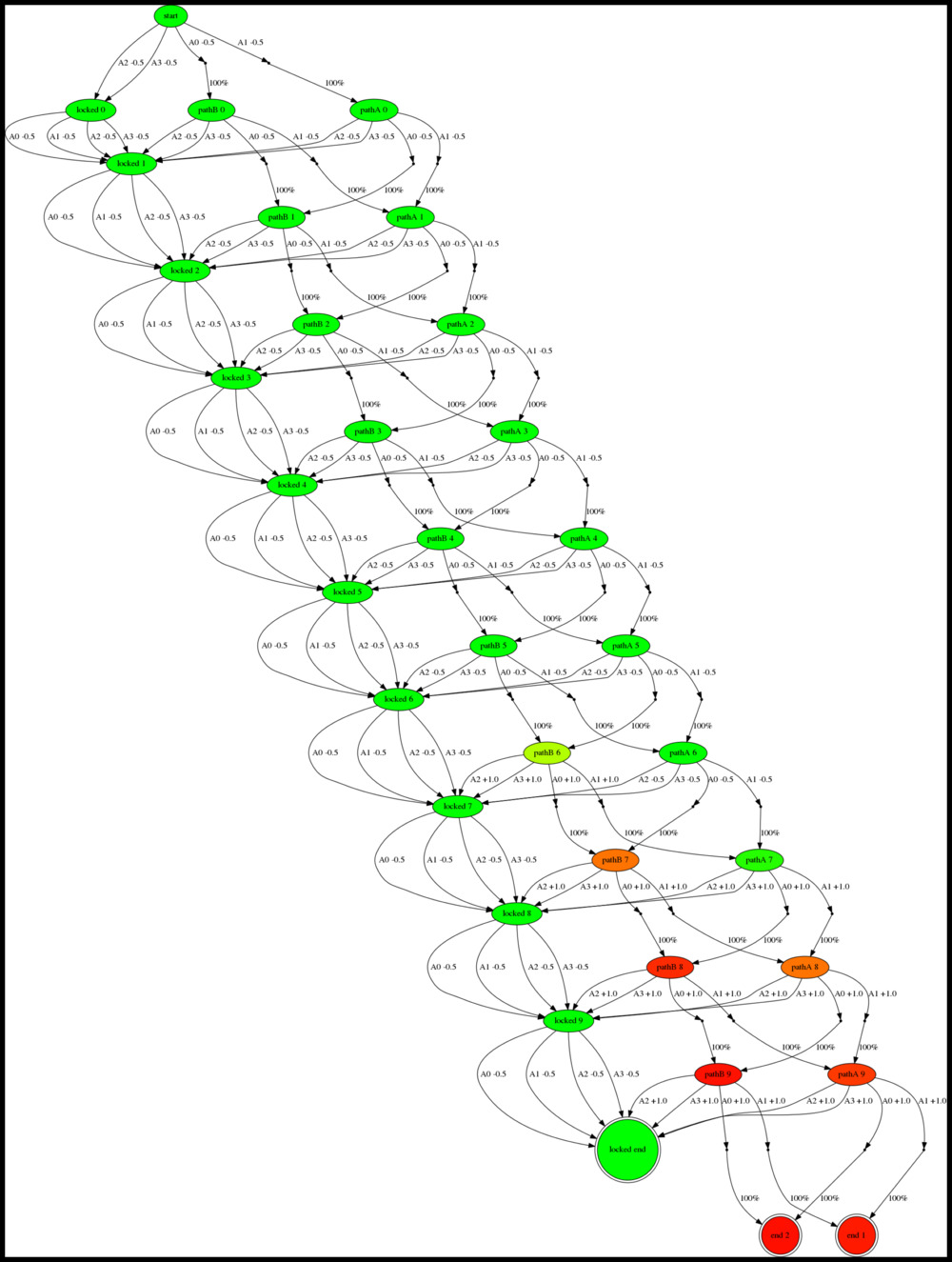}};
\path (C5_8) node[xscale=\xscale,yscale=\yscale]{\includegraphics[width=\widthfig]{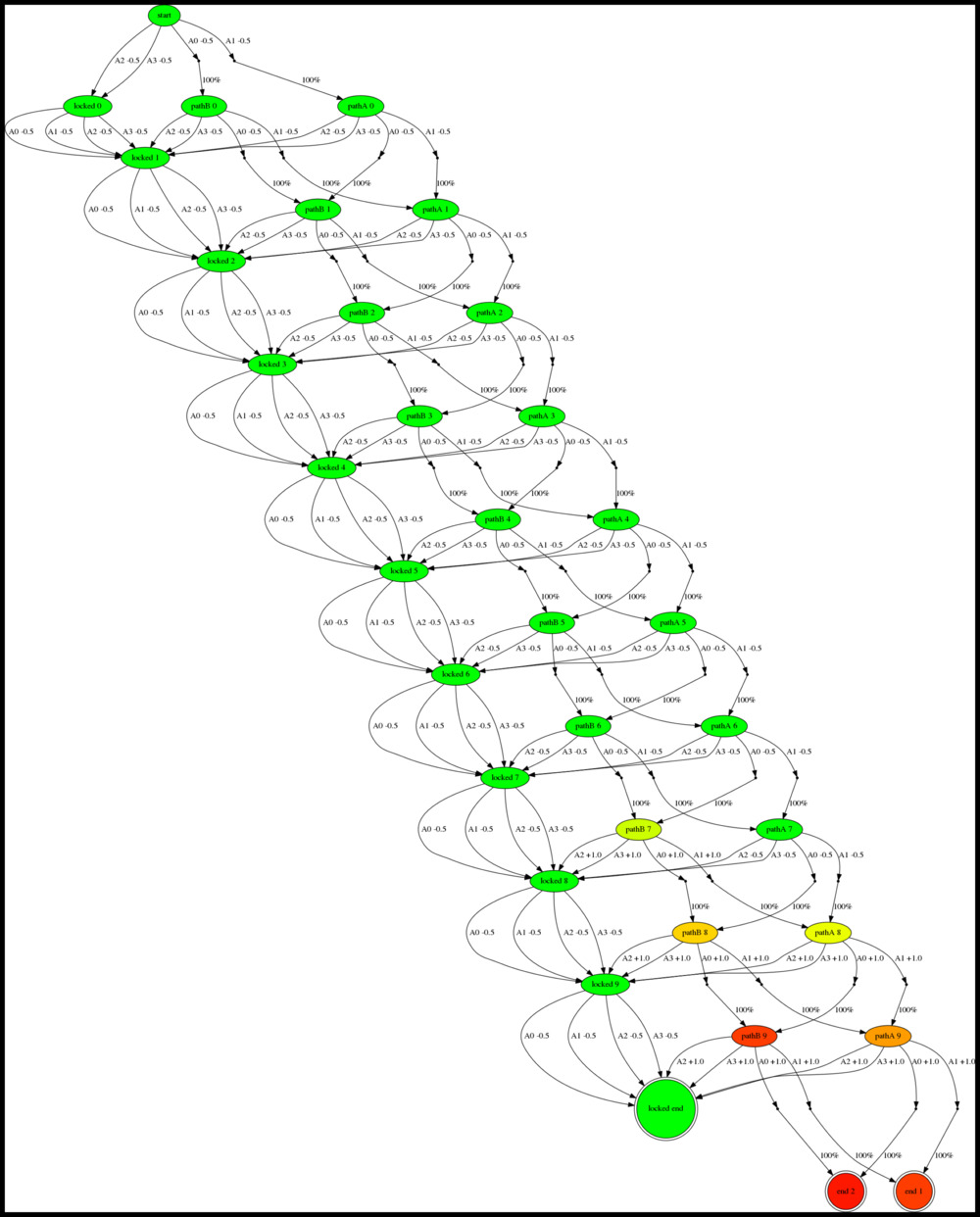}};

\path (C1_16) node[xscale=\xscale,yscale=\yscale]{\includegraphics[width=\widthfig]{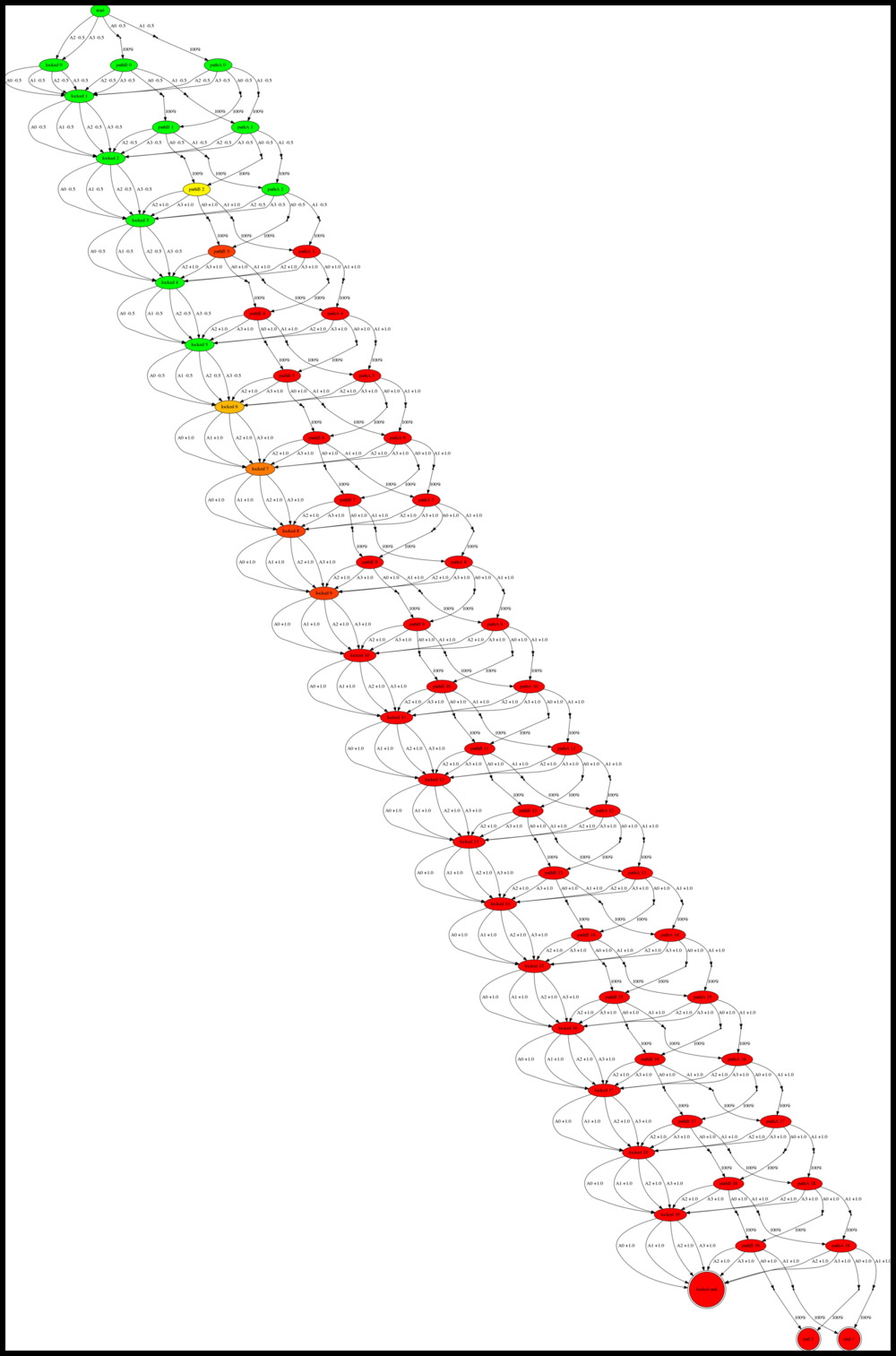}};
\path (C2_16) node[xscale=\xscale,yscale=\yscale]{\includegraphics[width=\widthfig]{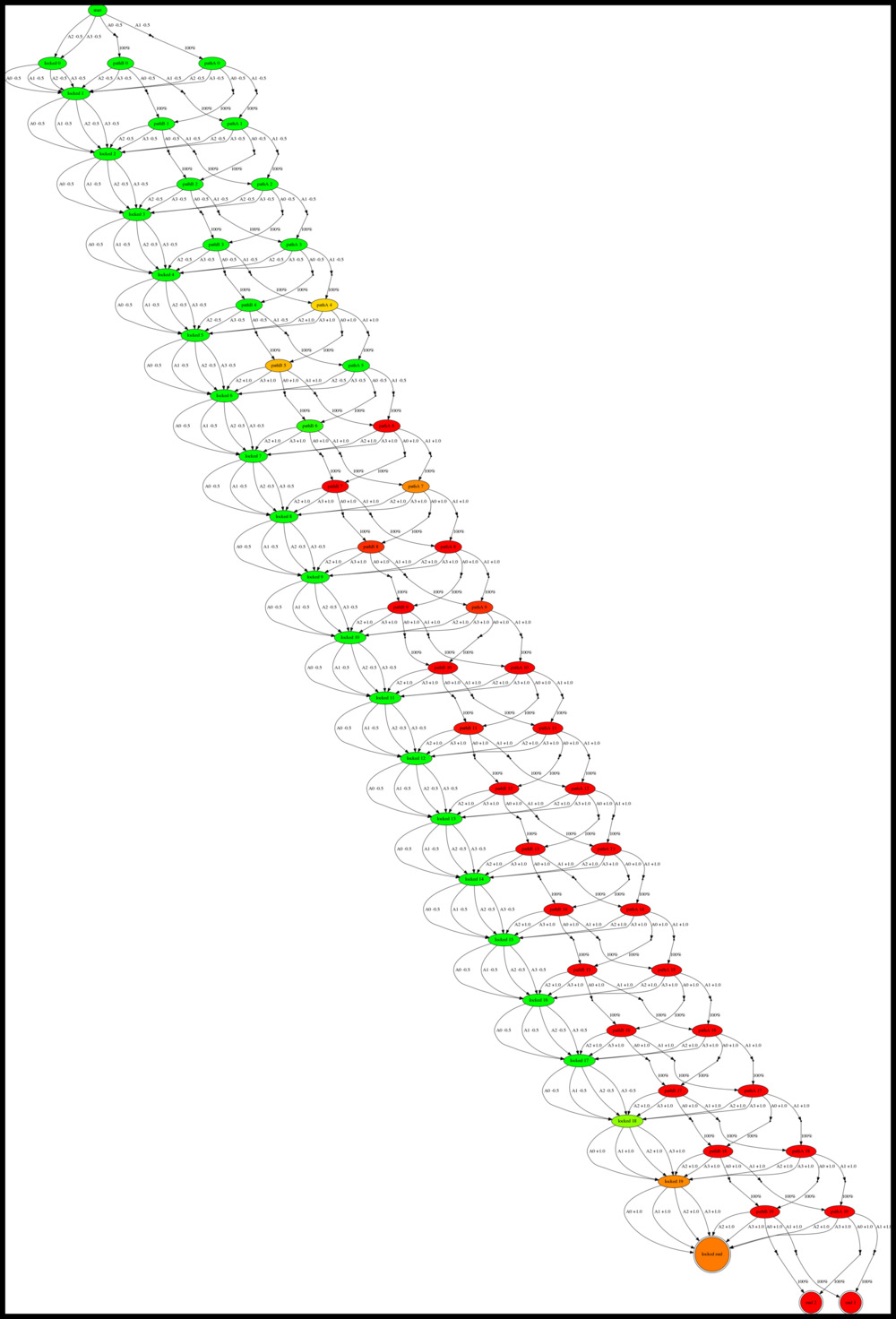}};
\path (C3_16) node[xscale=\xscale,yscale=\yscale] (central3) {\includegraphics[width=\widthfig]{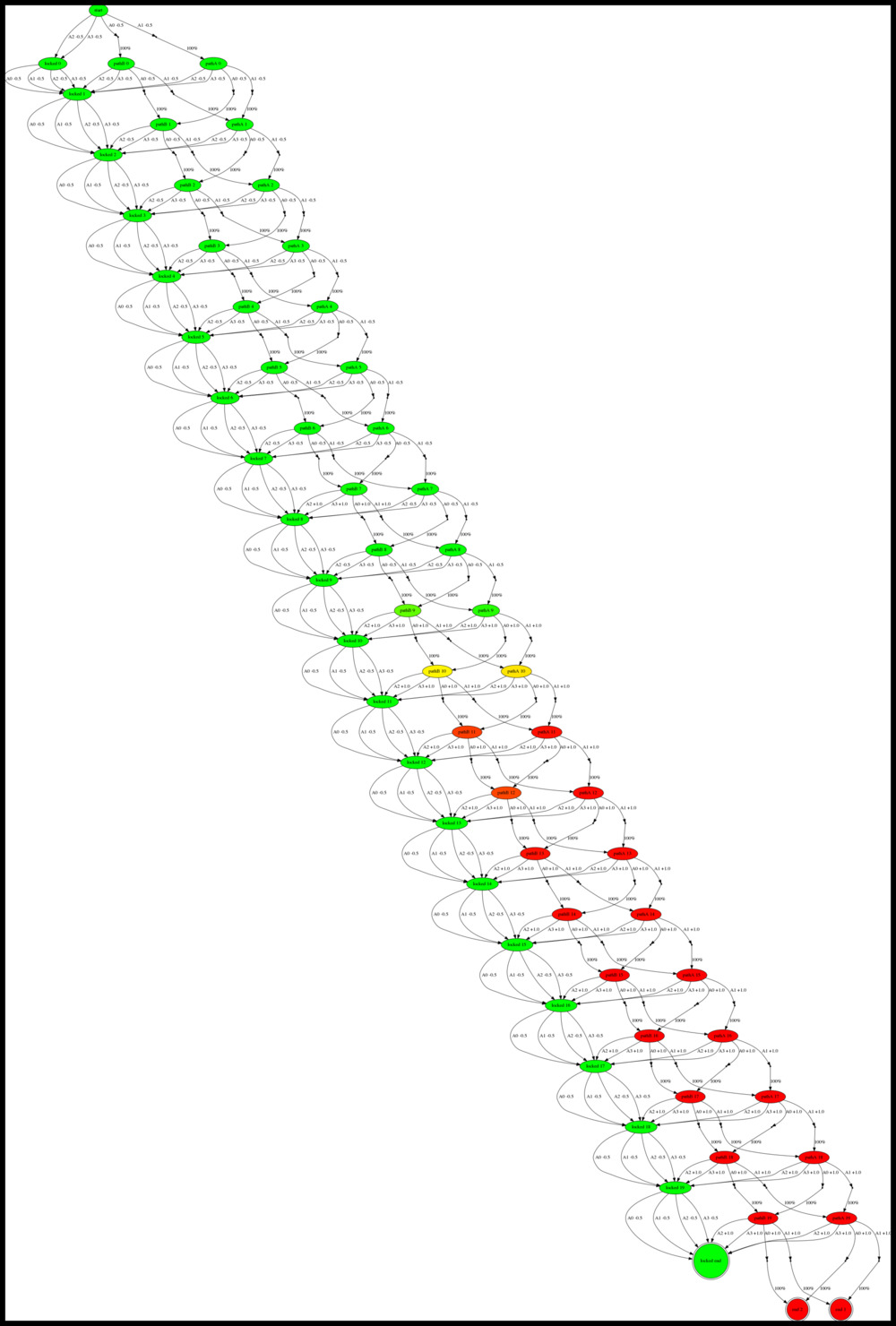}};
\path (C4_16) node[xscale=\xscale,yscale=\yscale]{\includegraphics[width=\widthfig]{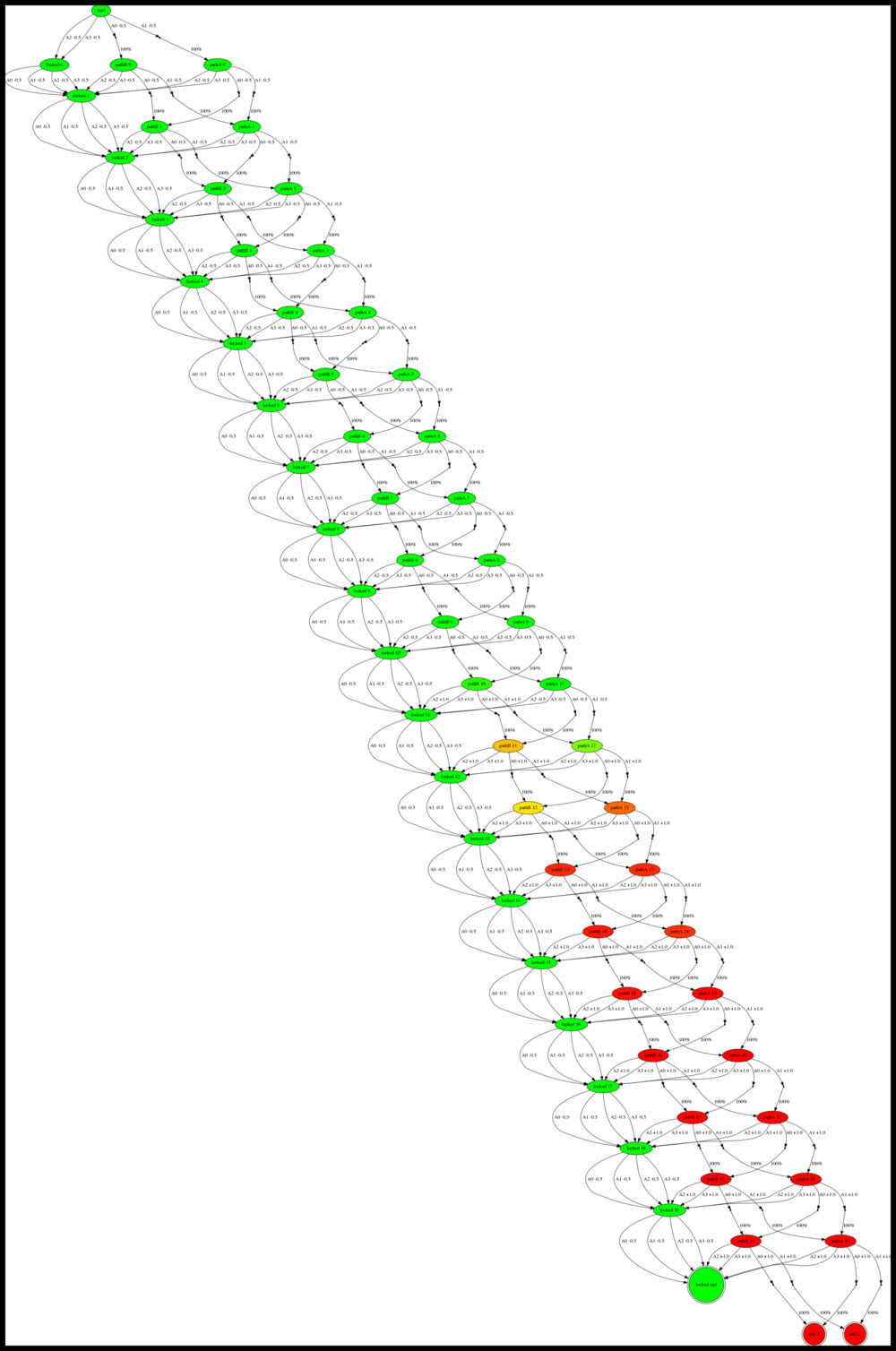}};
\path (C5_16) node[xscale=\xscale,yscale=\yscale]{\includegraphics[width=\widthfig]{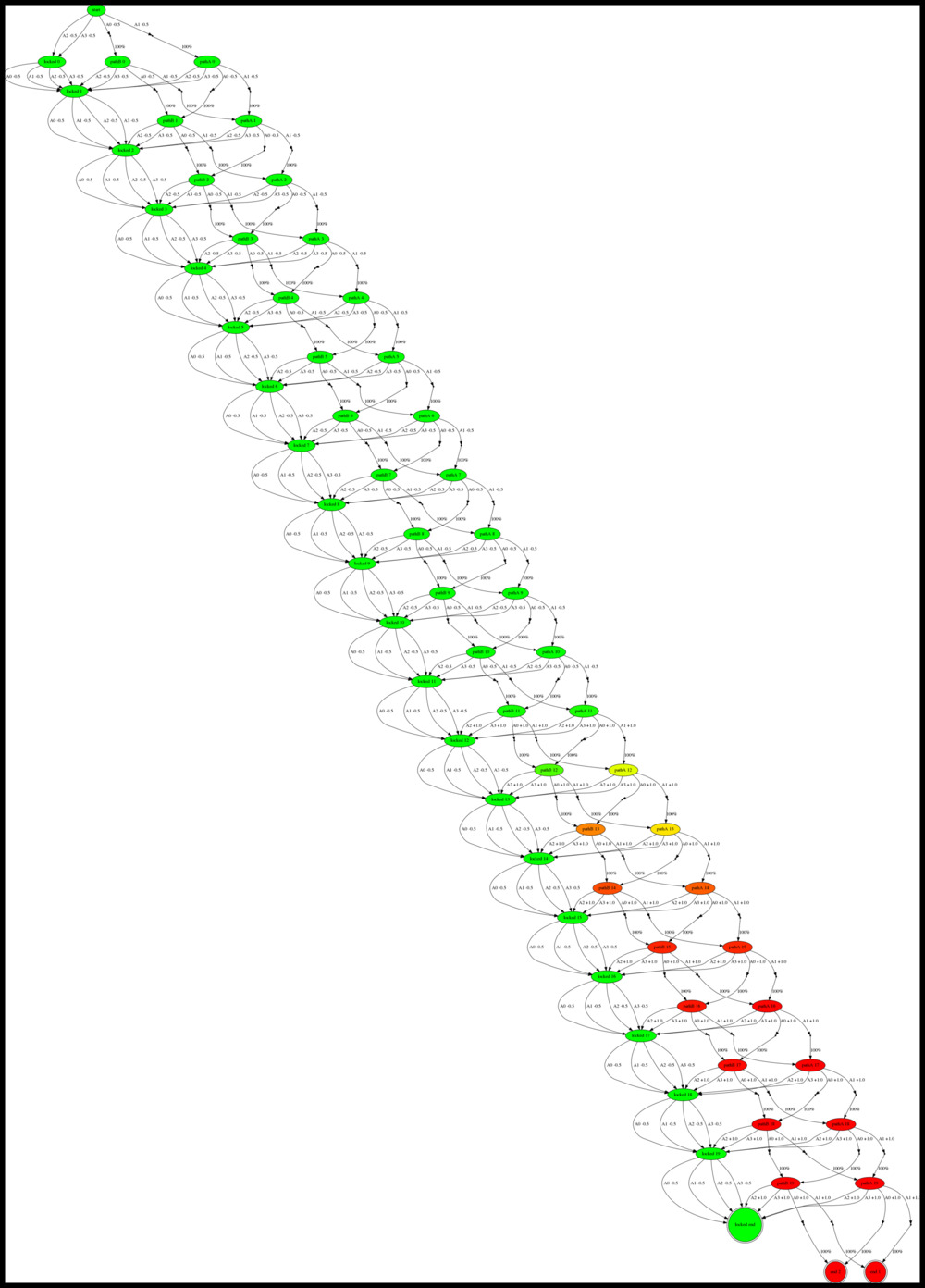}};

\path (central1.north) + (0,-\vcaption) node[format]{\texttt{CE($s_0$,\REINFORCE)} on Diabolical Combination Lock, depth $d=5$, iteration $n=1,2,3,4,5$ from left to right.};
\path (central2.north) + (0,-\vcaption) node[format]{\texttt{CE($s_0$,\REINFORCE)} on Diabolical Combination Lock, depth $d=10$, iteration $n=1,4,6,8,10$ from left to right.};
\path (central3.north) + (0,-\vcaption) node[format]{\texttt{CE($s_0$,\REINFORCE)} on Diabolical Combination Lock, depth $d=20$, iteration $n=1,4,7,10,14$ from left to right.};
\end{tikzpicture}

\clearpage

\begin{tikzpicture}[overlay] 
\label{page:evolution_trpocct}
\tikzstyle{format} = [anchor = south]
\path (.5\columnwidth,-.5\textheight) coordinate (center);
\def\vstep{1.12cm}
\def\hstep{4cm}
\def\widthfig{10cm}
\def\vcaption{-0.2cm}
\path (-5cm,1.1cm) coordinate (UL);
\foreach\t in {0,...,10} \foreach\s in {0,...,20} \path (UL) + (\t*\hstep,-\s*\vstep) coordinate (C\t_\s);

\def\scale{0.23}
\path (C1_1) node[scale=\scale]{\includegraphics[width=\widthfig]{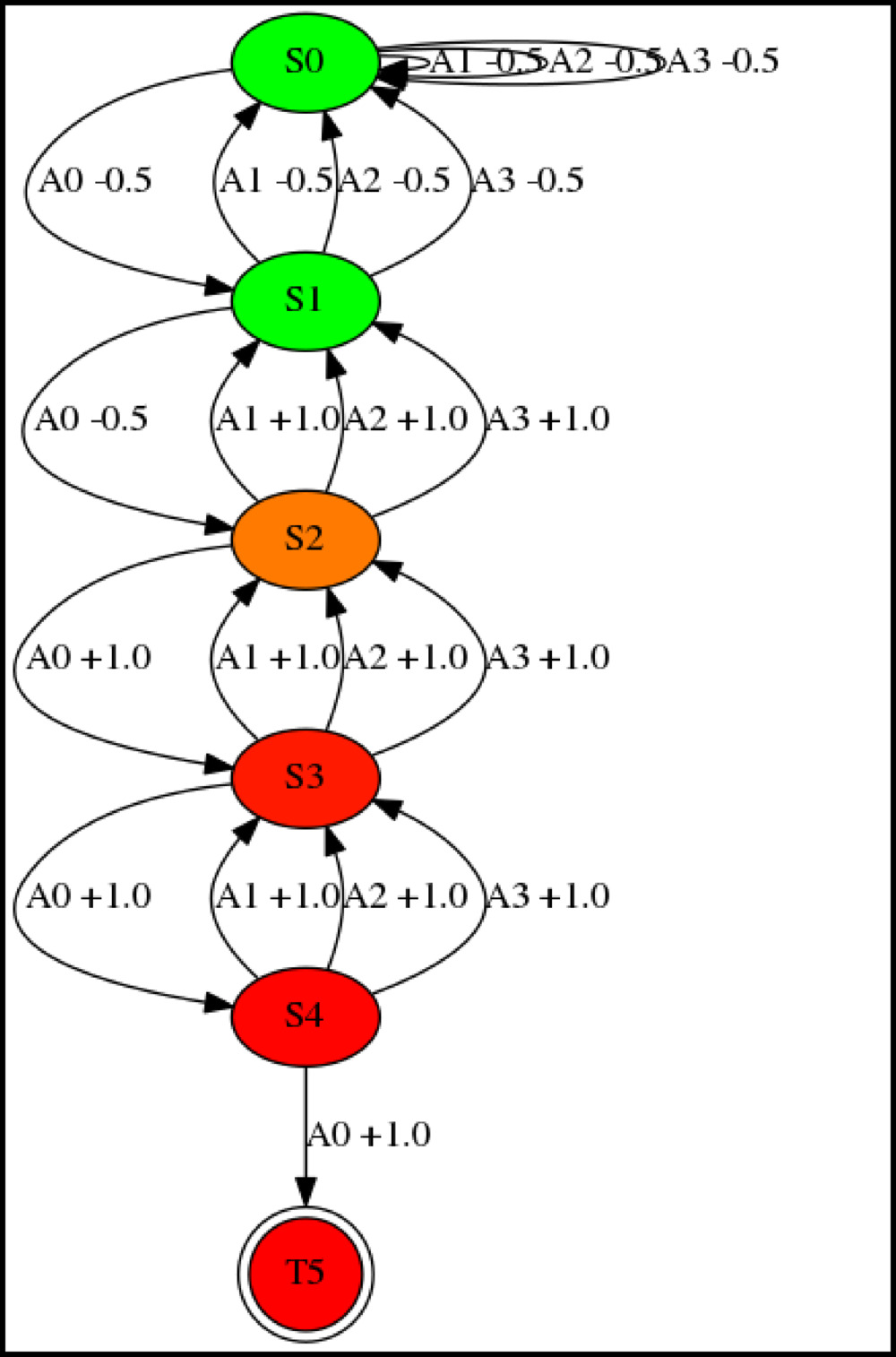}};
\path (C2_1) node[scale=\scale]{\includegraphics[width=\widthfig]{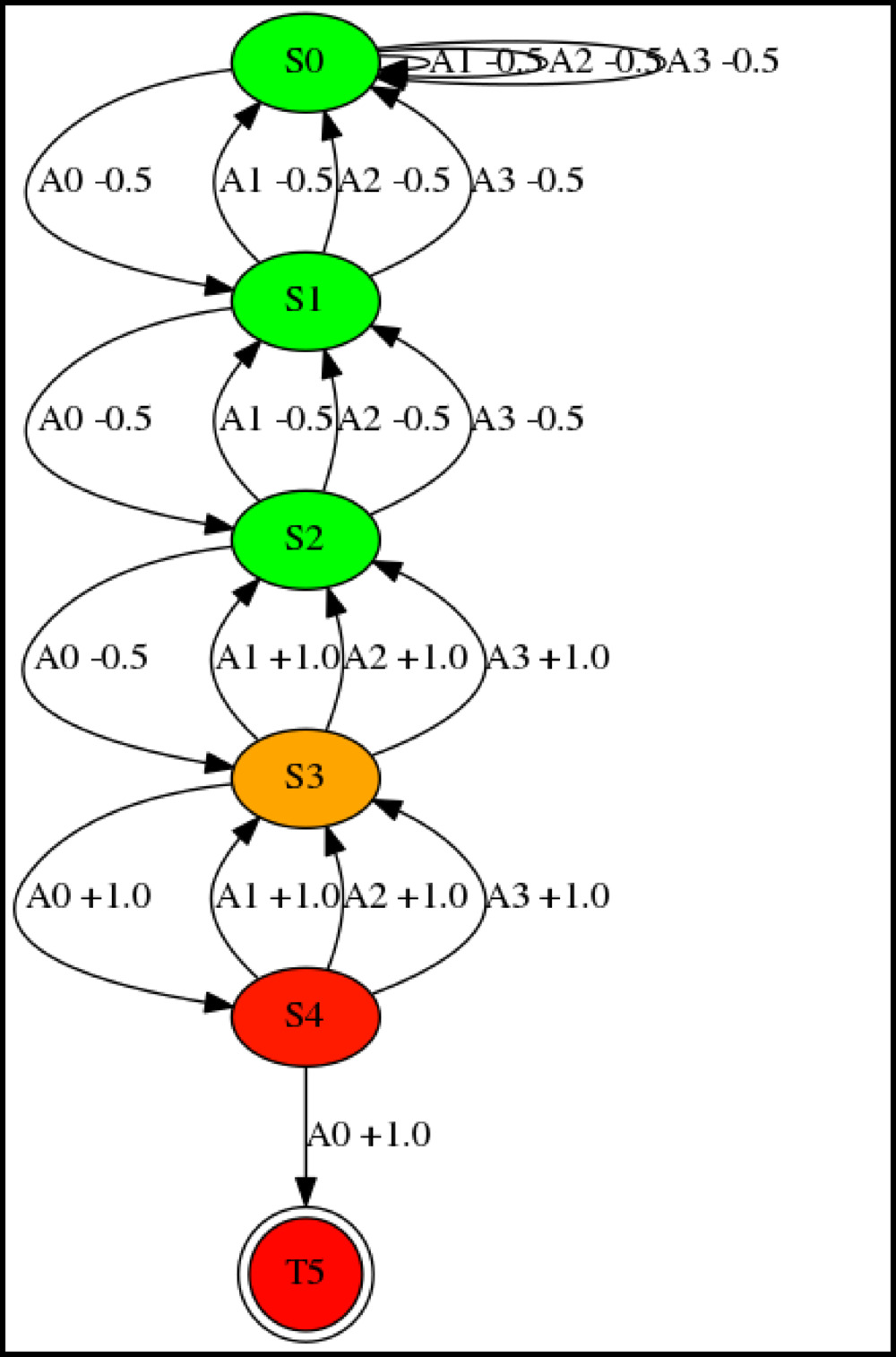}};
\path (C3_1) node[scale=\scale] (central1) {\includegraphics[width=\widthfig]{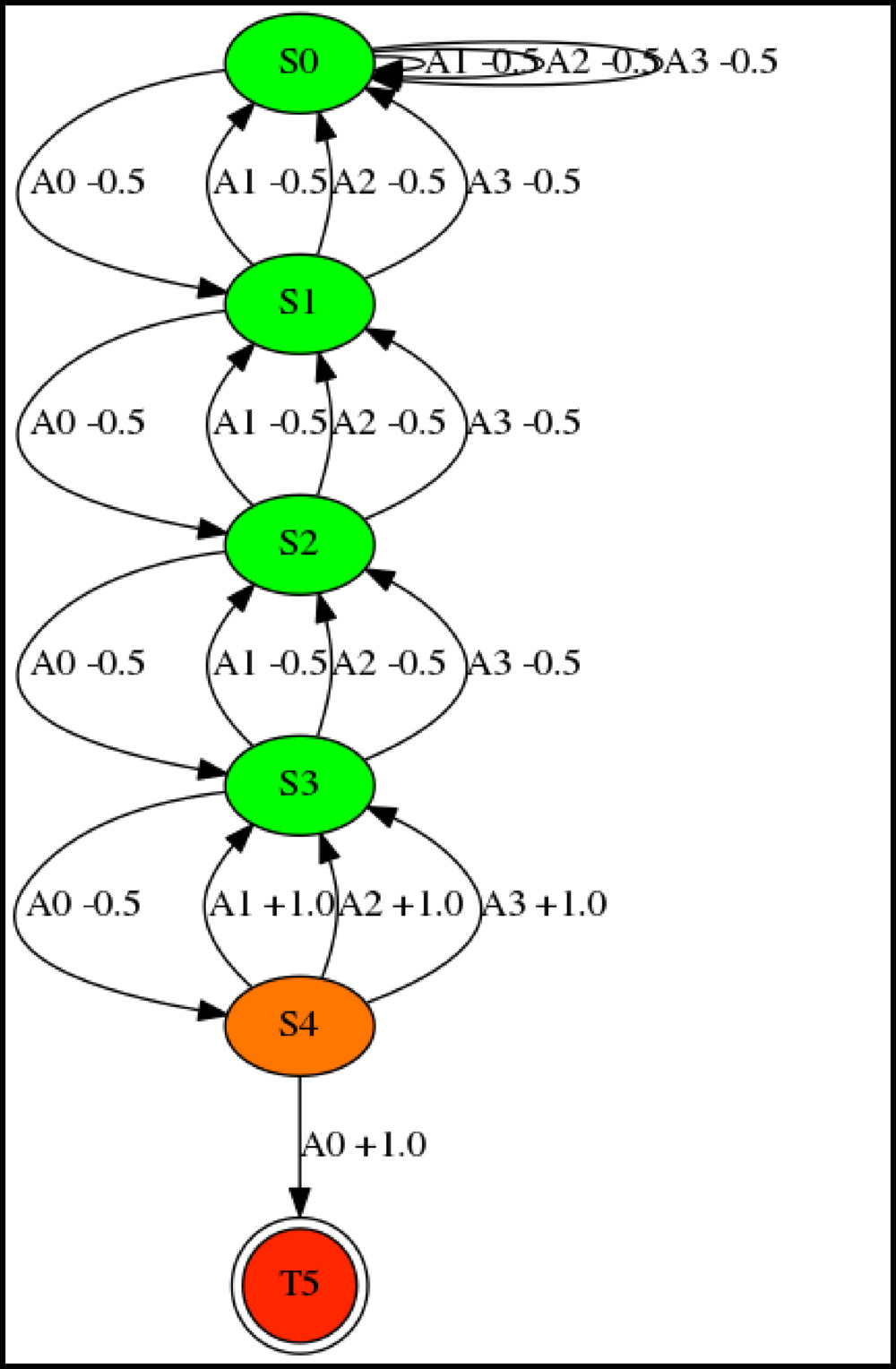}};
\path (C4_1) node[scale=\scale]{\includegraphics[width=\widthfig]{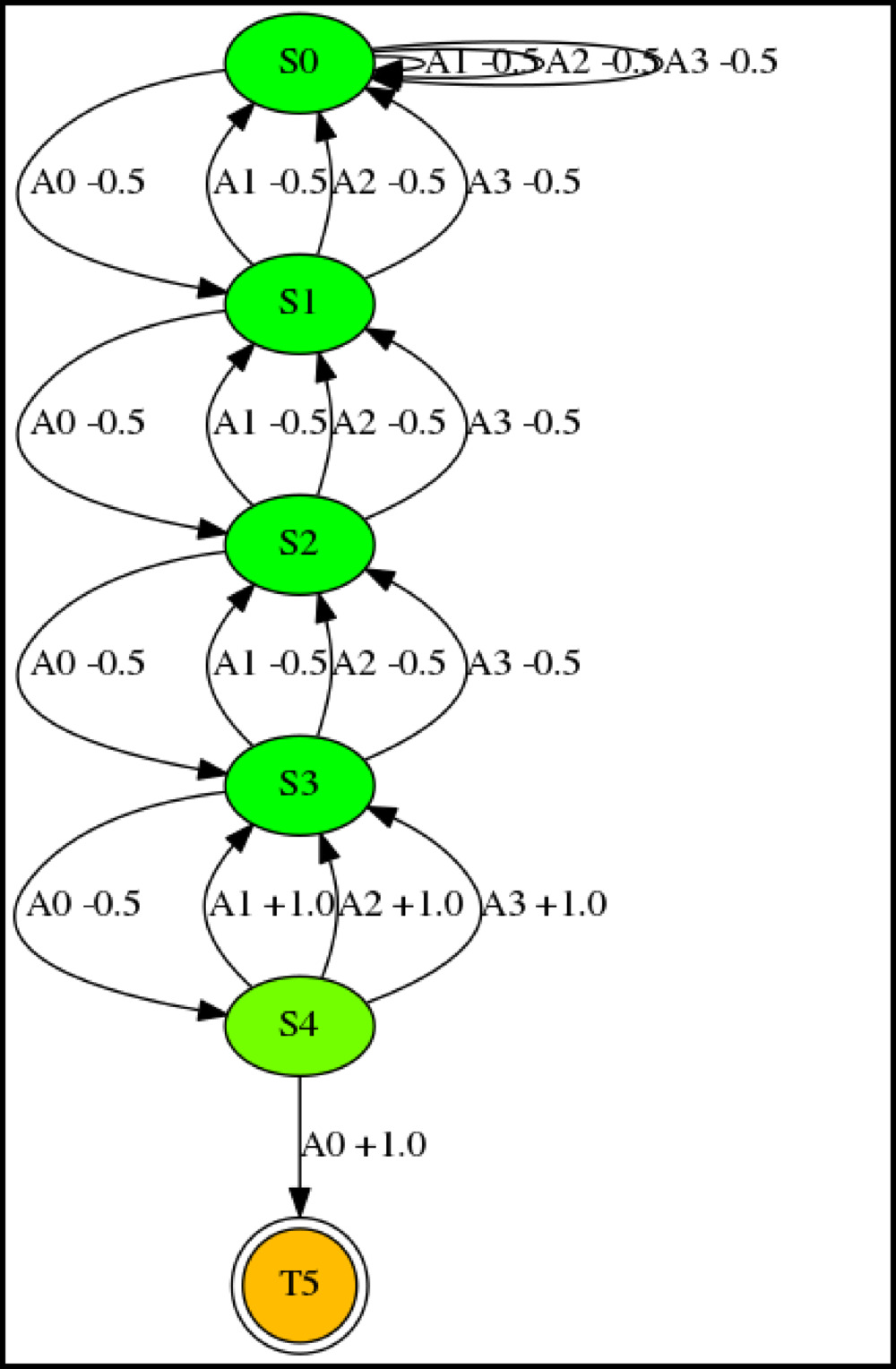}};
\path (C5_1) node[scale=\scale]{\includegraphics[width=\widthfig]{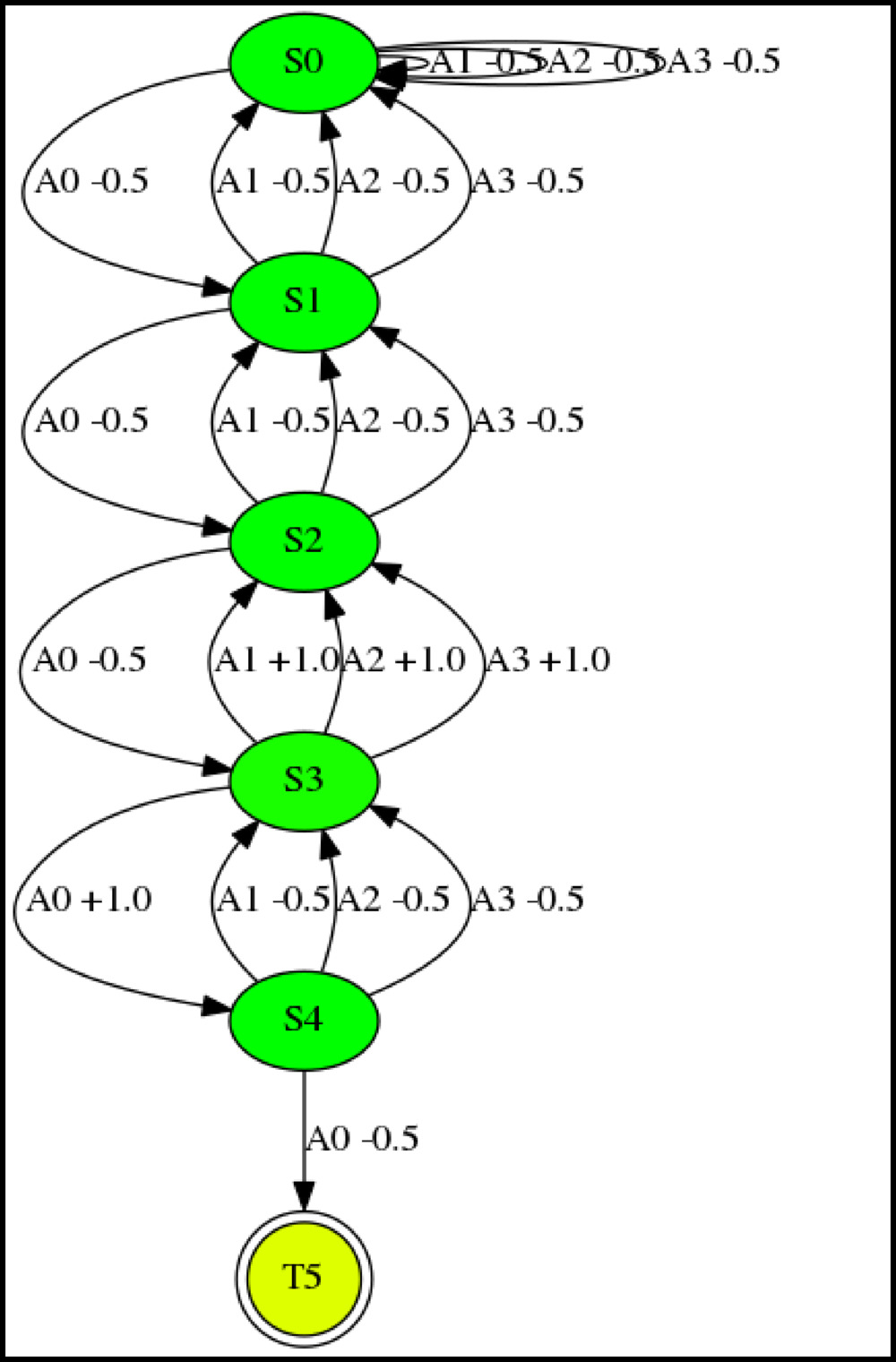}};

\path (C1_7) node[scale=\scale]{\includegraphics[width=\widthfig]{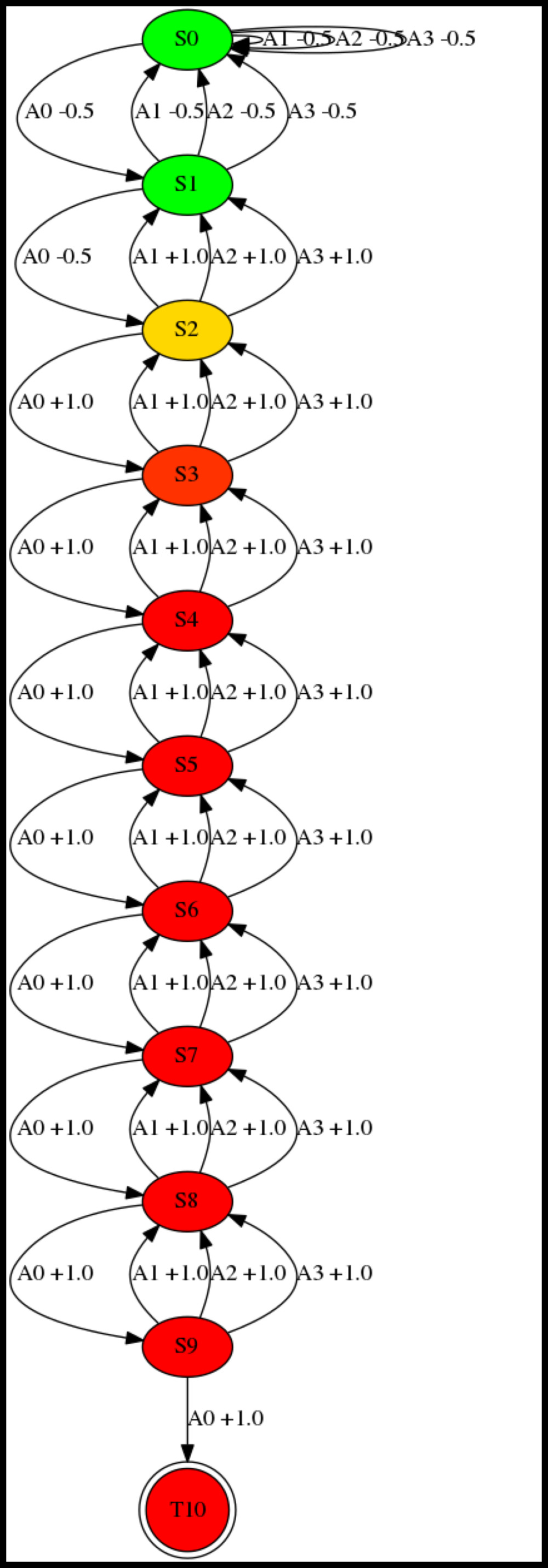}};
\path (C2_7) node[scale=\scale]{\includegraphics[width=\widthfig]{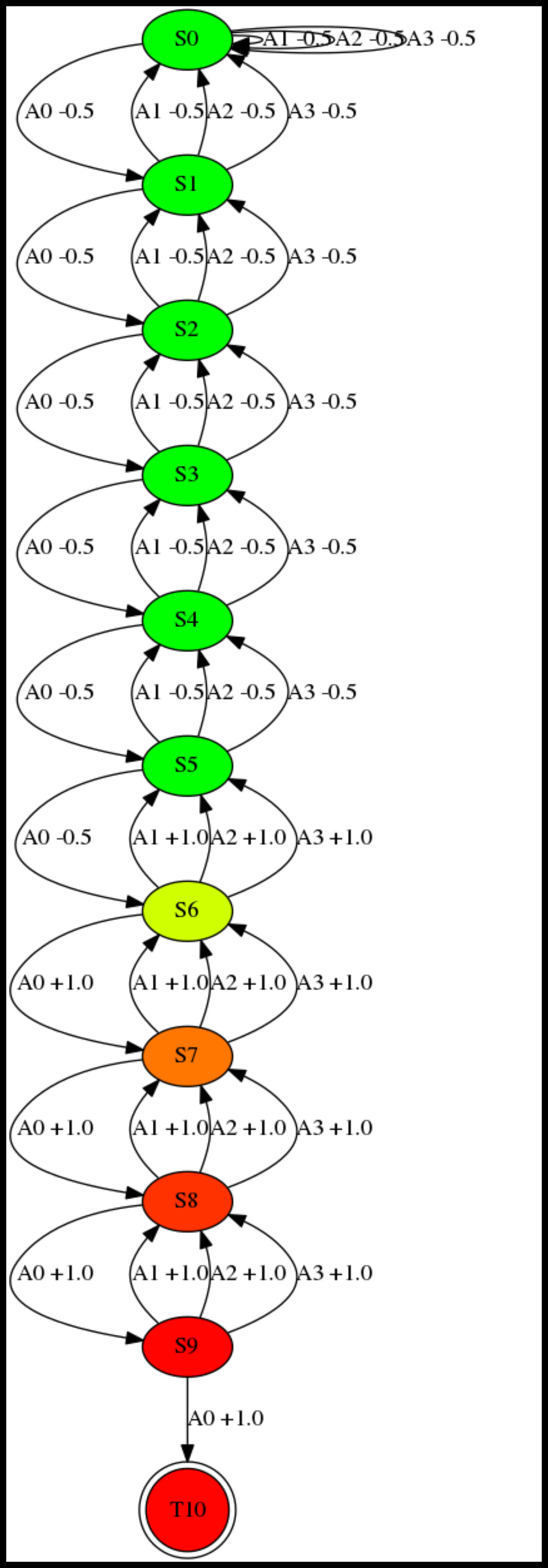}};
\path (C3_7) node[scale=\scale] (central2) {\includegraphics[width=\widthfig]{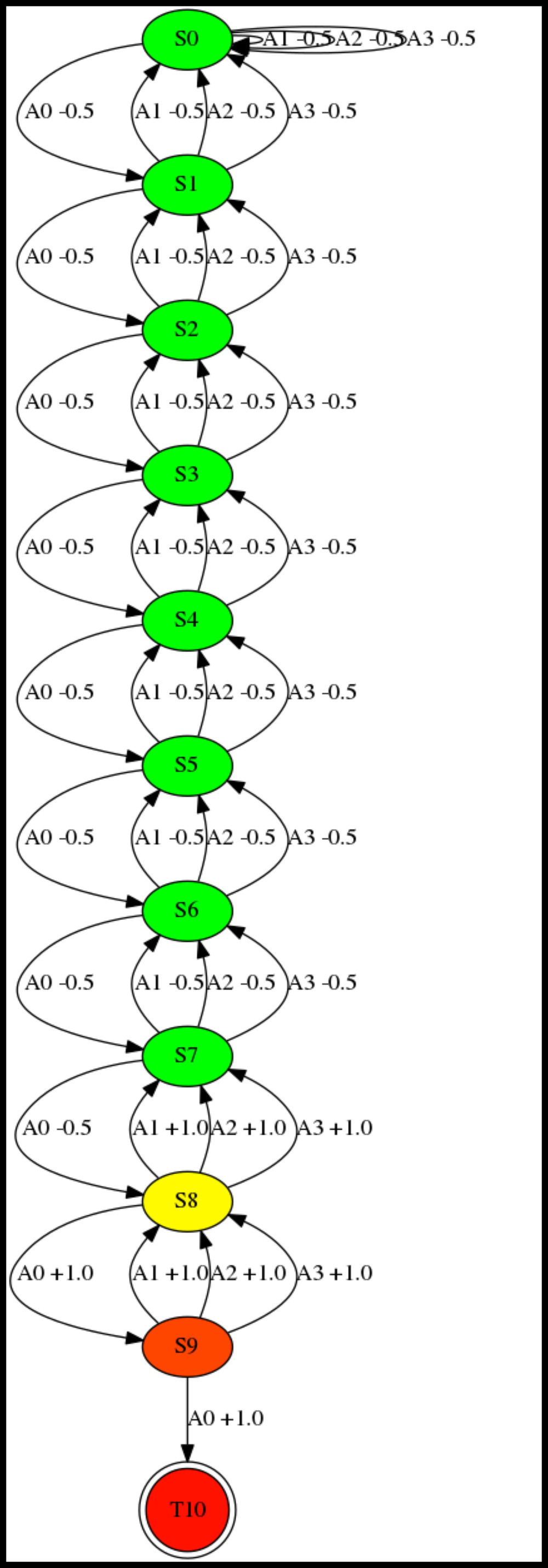}};
\path (C4_7) node[scale=\scale]{\includegraphics[width=\widthfig]{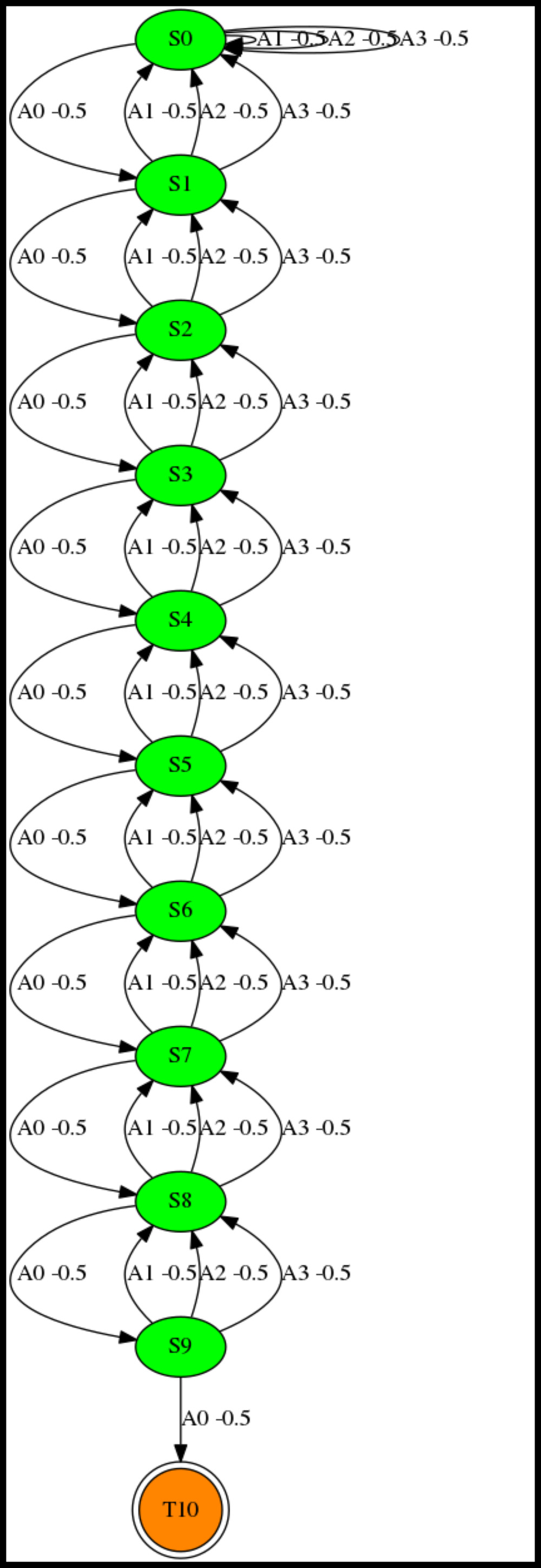}};
\path (C5_7) node[scale=\scale]{\includegraphics[width=\widthfig]{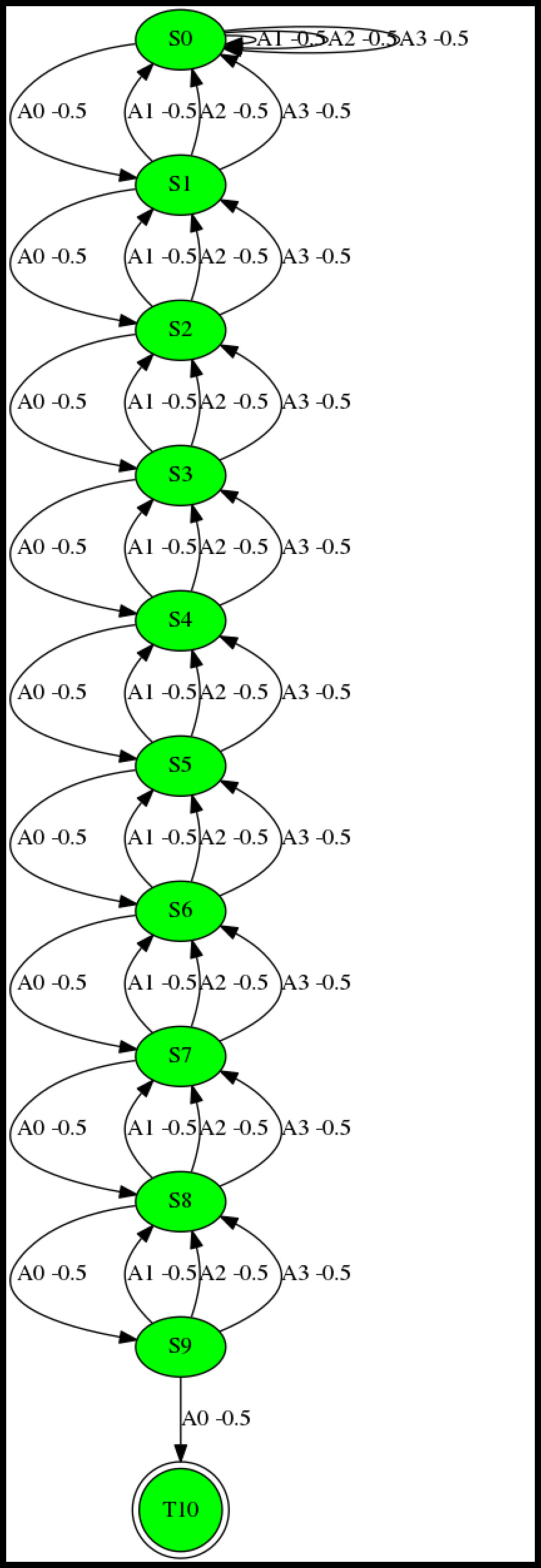}};

\path (C1_17) node[scale=\scale]{\includegraphics[width=\widthfig]{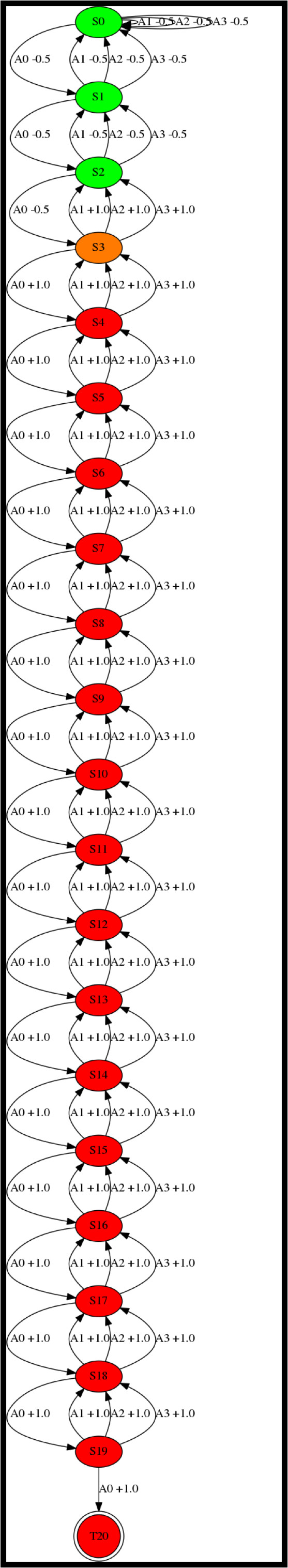}};
\path (C2_17) node[scale=\scale]{\includegraphics[width=\widthfig]{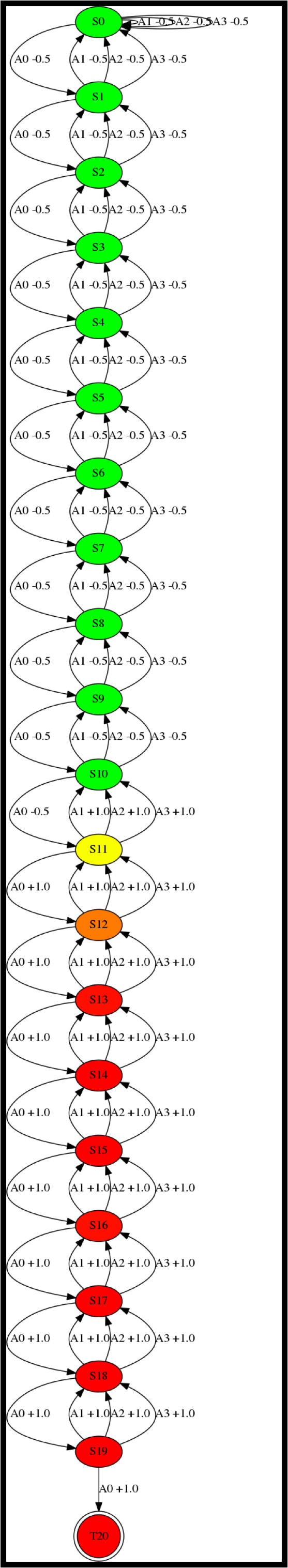}};
\path (C3_17) node[scale=\scale] (central3) {\includegraphics[width=\widthfig]{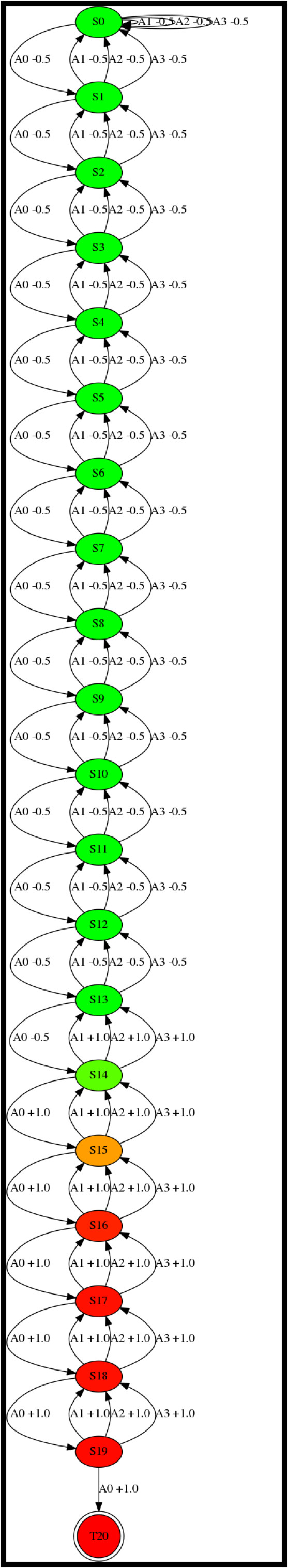}};
\path (C4_17) node[scale=\scale]{\includegraphics[width=\widthfig]{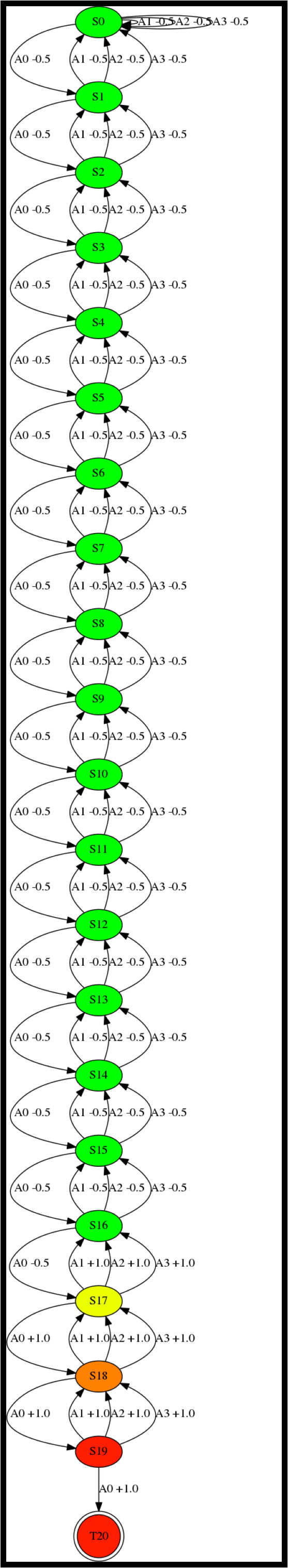}};
\path (C5_17) node[scale=\scale]{\includegraphics[width=\widthfig]{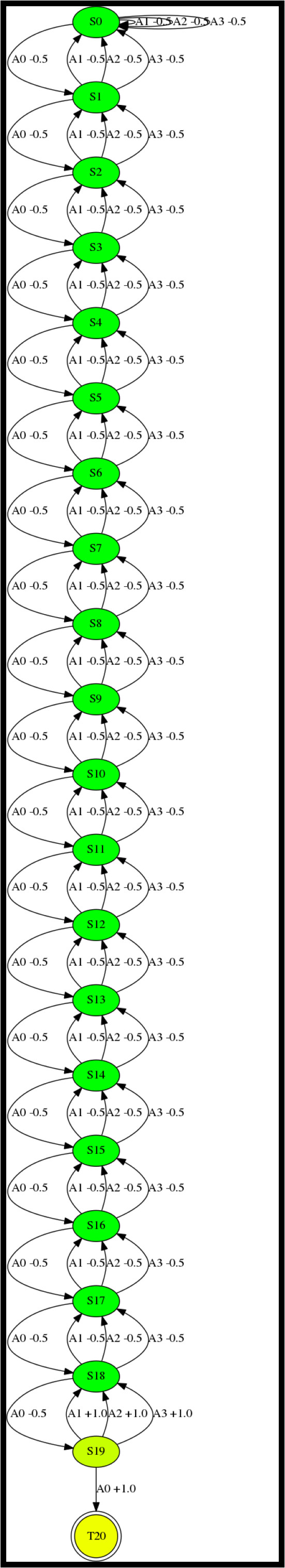}};

\path (central1.north) + (0,-\vcaption) node[format]{\texttt{CE($s_0$,TRPO)} on Consecutive Crossroad Traps, depth $d=5$, iteration $n=1,2,3,4,5$ from left to right.};
\path (central2.north) + (0,-\vcaption) node[format]{\texttt{CE($s_0$,TRPO)} on Consecutive Crossroad Traps, depth $d=10$, iteration $n=1,4,6,8,10$ from left to right.};
\path (central3.north) + (0,-\vcaption) node[format]{\texttt{CE($s_0$,TRPO)} on Consecutive Crossroad Traps, depth $d=20$, iteration $n=1,4,7,10,14$ from left to right.};
\end{tikzpicture}

\clearpage

\begin{tikzpicture}[overlay] 
\label{page:evolution_trpodcl}
\tikzstyle{format} = [anchor = south]
\path (.5\columnwidth,-.5\textheight) coordinate (center);
\def\vstep{1.25cm}
\def\hstep{4cm}
\def\widthfig{10cm}
\def\vcaption{-0.1cm}
\path (-5cm,0cm) coordinate (UL);
\foreach\t in {0,...,10} \foreach\s in {0,...,20} \path (UL) + (\t*\hstep,-\s*\vstep) coordinate (C\t_\s);

\def\xscale{0.3}
\def\yscale{0.6}
\path (C1_1) node[xscale=\xscale,yscale=\yscale]{\includegraphics[width=\widthfig]{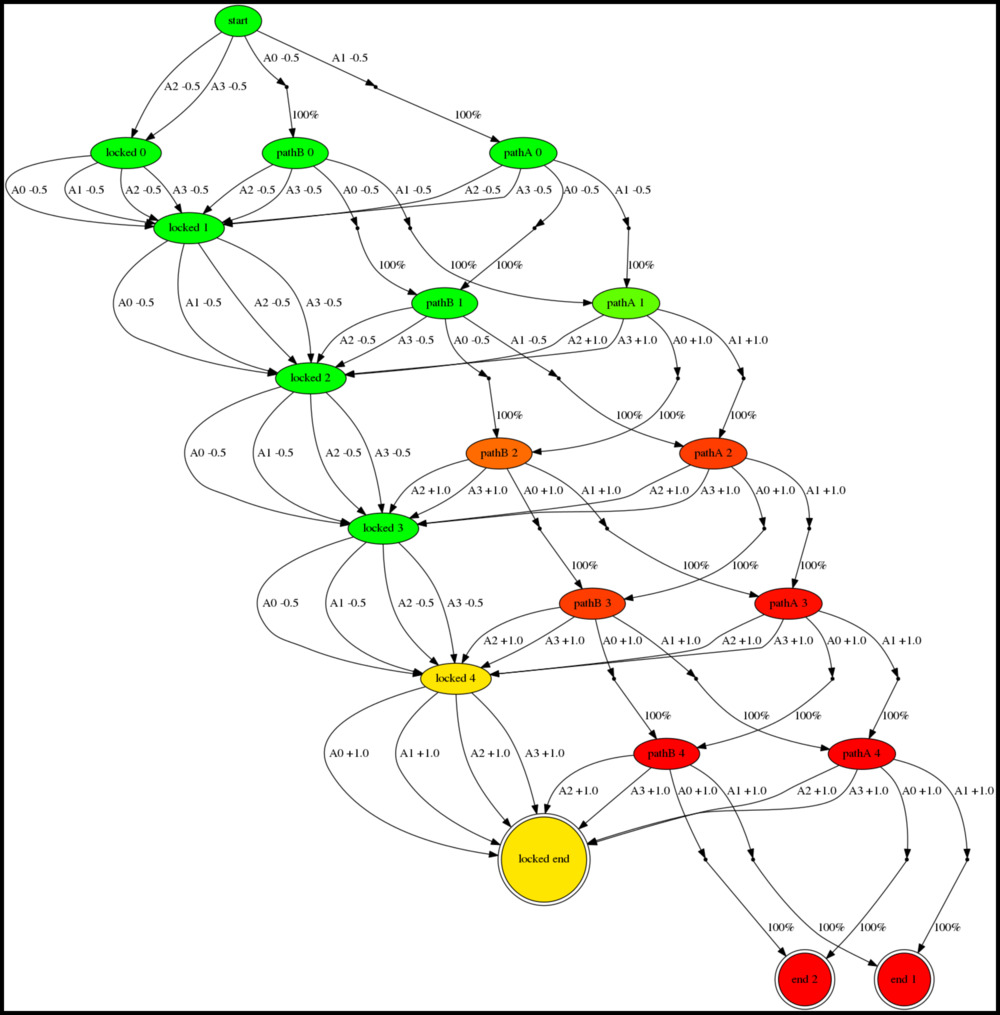}};
\path (C2_1) node[xscale=\xscale,yscale=\yscale]{\includegraphics[width=\widthfig]{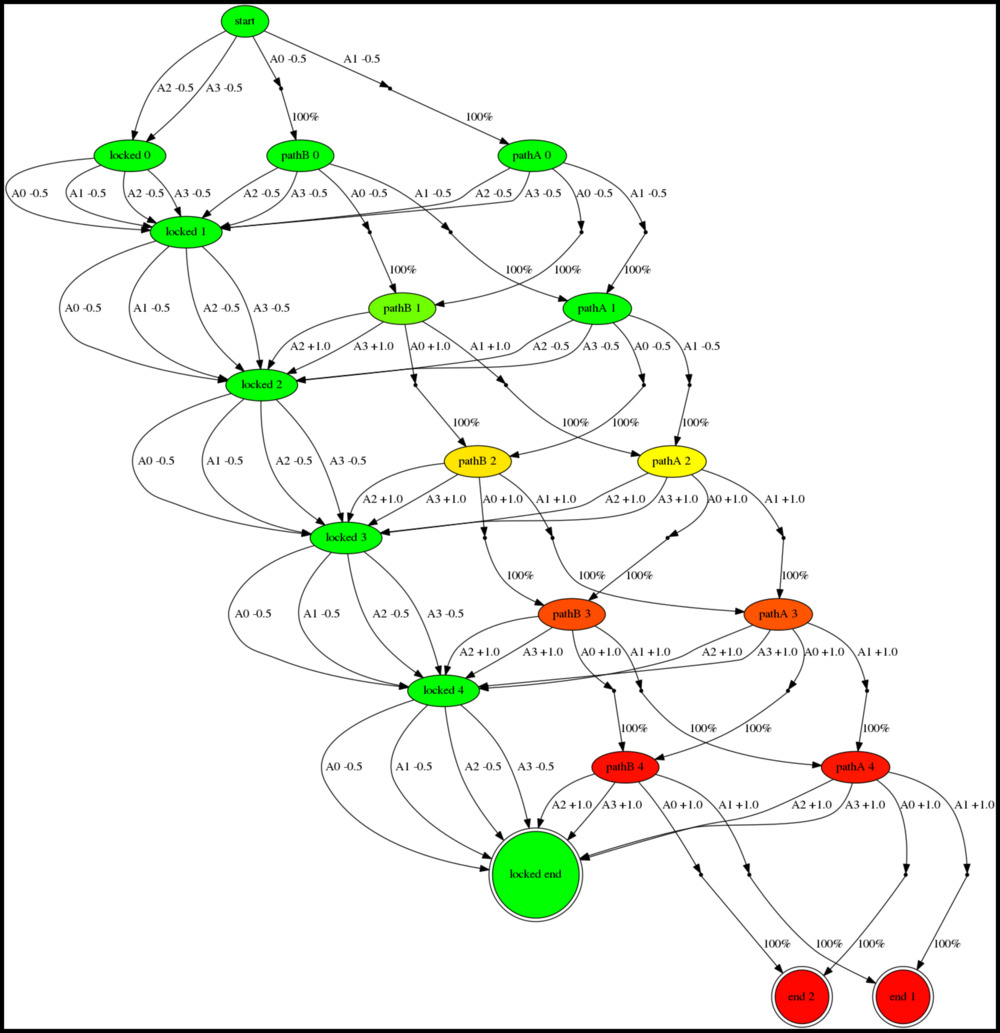}};
\path (C3_1) node[xscale=\xscale,yscale=\yscale] (central1) {\includegraphics[width=\widthfig]{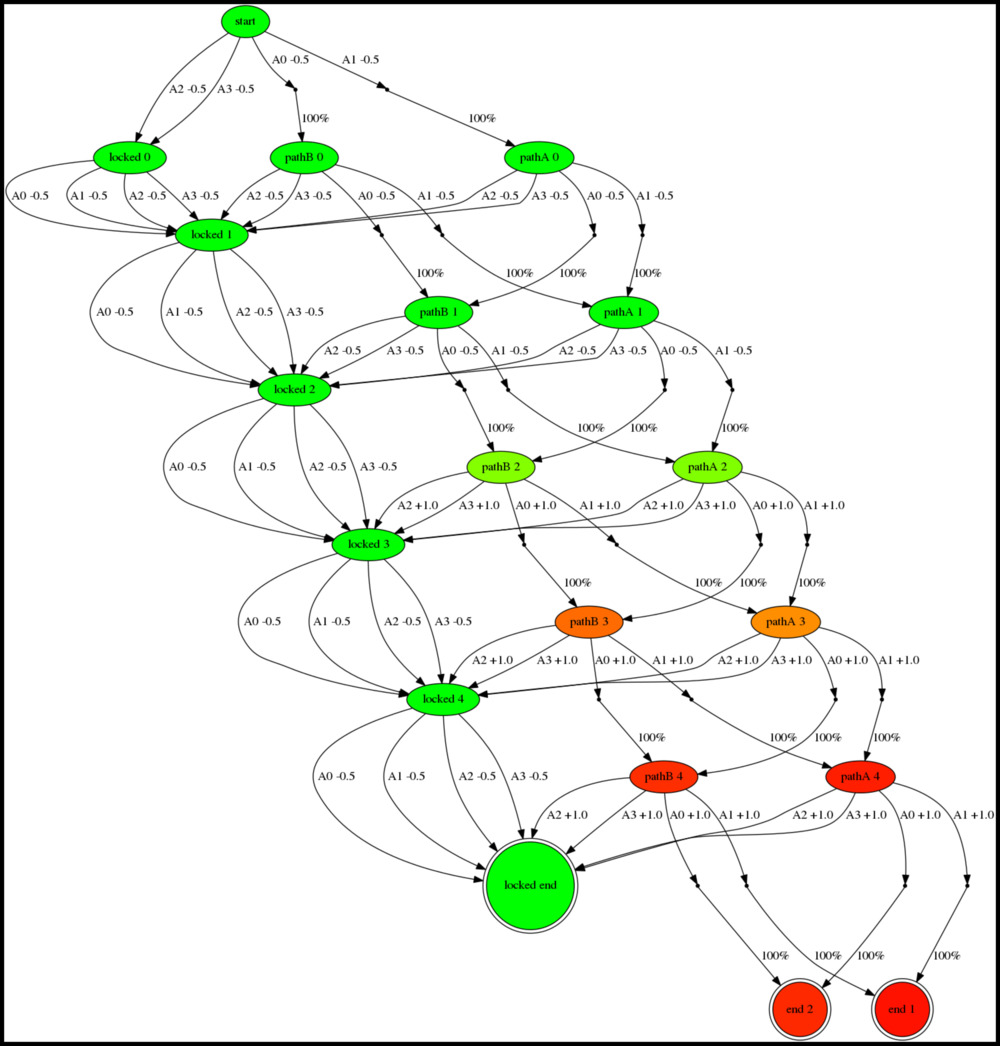}};
\path (C4_1) node[xscale=\xscale,yscale=\yscale]{\includegraphics[width=\widthfig]{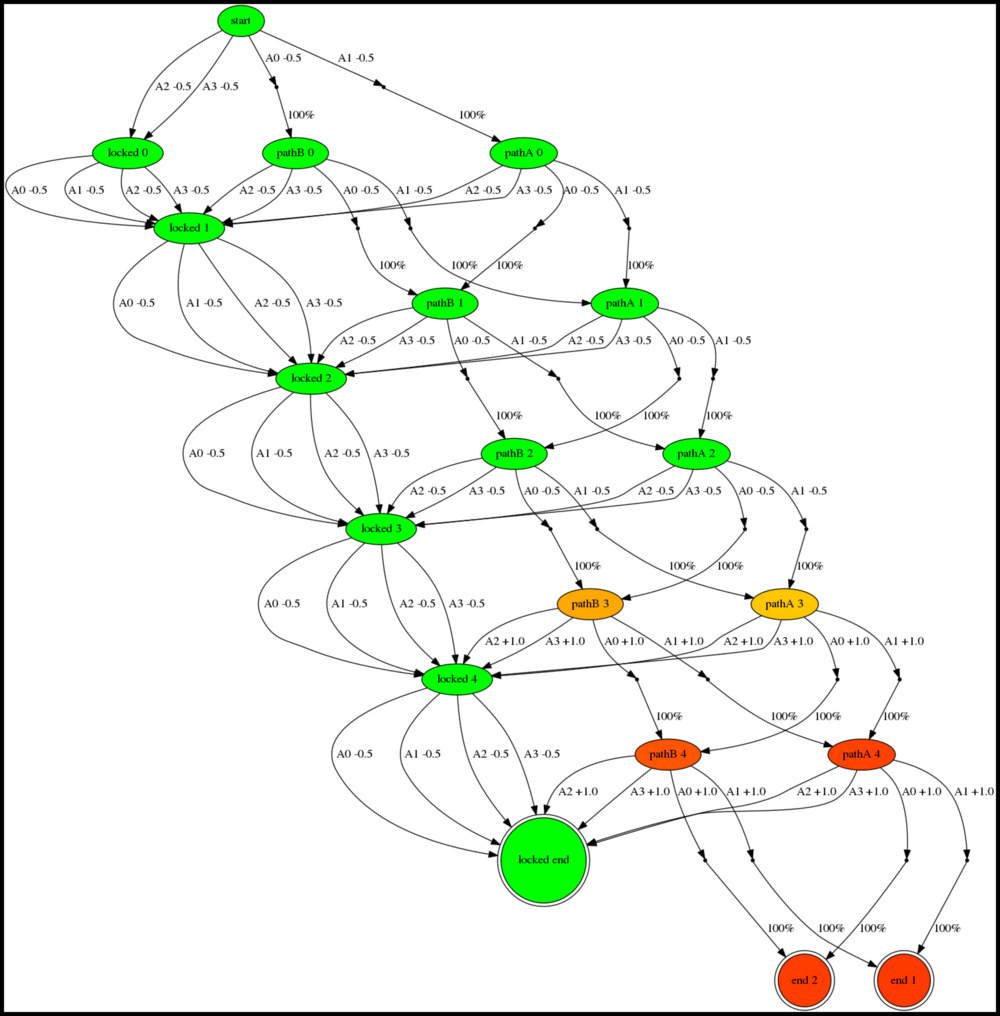}};
\path (C5_1) node[xscale=\xscale,yscale=\yscale]{\includegraphics[width=\widthfig]{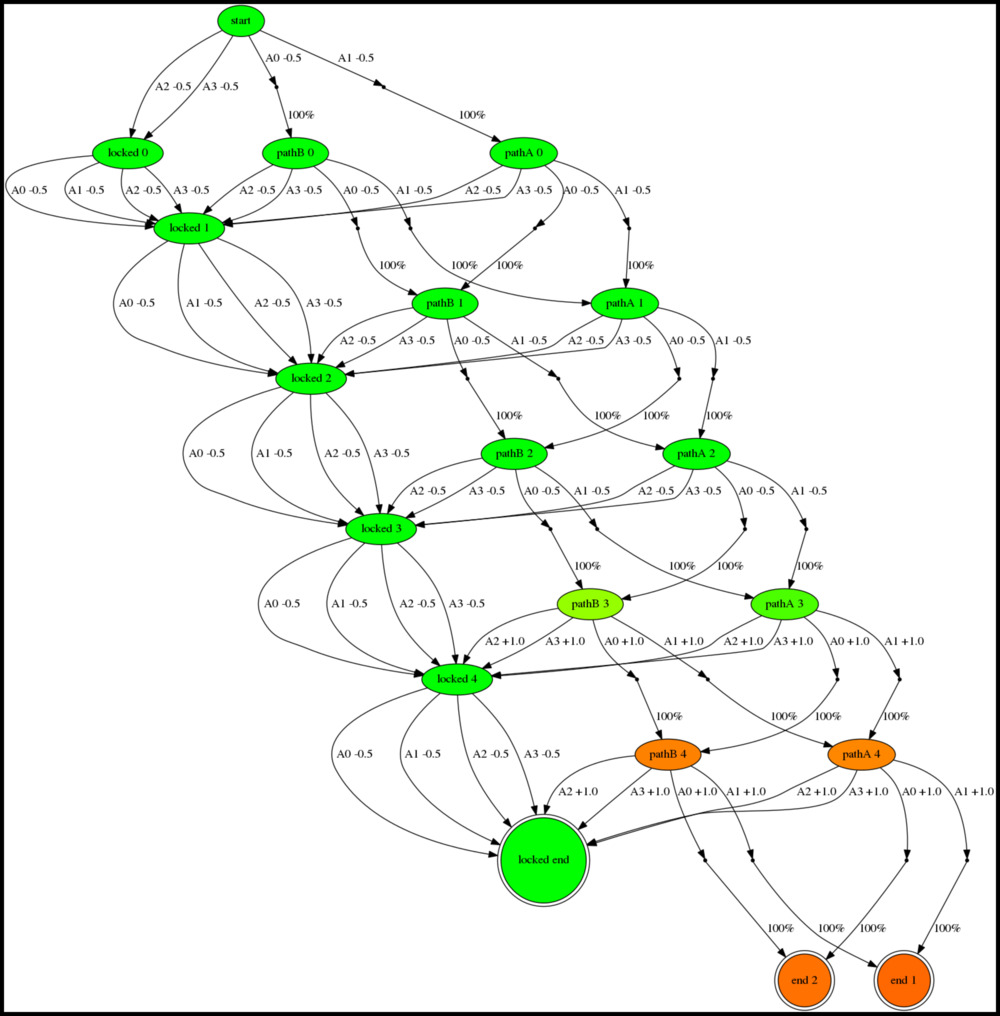}};

\path (C1_8) node[xscale=\xscale,yscale=\yscale]{\includegraphics[width=\widthfig]{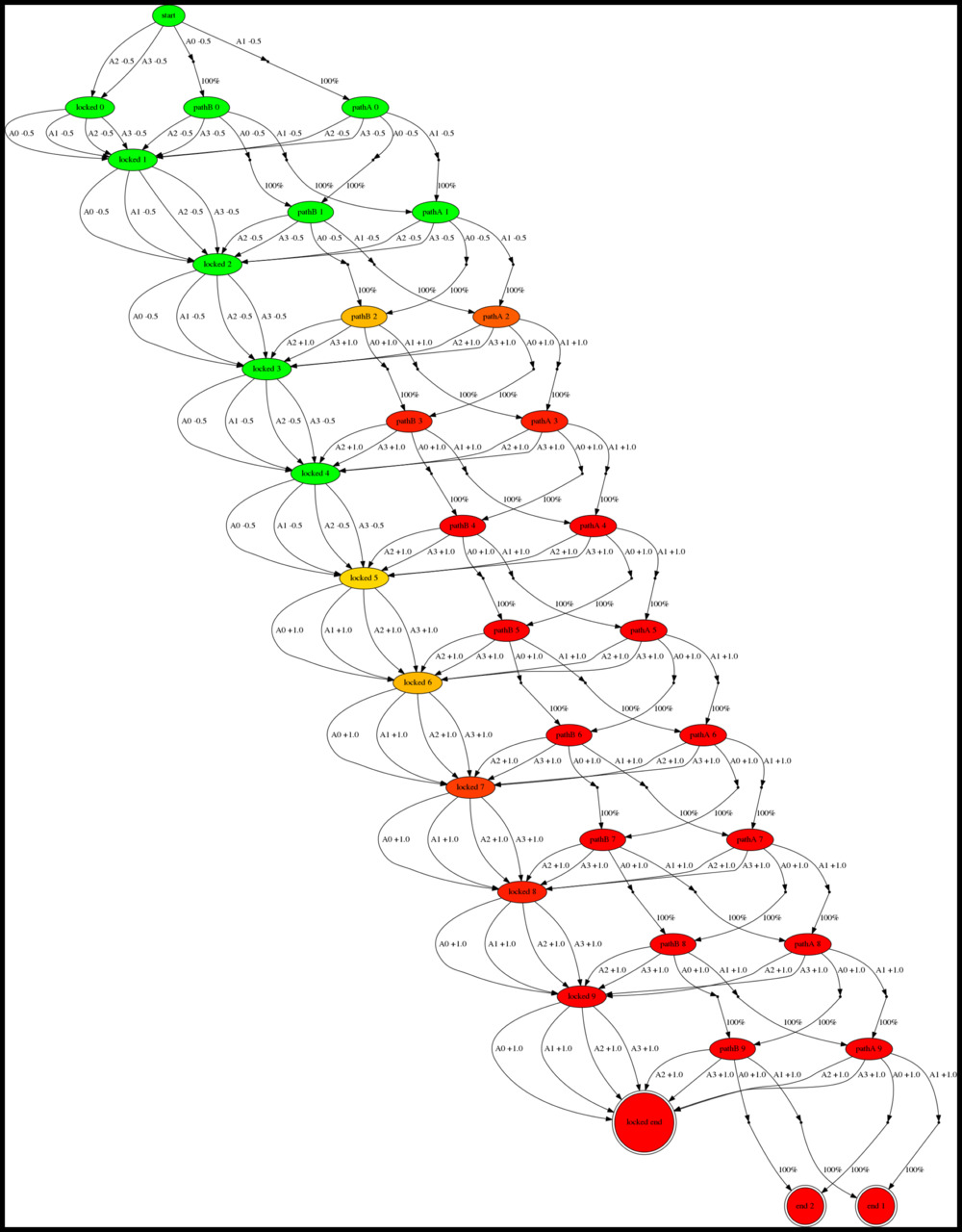}};
\path (C2_8) node[xscale=\xscale,yscale=\yscale]{\includegraphics[width=\widthfig]{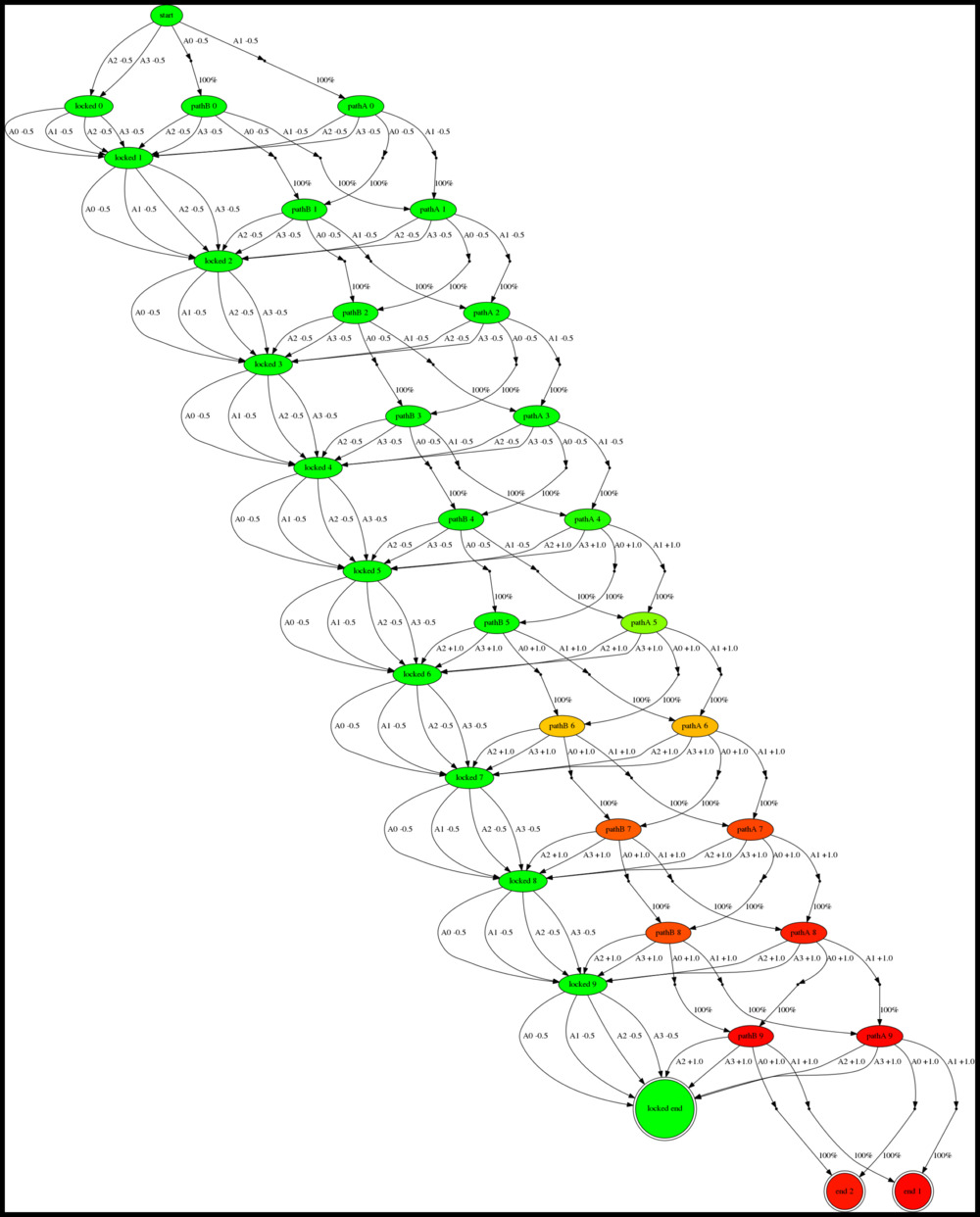}};
\path (C3_8) node[xscale=\xscale,yscale=\yscale] (central2) {\includegraphics[width=\widthfig]{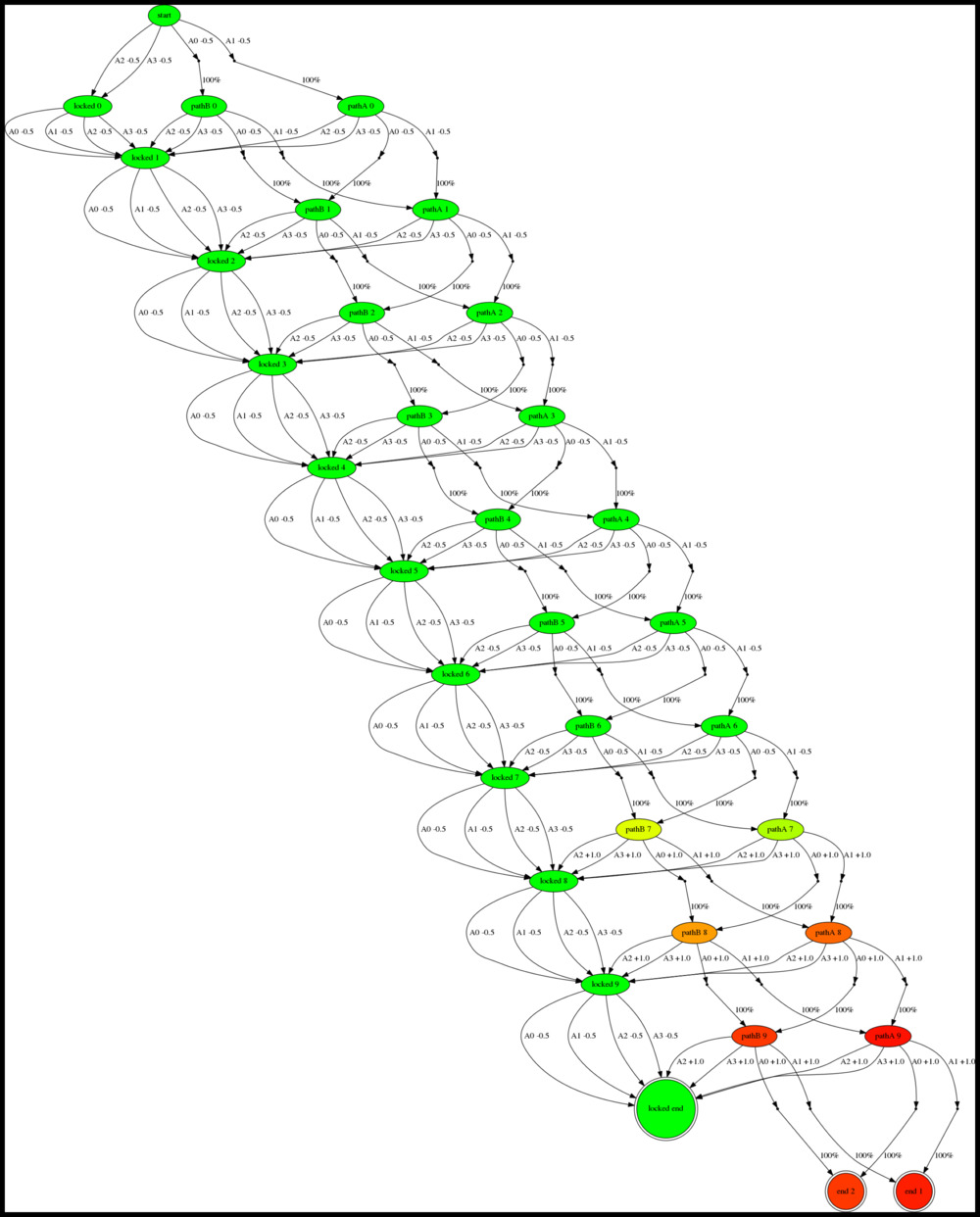}};
\path (C4_8) node[xscale=\xscale,yscale=\yscale]{\includegraphics[width=\widthfig]{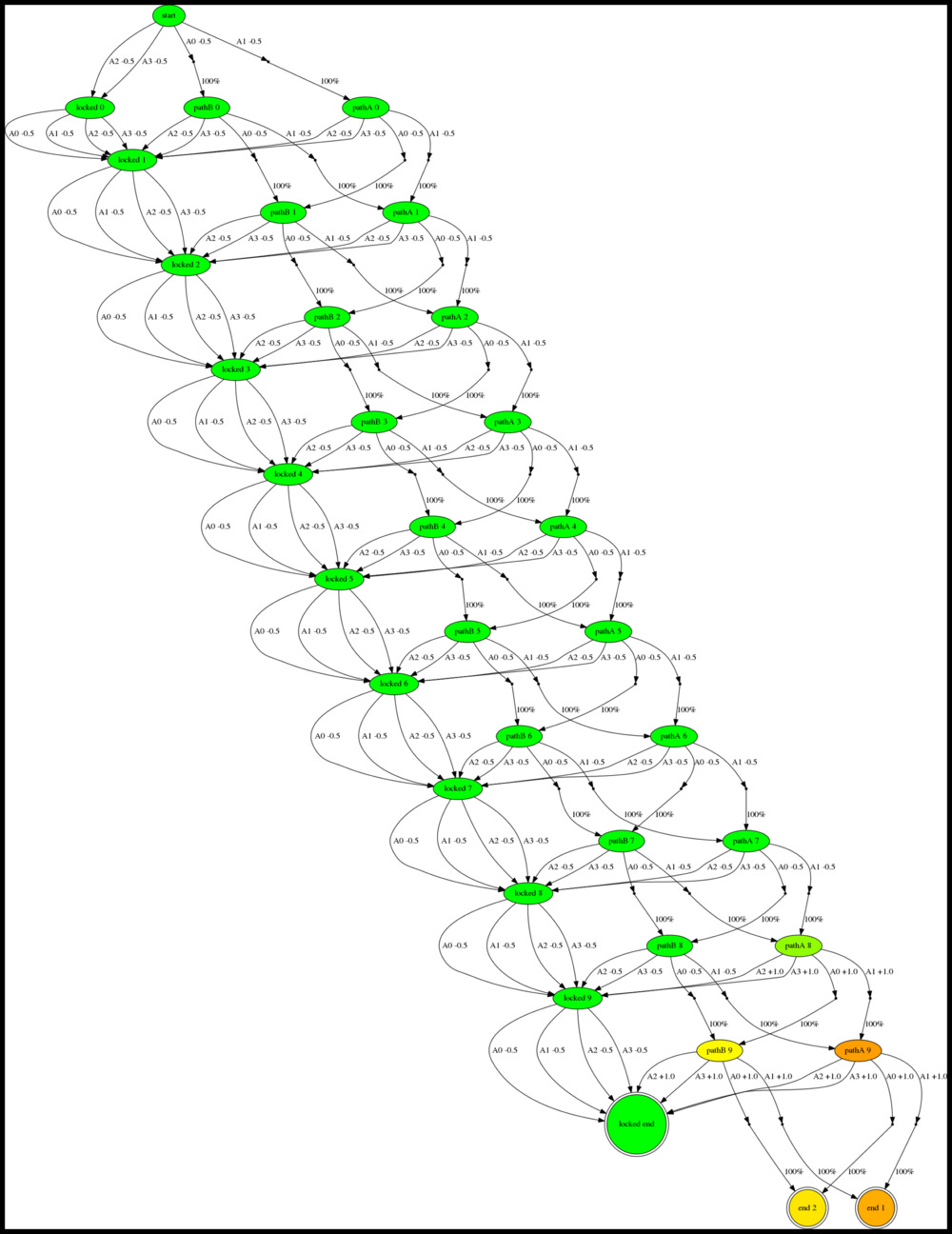}};
\path (C5_8) node[xscale=\xscale,yscale=\yscale]{\includegraphics[width=\widthfig]{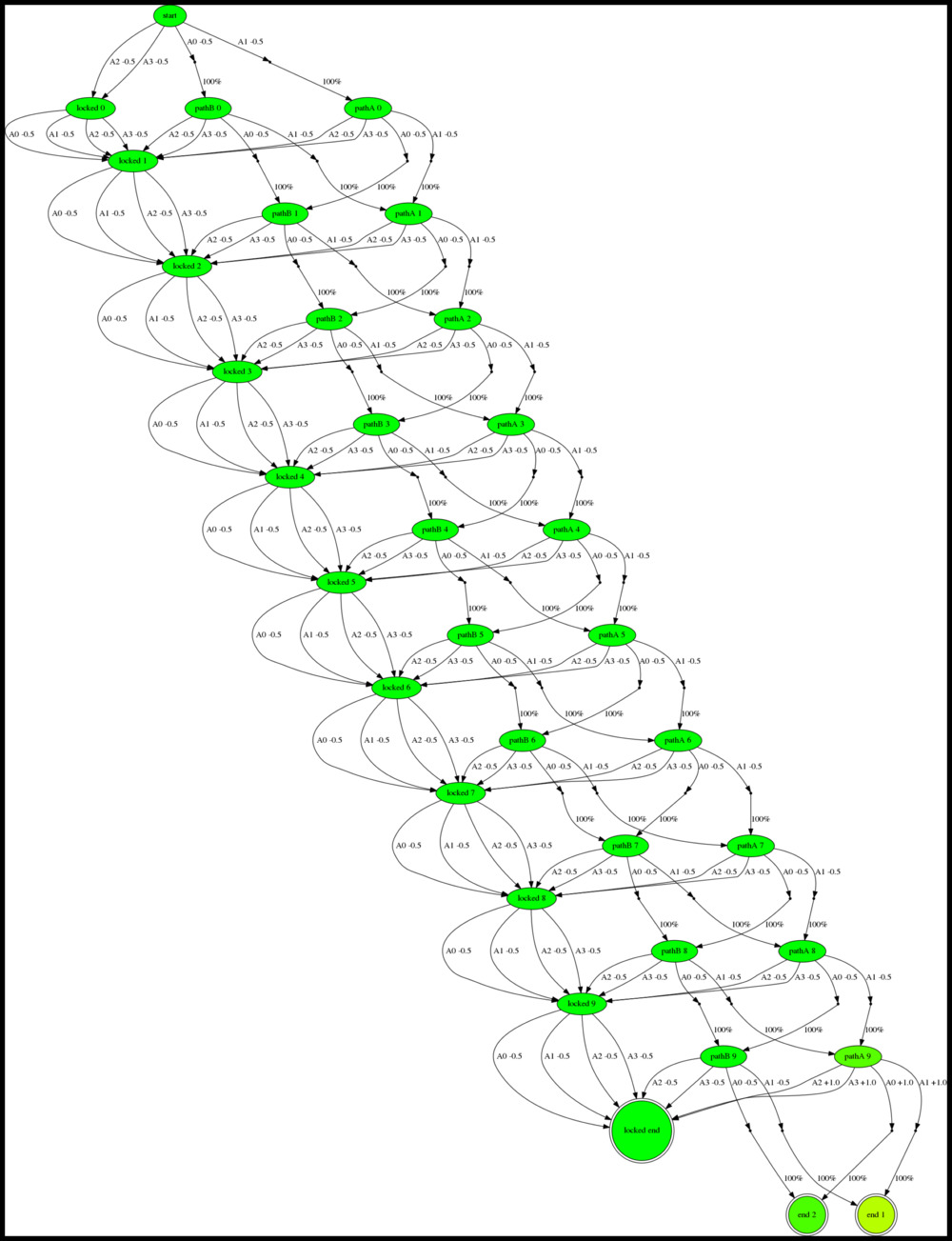}};

\path (C1_16) node[xscale=\xscale,yscale=\yscale]{\includegraphics[width=\widthfig]{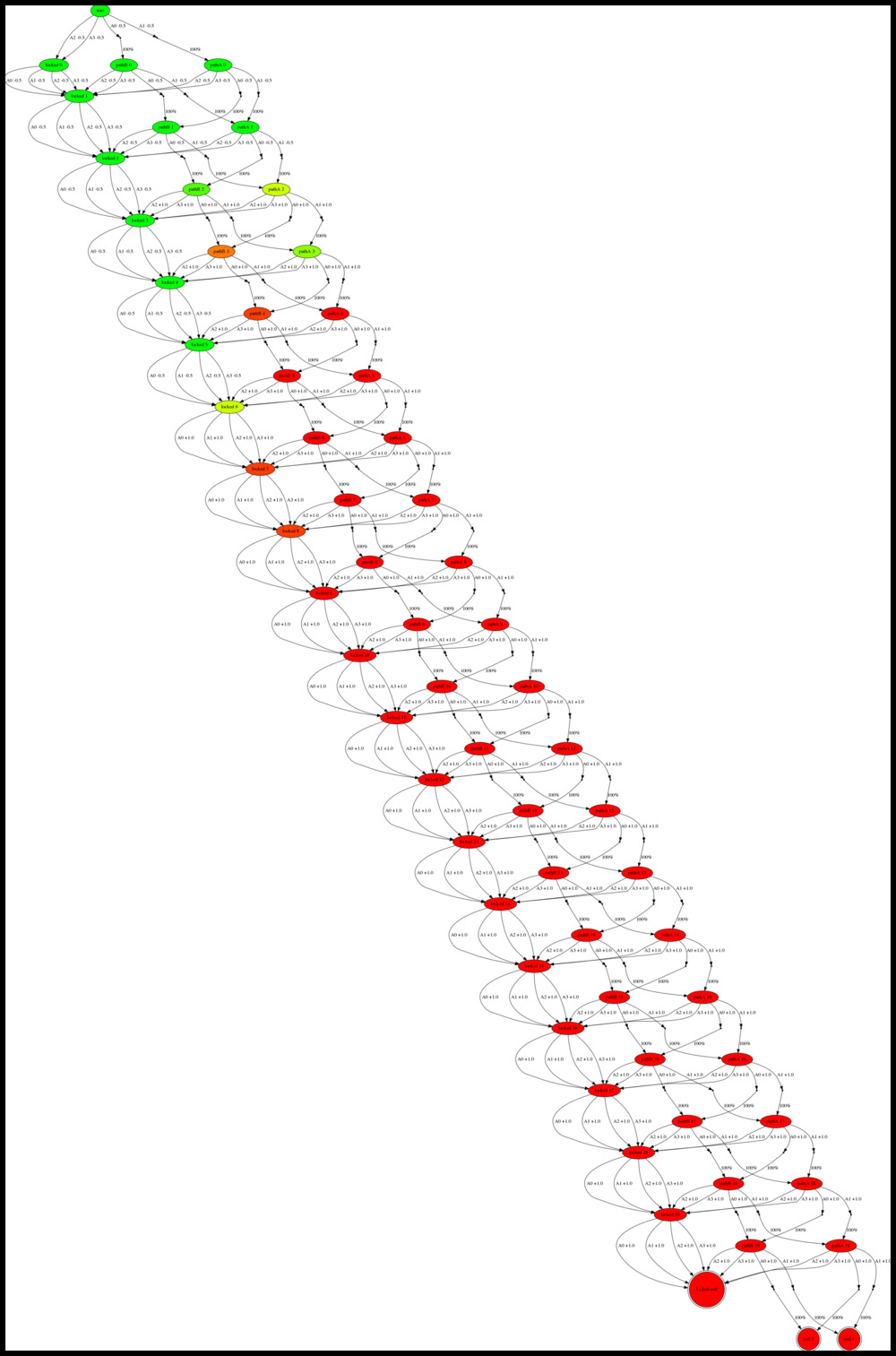}};
\path (C2_16) node[xscale=\xscale,yscale=\yscale]{\includegraphics[width=\widthfig]{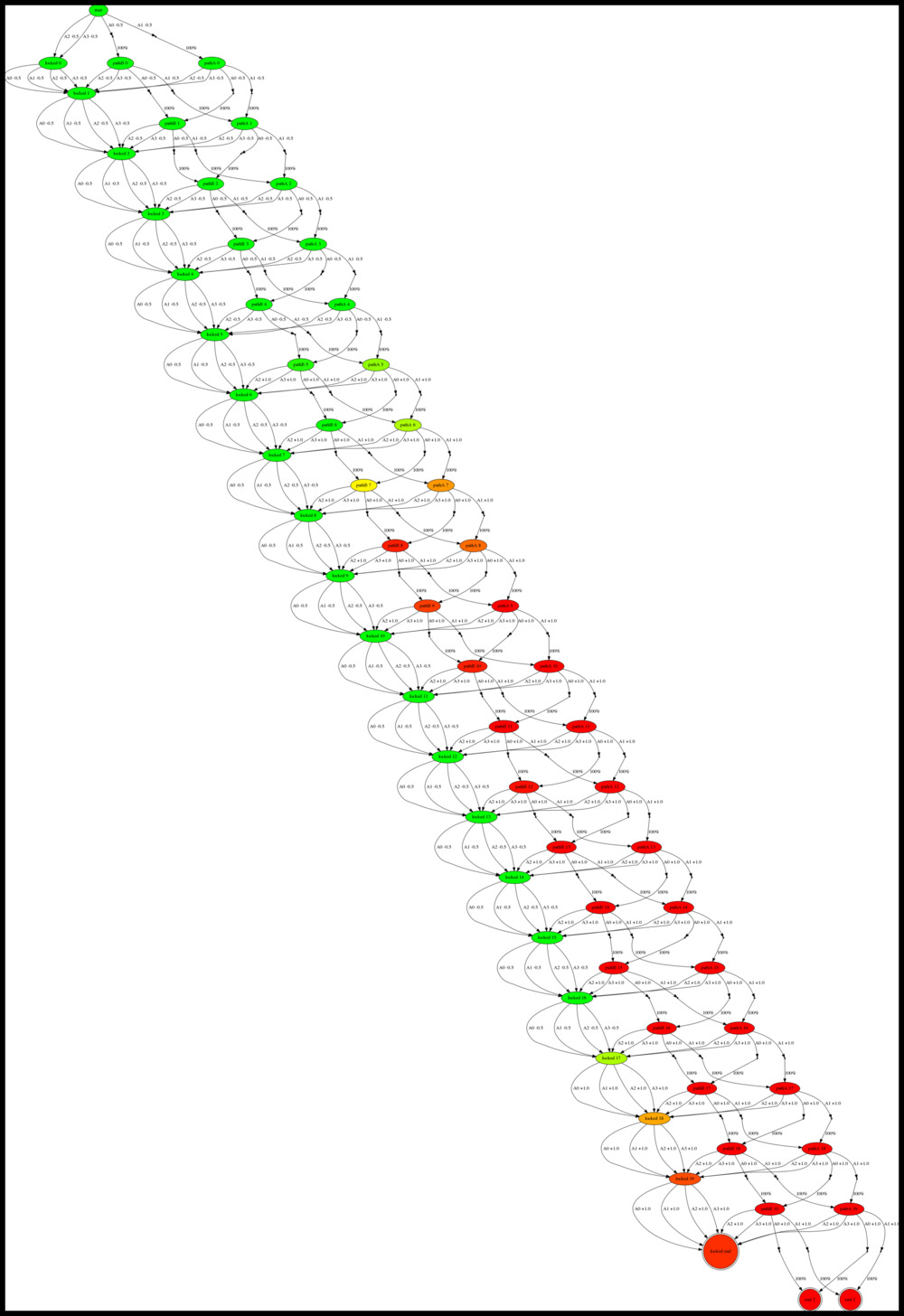}};
\path (C3_16) node[xscale=\xscale,yscale=\yscale] (central3) {\includegraphics[width=\widthfig]{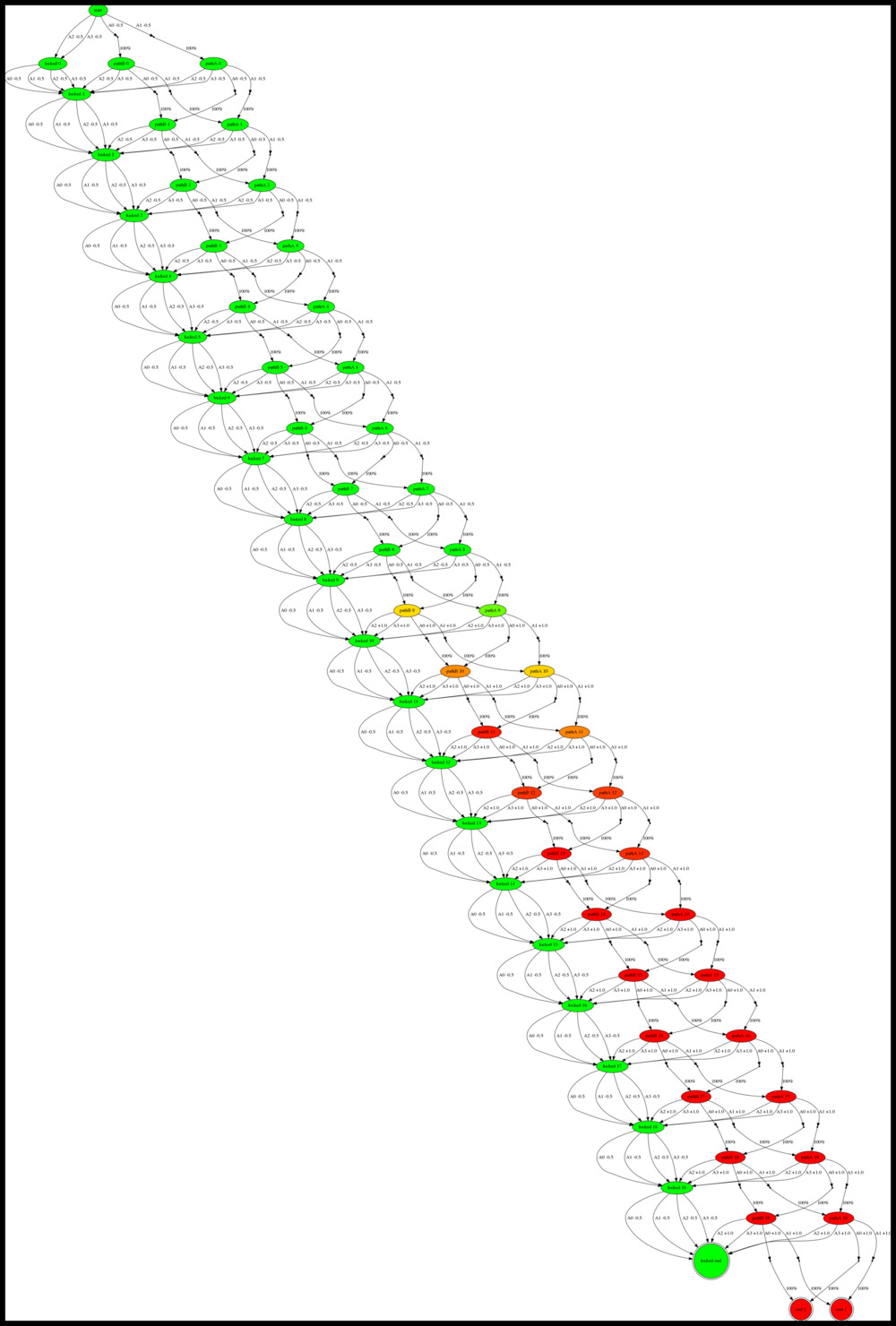}};
\path (C4_16) node[xscale=\xscale,yscale=\yscale]{\includegraphics[width=\widthfig]{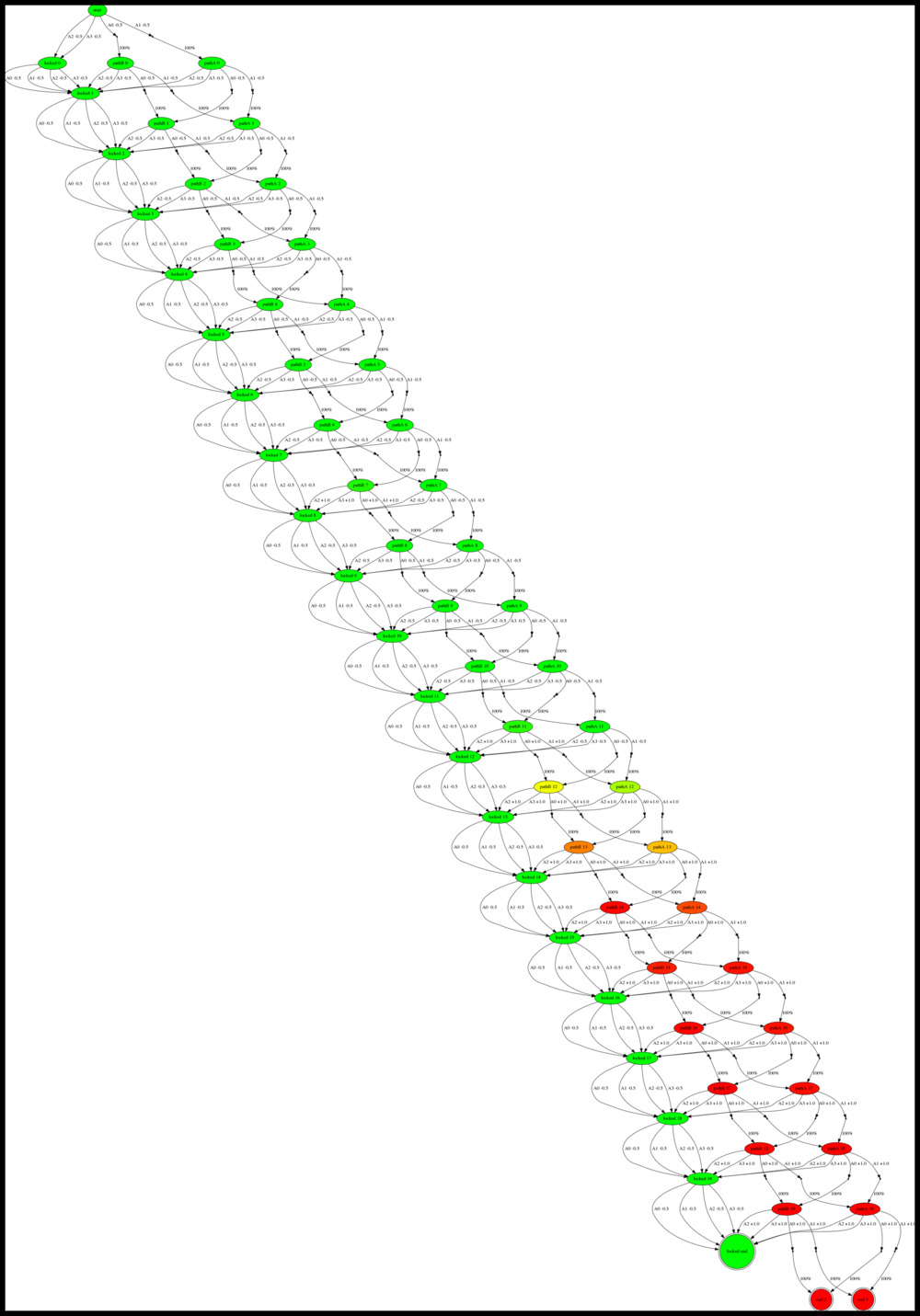}};
\path (C5_16) node[xscale=\xscale,yscale=\yscale]{\includegraphics[width=\widthfig]{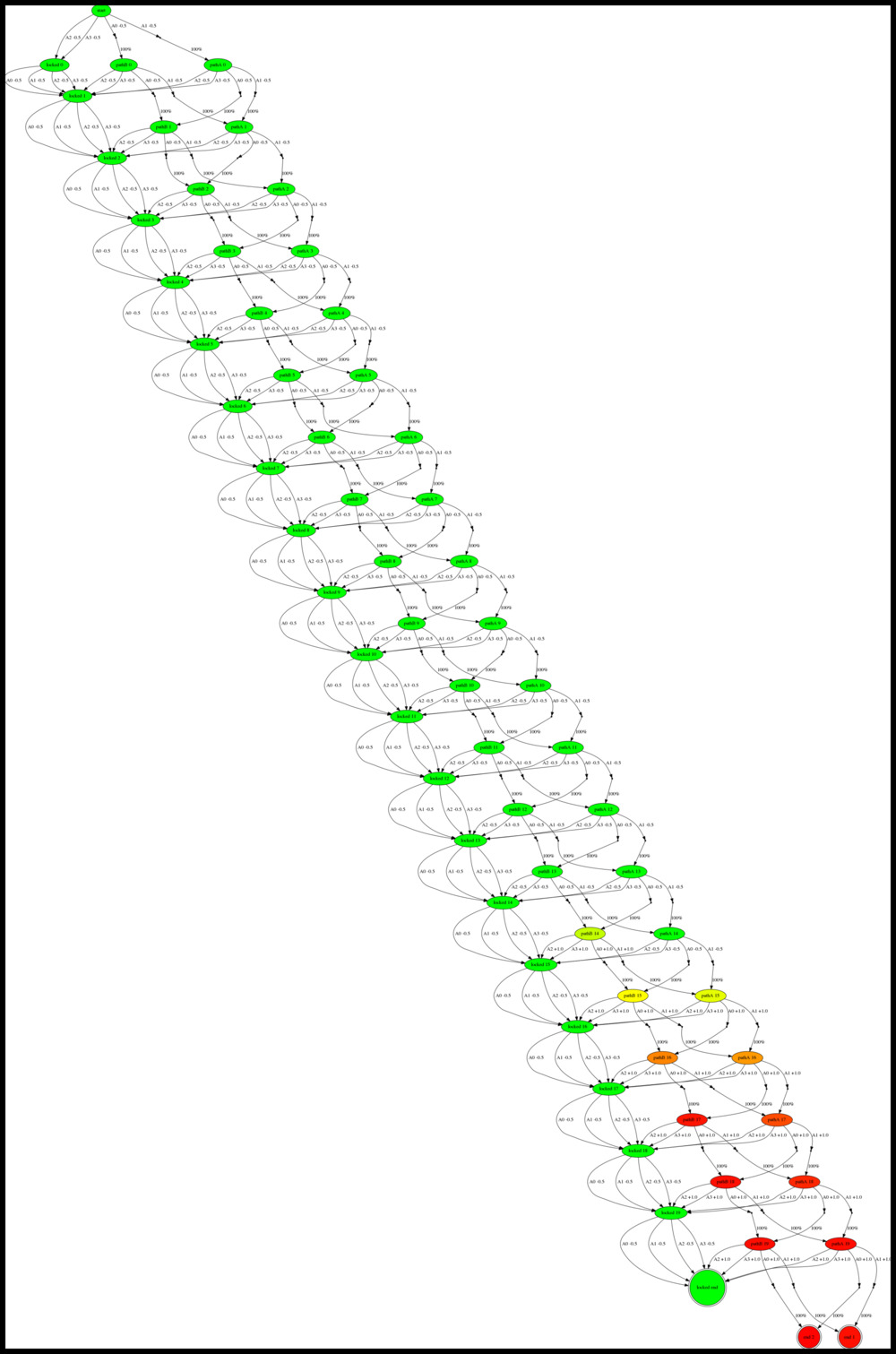}};

\path (central1.north) + (0,-\vcaption) node[format]{\texttt{CE($s_0$,TRPO)} on Diabolical Combination Lock, depth $d=5$, iteration $n=1,2,3,4,5$ from left to right.};
\path (central2.north) + (0,-\vcaption) node[format]{\texttt{CE($s_0$,TRPO)} on Diabolical Combination Lock, depth $d=10$, iteration $n=1,4,6,8,10$ from left to right.};
\path (central3.north) + (0,-\vcaption) node[format]{\texttt{CE($s_0$,TRPO)} on Diabolical Combination Lock, depth $d=20$, iteration $n=1,4,7,10,14$ from left to right.};
\end{tikzpicture}

\clearpage


\renewcommand{\dirfig}{./img/scatterplots/}

\begin{tikzpicture}[overlay] 
\label{page:scatterplots_reinforcecct}
\tikzstyle{format} = [anchor = south]
\path (.5\columnwidth,-.5\textheight) coordinate (center);
\def\vstep{1.12cm}
\def\hstep{4cm}
\def\widthfig{1.1\linewidth}
\def\caption{0.4cm}
\path (-5cm,1.3cm) coordinate (UL);
\foreach\t in {0,...,10} \foreach\s in {0,...,20} \path (UL) + (\t*\hstep,-\s*\vstep) coordinate (C\t_\s);

\def\scale{1}
\path (C3_3) node[scale=\scale]{\includegraphics[width=\widthfig]{\dirfig/REINFORCE-CCT/crossroadsD5_n5_gamma=0.9_beta=0.05_1_exploration02.png}};
\path (C3_11) node[scale=\scale] (central) {\includegraphics[width=\widthfig]{\dirfig/REINFORCE-CCT/crossroadsD10_n10_gamma=0.9_beta=0.025_1_exploration01.png}};
\path (C3_19) node[scale=\scale]{\includegraphics[width=\widthfig]{\dirfig/REINFORCE-CCT/crossroadsD20_n20_gamma=0.92_beta=0.0125_1_exploration01.png}};

\path (central.east) + (-\caption,0) node[scale=1.7,format,rotate=-90,text width=12.3cm]{Scatterplots of visit distributions of \texttt{CE($s_0$,\REINFORCE)} on Consecutive Crossroad Traps of depths $d=5$ (upper), $d=10$ (middle), $d=20$ (lower). States $s$ on the $x$-axis, values of $\mu_n(s)$ on the $y$-axis, colors red $\rightarrow$ blue for steps $n=0\rightarrow d-1$.};
\end{tikzpicture}

\clearpage

\begin{tikzpicture}[overlay] 
\label{page:scatterplots_reinforcedcl}
\tikzstyle{format} = [anchor = south]
\path (.5\columnwidth,-.5\textheight) coordinate (center);
\def\vstep{1.12cm}
\def\hstep{4cm}
\def\widthfig{1.1\linewidth}
\def\caption{0.4cm}
\path (-5cm,1.3cm) coordinate (UL);
\foreach\t in {0,...,10} \foreach\s in {0,...,20} \path (UL) + (\t*\hstep,-\s*\vstep) coordinate (C\t_\s);

\def\scale{1}
\path (C3_3) node[scale=\scale]{\includegraphics[width=\widthfig]{\dirfig/REINFORCE-DCL/lockD5_n5_gamma=0.9_beta=0.0167_1_exploration01.png}};
\path (C3_11) node[scale=\scale] (central) {\includegraphics[width=\widthfig]{\dirfig/REINFORCE-DCL/lockD10_n10_gamma=0.9_beta=0.0083_1_exploration01.png}};
\path (C3_19) node[scale=\scale]{\includegraphics[width=\widthfig]{\dirfig/REINFORCE-DCL/lockD20_n20_gamma=0.9_beta=0.0042_1_exploration01.png}};

\path (central.east) + (-\caption,0) node[scale=1.7,format,rotate=-90,text width=12.3cm]{Scatterplots of visit distributions of \texttt{CE($s_0$,\REINFORCE)} on Diabolical Combination Lock of depths $d=5$ (upper), $d=10$ (middle), $d=20$ (lower). States $s$ on the $x$-axis, values of $\mu_n(s)$ on the $y$-axis, colors red $\rightarrow$ blue for steps $n=0\rightarrow d-1$.};
\end{tikzpicture}

\clearpage

\begin{tikzpicture}[overlay] 
\label{page:scatterplots_trpocct}
\tikzstyle{format} = [anchor = south]
\path (.5\columnwidth,-.5\textheight) coordinate (center);
\def\vstep{1.12cm}
\def\hstep{4cm}
\def\widthfig{1.1\linewidth}
\def\caption{0.4cm}
\path (-5cm,1.3cm) coordinate (UL);
\foreach\t in {0,...,10} \foreach\s in {0,...,20} \path (UL) + (\t*\hstep,-\s*\vstep) coordinate (C\t_\s);

\def\scale{1}
\path (C3_3) node[scale=\scale]{\includegraphics[width=\widthfig]{\dirfig/TRPO-CCT/crossroadsD5_n5_gamma=0.9_beta=0.05_1_exploration02.png}};
\path (C3_11) node[scale=\scale] (central) {\includegraphics[width=\widthfig]{\dirfig/TRPO-CCT/crossroadsD10_n10_gamma=0.9_beta=0.025_1_exploration01.png}};
\path (C3_19) node[scale=\scale]{\includegraphics[width=\widthfig]{\dirfig/TRPO-CCT/crossroadsD20_n20_gamma=0.96_beta=0.0125_1_exploration01.png}};

\path (central.east) + (-\caption,0) node[scale=1.7,format,rotate=-90,text width=12.3cm]{Scatterplots of visit distributions of \texttt{CE($s_0$,TRPO)} on Consecutive Crossroad Traps of depths $d=5$ (upper), $d=10$ (middle), $d=20$ (lower). States $s$ on the $x$-axis, values of $\mu_n(s)$ on the $y$-axis, colors red $\rightarrow$ blue for steps $n=0\rightarrow d-1$.};
\end{tikzpicture}

\clearpage

\begin{tikzpicture}[overlay] 
\label{page:scatterplots_trpodcl}
\tikzstyle{format} = [anchor = south]
\path (.5\columnwidth,-.5\textheight) coordinate (center);
\def\vstep{1.12cm}
\def\hstep{4cm}
\def\widthfig{1.1\linewidth}
\def\caption{0.4cm}
\path (-5cm,1.3cm) coordinate (UL);
\foreach\t in {0,...,10} \foreach\s in {0,...,20} \path (UL) + (\t*\hstep,-\s*\vstep) coordinate (C\t_\s);

\def\scale{1}
\path (C3_3) node[scale=\scale]{\includegraphics[width=\widthfig]{\dirfig/TRPO-DCL/lockD5_n5_gamma=0.9_beta=0.0167_1_exploration01.png}};
\path (C3_11) node[scale=\scale] (central) {\includegraphics[width=\widthfig]{\dirfig/TRPO-DCL/lockD10_n10_gamma=0.9_beta=0.0083_1_exploration01.png}};
\path (C3_19) node[scale=\scale]{\includegraphics[width=\widthfig]{\dirfig/TRPO-DCL/lockD20_n20_gamma=0.9_beta=0.0042_1_exploration01.png}};

\path (central.east) + (-\caption,0) node[scale=1.7,format,rotate=-90,text width=12.3cm]{Scatterplots of visit distributions of \texttt{CE($s_0$,TRPO)} on Diabolical Combination Lock of depths $d=5$ (upper), $d=10$ (middle), $d=20$ (lower). States $s$ on the $x$-axis, values of $\mu_n(s)$ on the $y$-axis, colors red $\rightarrow$ blue for steps $n=0\rightarrow d-1$.};
\end{tikzpicture}

\clearpage

\renewcommand{\dirfig}{./img/COMP/}

\begin{tikzpicture}[overlay] 
\label{page:COMP_reinforcecct}
\tikzstyle{format} = [anchor = south]
\path (.5\columnwidth,-.5\textheight) coordinate (center);
\def\vstep{1.12cm}
\def\hstep{4cm}
\def\widthfig{1.1\linewidth}
\def\caption{0.4cm}
\path (-5cm,1.3cm) coordinate (UL);
\foreach\t in {0,...,10} \foreach\s in {0,...,20} \path (UL) + (\t*\hstep,-\s*\vstep) coordinate (C\t_\s);

\def\scale{1}
\path (C3_3) node[scale=\scale]{\includegraphics[width=\widthfig]{\dirfig/REINFORCE-CCT/crossroadsD5_n5_gamma=0.9_beta=0.05_COMP_2400.png}};
\path (C3_11) node[scale=\scale] (central) {\includegraphics[width=\widthfig]{\dirfig/REINFORCE-CCT/crossroadsD10_n10_gamma=0.9_beta=0.0083_COMP_2400.png}};
\path (C3_19) node[scale=\scale]{\includegraphics[width=\widthfig]{\dirfig/REINFORCE-CCT/crossroadsD20_n20_gamma=0.92_beta=0.0125_COMP_2400.png}};

\path (central.east) + (-\caption,0) node[scale=1.7,format,rotate=-90,text width=12.5cm]{Average of the discounted return accumulated during episodes of \REINFORCE on CCT5, CCT10 and CCT20, smoothed by an average over 10 different independent runs. Red is \REINFORCE alone, green is \REINFORCE with \texttt{CE($s_0$,\REINFORCE)}-reset model. The solid line is the average, the light area is one standard deviation.};
\end{tikzpicture}

\clearpage

\begin{tikzpicture}[overlay] 
\label{page:COMP_reinforcedcl}
\tikzstyle{format} = [anchor = south]
\path (.5\columnwidth,-.5\textheight) coordinate (center);
\def\vstep{1.12cm}
\def\hstep{4cm}
\def\widthfig{1.1\linewidth}
\def\caption{0.4cm}
\path (-5cm,1.3cm) coordinate (UL);
\foreach\t in {0,...,10} \foreach\s in {0,...,20} \path (UL) + (\t*\hstep,-\s*\vstep) coordinate (C\t_\s);

\def\scale{1}
\path (C3_3) node[scale=\scale]{\includegraphics[width=\widthfig]{\dirfig/REINFORCE-DCL/lockD5_n5_gamma=0.9_beta=0.0167_COMP_2400.png}};
\path (C3_11) node[scale=\scale]{\includegraphics[width=\widthfig]{\dirfig/REINFORCE-DCL/lockD10_n10_gamma=0.9_beta=0.0083_COMP_2400.png}};
\path (C3_19) node[scale=\scale]{\includegraphics[width=\widthfig]{\dirfig/REINFORCE-DCL/lockD20_n20_gamma=0.9_beta=0.0042_COMP_2400.png}};

\path (central.east) + (-\caption,0) node[scale=1.7,format,rotate=-90,text width=12.5cm]{Average of the (undiscounted) return accumulated during episodes of \REINFORCE on DCL5, DCL10 and DCL20, smoothed by an average over 10 different independent runs. Red is \REINFORCE alone, green is \REINFORCE with \texttt{CE($s_0$,\REINFORCE)}-reset model. The solid line is the average, the light area is one standard deviation.};
\end{tikzpicture}

\clearpage

\begin{tikzpicture}[overlay] 
\label{page:COMP_trpocct}
\tikzstyle{format} = [anchor = south]
\path (.5\columnwidth,-.5\textheight) coordinate (center);
\def\vstep{1.12cm}
\def\hstep{4cm}
\def\widthfig{1.1\linewidth}
\def\caption{0.4cm}
\path (-5cm,1.3cm) coordinate (UL);
\foreach\t in {0,...,10} \foreach\s in {0,...,20} \path (UL) + (\t*\hstep,-\s*\vstep) coordinate (C\t_\s);

\def\scale{1}
\path (C3_3) node[scale=\scale]{\includegraphics[width=\widthfig]{\dirfig/TRPO-CCT/crossroadsD5_n5_gamma=0.9_beta=0.05_COMP_2400.png}};
\path (C3_11) node[scale=\scale] (central) {\includegraphics[width=\widthfig]{\dirfig/TRPO-CCT/crossroadsD10_n10_gamma=0.9_beta=0.025_COMP_2400.png}};
\path (C3_19) node[scale=\scale]{\includegraphics[width=\widthfig]{\dirfig/TRPO-CCT/crossroadsD20_n20_gamma=0.96_beta=0.0125_COMP_2400.png}};

\path (central.east) + (-\caption,0) node[scale=1.7,format,rotate=-90,text width=12.5cm]{Average of the discounted return accumulated during episodes of \texttt{TRPO} on CCT5, CCT10 and CCT20, smoothed by an average over 10 different independent runs. Red is \texttt{TRPO} alone, green is \texttt{TRPO} with \texttt{CE($s_0$,TRPO)}-reset model. The solid line is the average, the light area is one standard deviation.};
\end{tikzpicture}

\clearpage

\begin{tikzpicture}[overlay] 
\label{page:COMP_trpodcl}
\tikzstyle{format} = [anchor = south]
\path (.5\columnwidth,-.5\textheight) coordinate (center);
\def\vstep{1.12cm}
\def\hstep{4cm}
\def\widthfig{1.1\linewidth}
\def\caption{0.4cm}
\path (-5cm,1.3cm) coordinate (UL);
\foreach\t in {0,...,10} \foreach\s in {0,...,20} \path (UL) + (\t*\hstep,-\s*\vstep) coordinate (C\t_\s);

\def\scale{1}
\path (C3_3) node[scale=\scale]{\includegraphics[width=\widthfig]{\dirfig/TRPO-DCL/lockD5_n5_gamma=0.9_beta=0.0167_COMP_2400.png}};
\path (C3_11) node[scale=\scale] (central) {\includegraphics[width=\widthfig]{\dirfig/TRPO-DCL/lockD10_n10_gamma=0.9_beta=0.0083_COMP_2400.png}};
\path (C3_19) node[scale=\scale]{\includegraphics[width=\widthfig]{\dirfig/TRPO-DCL/lockD20_n20_gamma=0.92_beta=0.0042_COMP_2400.png}};

\path (central.east) + (-\caption,0) node[scale=1.7,format,rotate=-90,text width=12.5cm]{Average of the (undiscounted) return accumulated during episodes of \texttt{TRPO} on DCL5, DCL10 and DCL20, smoothed by an average over 10 different independent runs. Red is \texttt{TRPO} alone, green is \texttt{TRPO} with \texttt{CE($s_0$,TRPO)}-reset model. The solid line is the average, the light area is one standard deviation.};
\end{tikzpicture}

\clearpage


\bibliography{curious_explorer}
\bibliographystyle{alpha}

\end{document}